\newcommand{\PreserveBackslash}[1]{\let\temp=\\#1\let\\=\temp}
\newcolumntype{C}[1]{>{\PreserveBackslash\centering}p{#1}}
\newcolumntype{?}{!{\vrule width 1pt}}
\definecolor{darkblue}{rgb}{0,0,0.95}
\def\Regret{\mathrm{Reg}}
\newtheorem{lemma}{Lemma}
\newtheorem{claim}{Claim}
\newtheorem{remark}{Remark}
\newenvironment{proofsketch}{%
  \proof}{\endproof}
\newcommand{\Olog}{\tilde{\mathcal{O}}}
\newtheorem{theorem-rst}[theorem]{Theorem}
\newtheorem{lemma-rst}[lemma]{Lemma}
\newtheorem{proposition-rst}[lemma]{Proposition}
\newtheorem{assumption-rst}[lemma]{Assumption}
\newtheorem{claim-rst}[claim]{Claim}
\newtheorem{corollary-rst}[lemma]{Corollary}
\DeclarePairedDelimiter\br{(}{)}
\DeclarePairedDelimiter\brs{[}{]}
\DeclarePairedDelimiter\brc{\{}{\}}
\DeclarePairedDelimiter\abs{\lvert}{\rvert}
\DeclarePairedDelimiter\norm{\lVert}{\rVert}
\DeclarePairedDelimiter\ceil{\lceil}{\rceil}
\DeclareMathOperator*{\argmin}{arg\,min}
\newcommand{\E}{\mathbb{E}}
\newcommand{\R}{\mathbb{R}}
\newcommand{\G}{\mathbb{G}}
\newcommand{\N}{\mathbb{N}}
\newcommand{\F}{\mathcal{F}}
\newcommand{\X}{\mathcal{X}}
\newcommand{\Gcal}{\mathcal{G}}
\newcommand{\Ncal}{\mathcal{N}}
\newcommand{\Ocal}{\mathcal{O}}
\newcommand{\Pb}{\mathbb{P}}
\newcommand{\Pcal}{\mathcal{P}}
\newcommand{\Ind}[1]{\mathds{1}\brc*{#1}}
\newcommand{\xhat}{\hat{x}}
\newcommand{\yhat}{\hat{y}}
\newcommand{\kl}{\mathrm{KL}}
\newcommand{\klBin}{\mathrm{kl}}
\renewcommand{\tilde}{\widetilde}
\def\showComments{} 
    \newcommand{\comN}[1]{\textcolor{blue}{\{Nadav: #1\}}}
    \newcommand{\comK}[1]{\textcolor{olive}{\{Kyoungseok: #1\}}}
    \newcommand{\comA}[1]{\textcolor{red}{\{AAA: #1\}}}
    \newcommand{\comN}[1]{}
    \newcommand{\comK}[1]{}
    \newcommand{\comA}[1]{}
\title{Online Linear Regression with Paid Stochastic Features}
\author {
    Nadav Merlis\textsuperscript{\rm 1},
    Kyoungseok Jang\textsuperscript{\rm 2},
    Nicolò Cesa-Bianchi\textsuperscript{\rm 3}
}
\begin{document}

\maketitle

\begin{abstract}
We study an online linear regression setting in which the observed feature vectors are corrupted by noise and the learner can pay to reduce the noise level. In practice, this may happen for several reasons: for example, because features can be measured more accurately using more expensive equipment, or because data providers can be incentivized to release less private features. Assuming feature vectors are drawn i.i.d.\ from a fixed but unknown distribution, we measure the learner's regret against the linear predictor minimizing a notion of loss that combines the prediction error and payment. When the mapping between payments and noise covariance is known, we prove that the rate $\sqrt{T}$ is optimal for regret if logarithmic factors are ignored. When the noise covariance is unknown, we show that the optimal regret rate becomes of order $T^{2/3}$ (ignoring log factors). Our analysis leverages matrix martingale concentration, showing that the empirical loss uniformly converges to the expected one for all payments and linear predictors.
\end{abstract}

\section{Introduction}

In online linear regression, a learner sequentially observes i.i.d.\ feature vectors $x_t\in\R^d$ and aims to correctly predict outputs $y_t=x_t^\top\theta^*$, knowing neither $\theta^*$ nor the distribution from which the feature vectors are drawn. When measured by the quadratic prediction error, this is potentially the most canonical online regression setting, and several extensions to more complex losses and regression models have been studied in the past. Although in the vast majority of works the features $x_t$ are assumed to be perfectly observable, in practice, we usually only have access to a noisy version of $x_t$. For example, features obtained from experiments often have measurement errors, and contextual information on users only relies on a random sample of their behavior. Sometimes the features are even intentionally corrupted (e.g., via locally differentially private mechanisms) to protect user privacy.

Motivated by scenarios where the learner can reduce the noise intensity by allocating more resources or more work, we study a variant of online linear regression where the features are stochastic and corrupted by additive noise, and the learner may decide to pay to reduce the noise level so that a higher payment corresponds to a smaller noise covariance (w.r.t.\ the positive definite order). When conducting experiments, this payment can be related to renting better lab equipment or hiring more experienced staff, but also to extending the measurement duration and/or averaging more raw samples. With user data, this noise reduction can be achieved by collecting more statistics before making a decision. In privacy-protected settings, the learner may offer payments to compensate users for limited privacy loss (either directly or by offering free services)---see, e.g., \citep{wang2018value}. While reducing the feature noise naturally facilitates prediction, it also causes some penalty to the learner (either time, money, or given services). Thus, the learner should optimally balance the prediction error and the cost due to payments.

We measure a predictor's performance by a loss combining the prediction error with the payment. The goal of the learner is to minimize regret, defined as the difference between the learner's expected cumulative loss and the cumulative loss of the best linear predictor, i.e., the one that optimally balances prediction errors and payments. 

We first tackle the setting where the learner knows how the noise covariance changes as a function of the payment. This can be the case, for instance, when the noise is generated by a known differentially private mechanism. We show how to use this knowledge to effectively utilize samples collected at one payment level to estimate the optimal prediction at any other payment level. This information sharing leads to a major acceleration in the learning process and obviates the need for payment exploration. In particular, we devise an empirical loss estimator that uniformly converges to the expected one across all potential payment levels and predictions (including payments that were never offered by the algorithm). We utilize this result to prove that an algorithm that greedily minimizes this empirical loss to determine the payment level and linear predictor achieves a regret bound of $\Olog\br{\sqrt{T}}$ after $T$ prediction rounds. We complement this result with a lower bound, proving that the dependence on $T$ in this bound can only be improved by logarithmic factors. Note that the optimal rate in this setting is exponentially worse than the $\Theta(\ln T)$ bound for the noise-free case. 

We then move on to the case where the noise covariances are not known in advance. This could happen, for example, when noise reduction is the result of an extended data collection period, especially if the noise is temporally correlated. 
In this scenario, we can no longer use our uniform estimation scheme and must incorporate a payment exploration mechanism into our algorithm. Inspired by UCB-like algorithms \citep{auer2002using}, we limit the potential payments to a discretized grid and introduce an optimistic loss estimator, which encourages the algorithm to choose less-played payment levels. We show that by carefully tuning the grid size, this approach yields a regret bound of $\Olog\br*{T^{2/3}}$. We prove that, similarly to the case of known covariances, the dependence on $T$ in this bound can only be improved by logarithmic factors. Our lower bound construction carefully designs a set of noise covariance profiles to induce losses that imitate hard instances in Lipschitz bandits \citep{kleinberg2004nearly}.

\paragraph{Outline.} The rest of the paper is organized as follows. We first position our work in light of the existing literature in \Cref{section: related work}. We then formalize our setting and describe our assumptions in \Cref{section: setting}. In \Cref{section: known noise covariance}, we present our loss estimator when the noise covariance profile is known in advance; we show how to use this estimator to derive an order-optimal algorithm. Then, \Cref{section: unknown noise covariance} extends these results to situations when the noise covariance profile is not given to the learner. We conclude the paper with a summary and a discussion on the many potential future directions in \Cref{section: summary}.

\section{Related Work}
\label{section: related work}
\paragraph{Errors-in-variables models.} Linear regression with noisy covariates (also known as errors-in-variables, or EIV) is a classical topic in statistics---see, e.g., \citep{fuller2009measurement} and \citep{agarwal2021causal} for recent applications to differential privacy. Most results in EIV concern the problem of estimating $\theta^*$ under different norms, typically in a high-dimensional setting (large $d$) under assumptions of sparsity ($\theta^*$ has few non-zero components) or low-rank (the covariates admit a low-dimensional representation), see \citep{rosenbaum2010sparse,loh2011high,chen2013noisy,rosenbaum2013improved,kaul2015weighted,belloni2017linear,datta2017cocolasso} and references therein. A related line of work studies EIV under Bayesian assumptions \citep{reilly1981bayesian,ungarala2000multiscale,figueroa2022robust}. \citet{agarwal2023adaptive} consider EIV in an estimation error task where covariates are chosen adaptively (as in active learning or in multi-armed bandits). Closer to our work is the paper by \citet{agarwal2019robustness}, where they study a linear regression model with fixed design and missing or corrupted variables. Unlike us, they bound the prediction error in a transductive setting, where the covariates on which predictions are evaluated are available at training time. 

\paragraph{Local DP.} A prominent situation where features are noisy is when they are protected by a local differentially private mechanism \citep{kasiviswanathan2011can}. Two well-known approaches to create this protection are to add the raw features either Laplace or Gaussian noise \citep{yang2024local}. Many works study how to learn when features are private, including in the context of linear regression \citep{duchi2013local,smith2017interaction,wang2019sparse,fukuchi2017differentially}. However, their goal is mostly to estimate the true parameters of the model given the noisy features; in contrast, we want to learn how to use online noisy data to output good predictions. As we will show, the best predictors are not the true parameters, but rather depend on the noise level.

\paragraph{Online learning with noisy features.} Closely related to ours is the work by \citet{cesa2011online}, who studied online linear regression with square loss with noisy features and known/unknown covariances. However, there are several crucial differences: they do not have a notion of cost, and their notion of regret compares to the best linear predictor on the noise-free features. Moreover, their focus is on a setting in which the learner may obtain more than one noisy realization of the same feature vector. A second closely related work is the one by \citet{van2023trading}. They prove a $d^2(\ln T)\sqrt{T}$ regret bound in a binary classification setting in which: the learner can pay to reduce the noise level, the covariates (called experts in their setting) are binary, and noise and payments apply independently on each coordinate. However, because of their focus on classification with independent feature noise, their techniques are not directly applicable to our regression setting with correlated noise. Other works assume noisy features, but in the partial feedback setting, include \citep{kim2023contextual} and \citep{zheng2020locally}.

\paragraph{Privacy and incentives.} Another aspect of our work is the learner's capability to pay for noise reduction. In the privacy literature, this is often addressed as a problem of mechanism design: an analyst designs a procedure that incentivize individuals to sell their private data, aiming to obtain the most accurate estimation at a fixed payment budget (or paying a minimal cost for a target accuracy level) \citep{ghosh2011selling,nissim2012privacy,nissim2014redrawing,wang2018value,fallah2024optimal}. \citet{hsu2014differential} also studied the problem in which an individual is offered a payment to participate in a study and is willing to join it only if the payment covers its privacy loss. They show how an analyst running the experiment should tune the payment and privacy levels according to various considerations, including accuracy. All these works focus on payments that minimize the estimation error of the true parameters. Instead, our goal is to choose payments that balance the prediction error and the feature costs for every new incoming user. Also, our payment model is different: in our setting, the noise level is fixed (and potentially unknown) given the payment.

\section{Setting}
\label{section: setting}
We study the problem of online linear regressions when the learner observes noisy features and can pay to reduce the noise level. At each round $t\in[T]$, the environment privately generates the true features $x_t\in\R^d$ independently from a fixed unknown distribution $D_x$ and the output $y_t = x_t^\top \theta^*$ for some unknown $\theta^*\in\R^d$.\footnote{All results can be easily extended to noisy linear outputs, as we discuss in \Cref{remark: additive noise} and \Cref{appendix: additive noise}.} The learner decides to pay a cost (`payment') $c_t\in[0,1]$ and in return observes noisy features $\xhat_t(c_t)= x_t +n_t(c_t)$, where $n_t(c)\sim D_n(c)$ is a zero-mean random variable independent of $x_t$. 
For brevity, we often omit the dependence on the cost $c_t$ and denote the noisy features by $\xhat_t$. Given this observation, the learner tries to predict $\yhat_t(\xhat_t)$ so that it is as close as possible (in the squared distance) to $y_t$, based on the noisy features $\xhat_t$ and the data observed in the previous rounds. After the prediction, the learner observes the real value $y_t$. We focus on the class of linear predictors $\yhat_t(\xhat_t)=\xhat_t^\top\nu$. We denote $\E\brs*{x_t}=\bar{x}$, and also denote the covariance matrices of samples from $D_x$ and $D_n(c)$ by $\Sigma_x=\E\brs*{x_tx_t^\top}$ and $\Sigma_n(c)=\E\brs*{n_t(c)n_t(c)^\top}$, respectively. Lastly, we denote the covariance of $\xhat_t$ given a cost $c$ by $\Sigma_{\xhat}(c) =\E\brs*{\xhat_t(c)\xhat_t(c)^\top}= \Sigma_x + \Sigma_n(c)$.\footnote{Since $x_t,\xhat_t$ are not zero mean, $\Sigma_x,\Sigma_{\xhat}(c)$ are sometimes called their autocorrelation.}

\paragraph{Additional Notations.} We define the quadratic form of a vector $x\in\R^d$ w.r.t.\ a positive (semi-)definite matrix $A\in\R^{d\times d}$ as $\norm*{x}_A^2 = x^\top A x$. We also denote by $\norm*{x}$, the $\ell_2$ norm of a vector $x\in\R^d$, and by $\norm*{A}_{op}=\sup_{x\ne 0}\frac{\norm*{Ax}}{\norm*{x}}$, the operator norm of a matrix $A\in\R^{d\times d}$. We use $A\succ0$ to denote positive definite matrices, and similarly write $A\succeq0$ if $A$ is positive semi-definite and $B\succeq A$ when $B-A\succeq0$. We denote the observed history of the decision process up to the beginning of round $t+1$ (including the choice of payment but not the features) by $I_t^o = \brc*{c_1,\xhat_1,\yhat_1,y_1,\dots, c_t,\xhat_t,\yhat_t,y_t,c_{t+1}}$ and the entire past data by $I_t=I_t^o\cup\brc*{x_s,n_s}_{s\in[t]}$.

\paragraph{Regularity assumptions.} We assume that increasing the payment decreases the noise level, i.e., if $c_1\le c_2$, it holds that $\Sigma_n(c_2)\preceq \Sigma_n(c_1)$. Intuitively, it is equivalent to requiring that the variance of the noise measured in any arbitrary direction cannot rise when we increase the payment. We also assume that $\Sigma_{\xhat}(1)\succ 0$; by the covariance monotonicity assumption, this also implies that $\Sigma_{\xhat}(c)\succ 0$ for all $c\in[0,1]$. We further assume that both the features $x_t$ and the noise $n_t(c)$ are conditionally $R^2$-subgaussian random vectors, that is, for any $u\in\R^d$, we have $\E\brs*{e^{u^\top(x_t-\bar{x})}\vert I_{t-1}} \leq e^{\frac{\norm{u}^2R^2}{2}}$ and $\E\brs*{e^{u^\top n_t}\vert I_{t-1}} \leq e^{\frac{\norm{u}^2R^2}{2}}.$
Finally, we assume that both $\norm*{\theta^*}$ and $\norm*{\bar{x}}$ are upper bounded by a known $S>0$.

\paragraph{Optimality Criterion.} We compare ourselves to a linear regressor which pays a cost that optimally balances the prediction error and the payments for the features. In particular, for a given $\lambda>0$, we define the loss for a fixed cost level $c$ and linear predictor $\nu$ as
\begin{align*}
    \ell(c,\nu) = \E_{x\sim D_x, n\sim D_n(c)}\brs*{\br*{\br*{x+n}^\top\nu - x^\top\theta^*}^2} + \lambda c.
\end{align*}
With a slight abuse of notation, for any given $S>0$, we define the optimal linear predictor and loss given cost $c$ as 
\begin{align*}
    \nu^*(c) = \argmin_{\nu: \norm*{\nu}\le S} \brc*{ \ell(c,\nu)}, \qquad
    \ell^*(c) = \min_{\nu: \norm*{\nu}\le S}\brc*{ \ell(c,\nu)}, 
\end{align*}
and denote the optimal cost and loss by
\begin{align*}
    & c^* \in \argmin_{c\in[0,1]} \brc*{ \ell^*(c)},\\ 
    &\ell^* = \min_{\nu: \norm*{\nu}\le S,c\in[0,1]}\brc*{ \ell(c,\nu)} = \min_{c\in[0,1]}\brc*{ \ell^*(c)}.
\end{align*}
We remark that $\nu^*(c),\ell^*(c)$ and $\ell^*$ all depend on $S$; we omit this dependence to simplify notations.
\begin{remark}
    \label{remark: linear}
    Our choice to limit ourselves to linear predictions is due to both computational and statistical constraints. Specifically, a nonlinear optimal predictor requires computing $\E\brs*{x_t\vert \xhat_t}^\top\theta^*$. This estimator heavily depends on the entire feature and noise distributions, rendering the estimation problem extremely complicated and usually requires intractable computations. We adopt the solution of \citet{van2023trading} to this issue, and instead compare ourselves to the natural tractable benchmark. We also note that for some distributions, linear predictors are the best possible benchmark, as we later prove through our lower bounds.
\end{remark}
The loss and the optimal linear predictor can also be specified using the covariance matrices as follows:
\begin{restatable}{claim-rst}{optimalLoss}
    \label{claim: opt values}
    It holds that 
    \begin{align*}
    \ell(c,\nu) = \nu^\top \Sigma_{\xhat}(c)\nu - 2\nu^\top \Sigma_x\theta^* + {\theta^*}^\top \Sigma_x{\theta^*} +\lambda c .
    \end{align*}
    Moreover, let $\bar{\nu}(c)=\br*{\Sigma_{\xhat}(c)}^{-1}\Sigma_x\theta^*$. If $\norm*{\bar{\nu}(c)}\leq S$ then $\nu^*(c) = \bar{\nu}(c)$ and 
    \begin{align*}
        &\ell^*(c) = {\theta^*}^\top \Sigma_x\theta^* -{\theta^*}^T\Sigma_x\Sigma_{\xhat}(c)^{-1}\Sigma_x\theta^* + \lambda c.
    \end{align*}
\end{restatable}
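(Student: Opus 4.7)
The plan is to prove the two parts in sequence, both essentially by direct algebraic manipulation. For part (1), I would expand the square inside the expectation, writing
\[
\ell(c,\nu) = \E\!\left[\bigl(\nu^\top(x+n)\bigr)^2\right] - 2\,\E\!\left[\nu^\top(x+n)\,x^\top\theta^*\right] + \E\!\left[(x^\top\theta^*)^2\right] + \lambda c,
\]
and handle each term as a quadratic form of the appropriate covariance matrix. The first term becomes $\nu^\top \E[(x+n)(x+n)^\top]\nu = \nu^\top \Sigma_{\xhat}(c)\nu$ by definition. For the cross term I would use that $n$ is zero-mean and independent of $x$ (which is assumed in the setting), so $\E[(x+n)x^\top] = \E[xx^\top] = \Sigma_x$, giving $-2\nu^\top\Sigma_x\theta^*$. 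The third term is the scalar $\theta^{*\top}\Sigma_x\theta^*$, which finishes part (1).

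For part (2), the key observation is that the map $\nu\mapsto\nu^\top\Sigma_{\xhat}(c)\nu - 2\nu^\top\Sigma_x\theta^*$ is a strictly convex quadratic, because the regularity assumption $\Sigma_{\xhat}(c)\succ 0$ holds for every $c\in[0,1]$. Setting the gradient to zero gives the unique unconstrained minimizer $\bar{\nu}(c) = \Sigma_{\xhat}(c)^{-1}\Sigma_x\theta^*$. Under the hypothesis $\norm{\bar{\nu}(c)}\le S$, this unconstrained minimizer lies inside the feasible ball, so it also solves the constrained problem, i.e., $\nu^*(c)=\bar{\nu}(c)$.

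Finally, to get the closed-form expression for $\ell^*(c)$, I would substitute $\nu=\bar{\nu}(c)$ into the expansion from part (1). The quadratic term simplifies as
\[
\bar{\nu}(c)^\top\Sigma_{\xhat}(c)\bar{\nu}(c) = \theta^{*\top}\Sigma_x\Sigma_{\xhat}(c)^{-1}\Sigma_x\theta^*,
\]
using that $\Sigma_{\xhat}(c)^{-1}$ is symmetric, and the linear term yields twice the same quantity with opposite sign, so they combine into $-\theta^{*\top}\Sigma_x\Sigma_{\xhat}(c)^{-1}\Sigma_x\theta^*$. Adding the remaining $\theta^{*\top}\Sigma_x\theta^* + \lambda c$ completes the proof.

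There is no serious obstacle in this claim; both statements follow from bilinearity of expectation and first-order optimality of a convex quadratic. The only subtlety worth flagging explicitly is that the passage from the unconstrained to the constrained minimizer requires the assumption $\norm{\bar{\nu}(c)}\le S$ (hence the conditional phrasing), and that invertibility of $\Sigma_{\xhat}(c)$ comes from the regularity assumption $\Sigma_{\xhat}(1)\succ 0$ together with the monotonicity $\Sigma_n(c)\preceq\Sigma_n(1)$ for $c\ge 1$\,---\,actually from $\Sigma_n(c)\succeq 0$ and $\Sigma_x + \Sigma_n(1)\succ 0$, one has $\Sigma_{\xhat}(c)\succeq\Sigma_{\xhat}(1)\succ 0$.
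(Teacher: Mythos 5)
Your proposal is correct and follows essentially the same route as the paper: expand the quadratic loss, use independence and zero-mean of the noise to identify the covariance terms, then obtain $\nu^*(c)$ via first-order optimality of the strictly convex quadratic and substitute back. The only blemish is your closing aside on invertibility, where the monotonicity direction is garbled ($\Sigma_n(c)\succeq 0$ alone does not give $\Sigma_{\xhat}(c)\succeq\Sigma_{\xhat}(1)$; the correct chain uses $\Sigma_n(c)\succeq\Sigma_n(1)$ for $c\le 1$), but since $\Sigma_{\xhat}(c)\succ 0$ is already established in the setting, this does not affect the proof.
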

Since we do not assume that the noise covariance is Lipschitz in the payment, the losses might not be continuous. Nonetheless, the monotonicity assumption induces a near-Lipschitz behavior:
\begin{restatable}{claim-rst}{lossProperties}
\label{claim: lipschitzness}
The loss $\ell^*(c)$ is $\lambda$-one-sided Lipschitz: for any $0\leq c_1\leq c_2\leq 1$, it holds that $\ell^*(c_2) \leq \ell^*(c_1) + \lambda(c_2-c_1)$.
\end{restatable}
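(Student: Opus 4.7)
The plan is to leverage the explicit form of the loss given by \Cref{claim: opt values} together with the covariance monotonicity assumption. The key observation is that raising the payment from $c_1$ to $c_2$ affects the loss $\ell(c,\nu)$ only through the noise covariance term $\nu^\top \Sigma_n(c)\nu$ (which can only decrease) and the linear payment term $\lambda c$ (which increases by exactly $\lambda(c_2-c_1)$). Thus, the same predictor should achieve a loss at $c_2$ that is no worse than its loss at $c_1$ plus $\lambda(c_2-c_1)$, and taking the minimum over $\nu$ then gives the claim.

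Concretely, I would first fix an arbitrary $\nu$ with $\norm*{\nu}\leq S$ and use \Cref{claim: opt values} to write
\begin{align*}
\ell(c_2,\nu) - \ell(c_1,\nu)
&= \nu^\top\br*{\Sigma_{\xhat}(c_2) - \Sigma_{\xhat}(c_1)}\nu + \lambda(c_2-c_1) \\
&= \nu^\top\br*{\Sigma_n(c_2) - \Sigma_n(c_1)}\nu + \lambda(c_2-c_1),
\end{align*}
where the second equality uses $\Sigma_{\xhat}(c)=\Sigma_x+\Sigma_n(c)$. By the monotonicity assumption, $c_1\leq c_2$ implies $\Sigma_n(c_2)\preceq\Sigma_n(c_1)$, so the quadratic form is nonpositive and we conclude $\ell(c_2,\nu)\leq \ell(c_1,\nu) + \lambda(c_2-c_1)$ for every feasible $\nu$.

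Next, I would specialize to $\nu=\nu^*(c_1)$, which is feasible by its definition, and chain with the definition of $\ell^*(c_2)$ as a minimum:
\begin{align*}
\ell^*(c_2) \;\leq\; \ell\br*{c_2,\nu^*(c_1)} \;\leq\; \ell\br*{c_1,\nu^*(c_1)} + \lambda(c_2-c_1) \;=\; \ell^*(c_1) + \lambda(c_2-c_1).
\end{align*}

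There is no real obstacle here: the argument is a one-line monotonicity observation on the quadratic form, combined with the standard trick of evaluating the loss at $c_2$ using the optimizer at $c_1$. The only mild subtlety is making sure that $\nu^*(c_1)$ lies in the feasible set used when defining $\ell^*(c_2)$, which is automatic since both minimizations share the constraint $\norm*{\nu}\leq S$. Note that no symmetric bound in the other direction holds in general, which is why the statement is only one-sided; this matches the intuition that higher payments can strictly reduce the prediction error while the cost grows only linearly.
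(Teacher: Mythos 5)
Your proposal is correct and follows essentially the same argument as the paper: both establish the pointwise inequality $\ell(c_2,\nu)\leq \ell(c_1,\nu)+\lambda(c_2-c_1)$ for every feasible $\nu$ via the covariance monotonicity $\Sigma_n(c_2)\preceq\Sigma_n(c_1)$ applied to the quadratic form in \Cref{claim: opt values}, and then pass to $\ell^*$ (the paper minimizes both sides over $\norm*{\nu}\leq S$, while you evaluate at $\nu^*(c_1)$, which is an equivalent step).
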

In words, the loss cannot significantly increase from a slight rise in the payment. Intuitively, this holds since increasing the cost always reduces the prediction error; therefore, the loss increment is always upper bounded by the increase in the payment term $\lambda c$. 
This only linearly affects the loss, thus implying one-sided Lipschitzness. As a consequence, our algorithms could limit themselves to a finite grid of costs, knowing that even if the optimal cost is not on the grid, there is a slightly higher payment of similar loss \citep{tullii2024improved}. The proofs for both claims can be found at \Cref{appendix: properties}.
\begin{remark}
    Monotonicity of the noise covariances and Lipschitzness of the noise covariances are two incomparable assumptions -- neither implies the other. Nonetheless, we rely on the monotonicity only to prove \Cref{claim: lipschitzness}, as well as arguing that $\Sigma_{\xhat}(c)\succ 0$ for all $c\in[0,1]$. Since \Cref{claim: lipschitzness} also immediately holds if $\Sigma_n(c)$ is Lipschitz in $c$, under the assumption that $\Sigma_{\xhat}(c)\succ 0$, all our positive results would also hold for Lipschitz covariances.
\end{remark}

\subsection{Regret Definition}
As the objective, algorithms are similarly measured by a loss that combines their prediction error and feature payments. We focus on minimizing the expected cumulative loss, also equivalent to minimizing the regret:
\begin{align*}
    \Regret(T) = \E\brs*{\sum_{t=1}^T \br*{\br*{\yhat_t(\xhat_t) - y_t}^2 + \lambda c_t}} - T\ell^*.
\end{align*}
The regret can be decomposed as follows (see \Cref{appendix: properties} for the proof).
\begin{restatable}{lemma-rst}{regretDecomp}
    \label{lemma: regret decomposition}
    For any arbitrary history-dependent prediction rule $\yhat_t:\R^d\mapsto \R$ and any arbitrary sequence $\nu_1,\dots,\nu_T\in\R^d$, both potentially depend on $I_{t-1}^o$, it holds that 
    \begin{align*}
        \Regret(T) 
        & = \E\brs*{\sum_{t=1}^T \br*{\br*{\yhat_t(\xhat_t) - y_t}^2 - \br*{\br*{\xhat_t^\top\nu_t - y_t}^2}}}  \\
        &\quad+ \E\brs*{\sum_{t=1}^T \br*{\ell(c_t,\nu_t) - \ell^*}}.
    \end{align*}
    In particular, if the algorithm only outputs linear predictions $\yhat_t(\xhat_t) = \xhat_t^\top \nu_t$, then
    \begin{align*}
        \Regret(T) 
        & = \E\brs*{\sum_{t=1}^T \br*{\ell(c_t,\nu_t) - \ell^*}}.
    \end{align*}
\end{restatable}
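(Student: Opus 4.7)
The plan is to prove the decomposition by adding and subtracting the quantity $\sum_t \br*{\br*{\xhat_t^\top \nu_t - y_t}^2 + \lambda c_t}$ inside the expectation, and then use conditioning on the history $I_{t-1}^o$ to identify the conditional expectation of the surrogate quadratic loss with $\ell(c_t,\nu_t)$.

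More concretely, first I would write
\begin{align*}
\Regret(T) &= \E\brs*{\sum_{t=1}^T \br*{\br*{\yhat_t(\xhat_t) - y_t}^2 - \br*{\xhat_t^\top\nu_t - y_t}^2}} \\
&\quad + \E\brs*{\sum_{t=1}^T \br*{\br*{\xhat_t^\top\nu_t - y_t}^2 + \lambda c_t}} - T\ell^*,
\end{align*}
which is a trivial rearrangement. The work is then to identify the middle sum with $\E\brs*{\sum_t \ell(c_t,\nu_t)}$; once that is done, subtracting $T\ell^* = \sum_t \ell^*$ yields the stated decomposition, and the linear-prediction specialization follows by setting $\yhat_t(\xhat_t) = \xhat_t^\top \nu_t$ so that the first term vanishes.

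The key step is therefore to show $\E\brs*{\br*{\xhat_t^\top\nu_t - y_t}^2 + \lambda c_t} = \E\brs*{\ell(c_t,\nu_t)}$ for every $t$. I would use the tower rule by conditioning on $I_{t-1}^o$: both $c_t$ and $\nu_t$ are $I_{t-1}^o$-measurable by assumption, while $(x_t,n_t(c_t))$ is drawn independently of $I_{t-1}^o$ from $D_x \otimes D_n(c_t)$ (using the i.i.d.\ assumption on $D_x$ and the fact that conditionally on $c_t$ the noise is drawn from $D_n(c_t)$ independently of the past). Hence, plugging $y_t = x_t^\top \theta^*$ and $\xhat_t = x_t + n_t(c_t)$,
\begin{align*}
\E\brs*{\br*{\xhat_t^\top\nu_t - y_t}^2 + \lambda c_t \,\big|\, I_{t-1}^o}
= \E_{x\sim D_x,\, n\sim D_n(c_t)}\brs*{\br*{(x+n)^\top \nu_t - x^\top\theta^*}^2} + \lambda c_t
= \ell(c_t,\nu_t),
\end{align*}
where in the first equality I use that $\nu_t, c_t$ are constants given $I_{t-1}^o$. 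Taking outer expectation gives the identification of the middle sum.

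The only subtle point (and the place a careless proof could slip) is the measurability/independence bookkeeping: one must remember that $\nu_t$ and $c_t$ may depend on the whole observable past (hence on previously drawn $x_s$ via the observed $\yhat_s(\xhat_s), y_s$), but not on $(x_t, n_t(c_t))$, and that once $c_t$ is fixed, $n_t(c_t)$ is drawn from $D_n(c_t)$ independently of $x_t$ and the past. With this in place, the remainder of the argument is a direct application of linearity of expectation and the definition of $\ell^*$.
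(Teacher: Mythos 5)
Your proposal is correct and follows essentially the same route as the paper: add and subtract the surrogate quadratic loss, then use the tower rule conditioning on $I_{t-1}^o$ (with $c_t,\nu_t$ measurable w.r.t.\ the history and $x_t, n_t(c_t)$ drawn independently of it) to identify $\E\brs*{\br*{\xhat_t^\top\nu_t - y_t}^2 + \lambda c_t}$ with $\E\brs*{\ell(c_t,\nu_t)}$, which is exactly the paper's Equation (loss from ind to process). The measurability bookkeeping you flag is also precisely the point the paper's proof emphasizes, so nothing is missing.
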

The first term encapsulates the regret of the prediction rule compared to some dynamic linear predictor, while the second term measures this linear predictor compared to the best linear predictor. In the rest of the paper, we restrict ourselves to linear predictions (as discussed in \Cref{remark: linear}), and so we focus on bounding the loss difference.

\section{Online Linear Regression with Known Noise Covariances}
\label{section: known noise covariance}

In this section, we start from the favorable setting in which the learner has perfect knowledge of the noise covariances $\Sigma_n(c)$ for all $c\in[0,1]$. In particular, to output a good prediction at the right cost, the learner only misses the feature covariance $\Sigma_x$ and the linear map $\theta^*$. Since neither of these quantities depends on the payment $c_t$, our aim is to \emph{share information} between different cost observations, thus improving the regret. To this end, we take a closer look at the expected loss $\ell(c,\nu)$ in \Cref{claim: opt values}, noticing that only a single term depends on the noise covariance: a term of the form $\nu^\top \Sigma_{\xhat}(c)\nu$. Given noisy independent feature samples at cost $c$, this term can be estimated empirically via $\hat{\Sigma}_{\xhat}(c)=\sum_{s=1}^t\xhat_s(c)\xhat_s(c)^\top$, whose expectation is $t\Sigma_{\xhat}(c)$. However, we only observe the noisy features for payments we played $\brc{c_s}_{s\leq t}$, so we cannot directly use this estimator for other unplayed payments. To circumvent this, we utilize our knowledge of the noise covariance and include a correction term that \emph{shifts} the noise covariance from $c_s$ to $c$. In particular, we suggest estimating the covariance with
\begin{align*}
    \hat{\Sigma}_{\xhat}(c)=\sum_{s=1}^t\br*{\xhat_s(c_s)\xhat_s(c_s)^\top+ \Sigma_n(c)-\Sigma_n(c_s)}.
\end{align*}
This estimator uses \emph{all past samples} while enjoying the desired expectation 
\begin{align*}
    \E\brs*{\hat{\Sigma}_{\xhat}(c)}
    &=\E\brs*{\sum_{s=1}^t \br*{\xhat_s(c_s)\xhat_s(c_s)^\top + \Sigma_n(c)-\Sigma_n(c_s)}} \\
    &= \E\brs*{\sum_{s=1}^t \br*{\br*{\Sigma_x +\Sigma_n(c_s)} + \Sigma_n(c)-\Sigma_n(c_s)}} \\
    & = \E\brs*{\sum_{s=1}^t \br*{\Sigma_x + \Sigma_n(c)}}
    = t\Sigma_{\xhat}(c).
\end{align*}
Inspired by this, we add a correction term of the form $\nu^{\top}\br*{\Sigma_n(c)-\Sigma_n(c_s)}\nu$ to the natural empirical loss and define the following loss estimator:
{\small \begin{align*}
    \hat{L}^{kc}_t&(c,\nu)
    \!=\! \sum_{s=1}^t\br*{\!\br*{\xhat_s^{\top}\nu-y_s}^2 + \nu^{\top}\br*{\Sigma_n(c)-\Sigma_n(c_s)}\nu+\lambda c}\!.
\end{align*}}
We indeed show that this empirical loss uniformly converges to the expected loss $\ell(c,\nu)$ and is almost convex:
\begin{restatable}{proposition-rst}{lossEstKnownNoise}
    \label{prop: loss estimator known noise}
    With probability at least $1-3\delta$, for all $t\ge1$, $c\in[0,1]$ and $\nu\in\R^d$ s.t. $\norm*{\nu}\le S$, it holds that 
    \begin{align*}
        \abs*{\hat{L}^{kc}_t(c,\nu) - t\ell(c,\nu)} \leq  9S^2\sqrt{8t\beta_t^2\ln\frac{3dt(t+1)}{\delta}}
    \end{align*}
    where
    {\small\begin{align*}
        \beta_t &= R^2 \br*{d +2\sqrt{d\ln\frac{3t(t+1)}{\delta}}+2\ln\frac{3t(t+1)}{\delta}}  \\
        &\quad+ S^2 \br*{1 + 2\sqrt{\frac{\ln\frac{3t(t+1)}{\delta}}{d}}}
        = \Olog(R^2d +S^2).        
    \end{align*}}
    Furthermore, under the same event, the regularized loss    
    \begin{align}
        \label{eq:loss regularized known covariances}
        \hat{L}^{kc,R}_t(c,\nu) = \hat{L}^{kc}_t(c,\nu) + \gamma_t\norm*{\nu}^2
    \end{align}
    for $\gamma_t=2\sqrt{8t\beta_t^2\ln\frac{3dt(t+1)}{\delta}}$ is quadratic and strictly convex in $\nu$ for all $t\ge 1$.
\end{restatable}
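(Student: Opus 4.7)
The strategy is to rewrite the deviation $\hat{L}^{kc}_t(c,\nu)-t\ell(c,\nu)$ as a quadratic in $\nu$ whose coefficients are $c$-independent martingales, then control each coefficient by a matrix/vector/scalar concentration inequality on a high-probability event where the observed features have bounded norm, and finally pass to a uniform bound by invoking $\|\nu\|\le S$ and $\|\theta^*\|\le S$.

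\emph{Decomposition.} Expanding $(\xhat_s^\top\nu-y_s)^2$ via $y_s=x_s^\top\theta^*$ and subtracting the identity for $\ell(c,\nu)$ given by \Cref{claim: opt values}, one obtains
\begin{align*}
\hat{L}^{kc}_t(c,\nu)-t\ell(c,\nu)=\nu^\top M_t\nu-2\nu^\top V_t\theta^*+\theta^{*\top}W_t\theta^*,
\end{align*}
where $M_t=\sum_{s=1}^t(\xhat_s\xhat_s^\top-\Sigma_{\xhat}(c_s))$, $V_t=\sum_{s=1}^t(\xhat_s x_s^\top-\Sigma_x)$ and $W_t=\sum_{s=1}^t(x_sx_s^\top-\Sigma_x)$. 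The shift $\Sigma_n(c)-\Sigma_n(c_s)$ in $\hat{L}^{kc}_t$ was chosen precisely so that, after subtracting $t\Sigma_{\xhat}(c)$, the $c$-dependence cancels, and uniformity over $c$ becomes automatic. All that remains is to bound $\|M_t\|_{op}$, $\|V_t\theta^*\|$ and $|\theta^{*\top}W_t\theta^*|$ simultaneously.

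\emph{Concentration.} The subgaussian assumption yields, via a Hanson--Wright/chi-squared-type tail and the bound $\|\bar x\|\le S$, a uniform norm bound $\max\{\|x_s\|^2,\|\xhat_s\|^2\}\le \beta_t$ for all $s\le t$; the expression for $\beta_t$ in the statement emerges after distributing a total failure mass of $\delta$ over the rounds with weights $1/(3t(t+1))$ (summing to $1/3$). Since $c_s$ is $I_{s-1}^o$-measurable, each of $M_t$, $V_t\theta^*$, $\theta^{*\top}W_t\theta^*$ is a martingale w.r.t.\ the natural filtration; on the good norm event, the operator/vector/scalar increments are $O(\beta_t)$, $O(S\beta_t)$, $O(S^2\beta_t)$ respectively. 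Applying matrix Azuma to $M_t$ and Azuma--Hoeffding to $V_t\theta^*$ (e.g., coordinate-wise or via the $S^{d-1}$ covering argument) and to $\theta^{*\top}W_t\theta^*$, each with a $\delta$-budget tuned to be uniform in $t$, produces bounds of order $\sqrt{8t\beta_t^2\ln(3dt(t+1)/\delta)}$ (the $d$ factor in the log comes from the matrix Azuma). Combining via $|\nu^\top M_t\nu|\le S^2\|M_t\|_{op}$ and $|\nu^\top V_t\theta^*|\le S\|V_t\theta^*\|$, collecting constants, and union-bounding the three events at cost $3\delta$, one obtains the claimed $9S^2\sqrt{8t\beta_t^2\ln(3dt(t+1)/\delta)}$.

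\emph{Convexity and main obstacle.} The Hessian of $\hat{L}^{kc,R}_t(c,\nu)$ in $\nu$ equals $2(M_t+t\Sigma_{\xhat}(c))+2\gamma_t I$. On the concentration event, the matrix bound gives $\|M_t\|_{op}\le \gamma_t/2$, so $M_t+\gamma_t I\succeq (\gamma_t/2)I\succ 0$; together with $t\Sigma_{\xhat}(c)\succ 0$ this yields strict positive-definiteness of the Hessian, hence strict convexity. The delicate step is the matrix Azuma application, since the raw increments $\xhat_s\xhat_s^\top-\Sigma_{\xhat}(c_s)$ are only sub-exponential. I would handle this by running the argument conditionally on the norm event from the previous step (where the increments are deterministically bounded by $2\beta_t I$) and verifying that the conditional bias introduced by the truncation is of lower order than the confidence radius; getting the $3t(t+1)$ factor and the precise form of $\beta_t$ right across the double union bound (over $s$ and over $t$) is the most careful part of the argument.
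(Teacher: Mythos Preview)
Your decomposition and overall strategy match the paper's: the key observation that the $c$-dependence drops out after subtracting $t\Sigma_{\xhat}(c)$, the use of a high-probability norm event to truncate the sub-exponential increments, and the convexity argument via the Hessian are all as in the paper. The one substantive difference is in how you handle the cross term $V_t\theta^*=\sum_s(\xhat_sx_s^\top-\Sigma_x)\theta^*$.

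The paper first splits $V_t = W_t + \sum_s n_sx_s^\top$ and then, rather than bounding $\lVert(\sum_s n_sx_s^\top)\theta^*\rVert$ as a vector, embeds the non-symmetric matrix $\sum_s n_sx_s^\top$ inside the symmetric $2d\times 2d$ outer-product deviation
\[
\sum_s \Bigl(\begin{pmatrix}x_s\\n_s\end{pmatrix}\begin{pmatrix}x_s\\n_s\end{pmatrix}^\top - \begin{pmatrix}\Sigma_x&0\\0&\Sigma_n(c_s)\end{pmatrix}\Bigr)
\]
and applies the \emph{same} matrix-Azuma-for-subgaussians lemma used for $M_t$ and $W_t$. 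This yields $\lvert\nu^\top(\sum_s n_sx_s^\top)\theta^*\rvert\le S^2\cdot 2\sqrt{8t\beta_t^2\ln(3dt(t+1)/\delta)}$ with only a $\ln d$ cost from the matrix tail, and combining the pieces with coefficients $S^2\cdot 2 + 3S^2\cdot 1 + 2S^2\cdot 2$ gives exactly the constant $9$. Your proposed routes for $\lVert V_t\theta^*\rVert$---coordinate-wise Azuma or an $S^{d-1}$ covering---would instead pay $\sqrt{d}$ (not $\sqrt{\ln d}$) for the supremum over directions, so you would prove the proposition only up to an extra multiplicative $\mathrm{poly}(d)$ in the constant. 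The embedding trick is the step worth internalizing: it reduces every term to a single lemma bounding $\lVert\sum_s(X_sX_s^\top-\Sigma_s)\rVert_{op}$ for conditionally subgaussian $X_s$.

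Two minor slips: (i) $\xhat_s$ is $2R^2$-subgaussian, so its squared-norm bound is roughly $2\beta_t$, not $\beta_t$; the paper absorbs this by noting that doubling $R^2$ (or $d$, for the stacked vector) at most doubles $\beta_t$, which is where the factors of $2$ on two of the three operator-norm bounds come from. (ii) In the convexity step the concentration only gives $\lVert M_t\rVert_{op}<\gamma_t$, not $\gamma_t/2$; this is still enough for $M_t+t\Sigma_{\xhat}(c)+\gamma_tI\succ 0$, but not for the $\gamma_t/2$ margin you claim.
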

\begin{proofsketch}
    As a first step, we show how to decompose the loss difference into the following quadratic terms:
    {\small\begin{align*}
    \hat{L}^{kc}_t(c,\nu) - t\ell(c,\nu)
    & =\nu^{\top}\br*{\sum_{s=1}^t\br*{\xhat_s\xhat_s^\top - \Sigma_x-\Sigma_n(c_s)}}\nu  \\
    &\quad + {\theta^*}^{\top}\br*{\sum_{s=1}^tx_s x_s^{\top} - \Sigma_x}\theta^*\\
    &\quad - 2\nu^{\top}\br*{\sum_{s=1}^tx_s x_s^{\top}-\Sigma_x}\theta^* \\
    &\quad - 2\nu^{\top}\br*{\sum_{s=1}^tn_s x_s^{\top}}\theta^*.
\end{align*}}
In particular, the first term already incorporates the covariance transfer from $\Sigma_n(c_s)$ to $\Sigma_n(c)$, leaving a well-behaved error term that depends on all the past payments, even when different costs were paid. 
The first three terms are quadratic forms w.r.t.\ the error between the empirical covariance of subgaussian vectors and their real covariance. The last term can also be written as such, noting that 
{\small\begin{align*}
    \nu^{\top}&n_s x_s^{\top}\theta^* 
    \!=\!  \begin{pmatrix} 0 \\ \nu \end{pmatrix}^\top\!\!\br*{\begin{pmatrix} x_s \\ n_s \end{pmatrix}\!\begin{pmatrix} x_s \\ n_s \end{pmatrix}^\top \!- \begin{pmatrix} \Sigma_x & 0 \\ 0 & \Sigma_n(c_s) \end{pmatrix}}\!\begin{pmatrix} \theta^* \\ 0 \end{pmatrix}\!.
\end{align*}}
Using the inequality $x^\top A y \leq \norm*{x}\norm*{y}\norm*{A}_{op}$, and given that the norms of $\nu,\theta^*$ are bounded, we can uniformly bound the loss difference by upper bounding the operator norms of the covariance error terms. To bound these norms, we adapt martingale matrix concentration results \citep{tropp2012user} and apply them to the empirical covariance of subgaussian vectors. In particular, if $Z_t$ is a conditionally subgaussian vector martingale of conditional covariance $\Sigma_t$, it holds with high probability that 
\begin{align*}
    \norm*{\sum_{s=1}^t Z_sZ_s^\top-\Sigma_s}_{op} < \gamma_t/2
\end{align*}
(see \Cref{lemma: matrix azuma for subgaussians} in \Cref{appendix: concentration} for the precise statement and proof). This leads to the uniform concentration of the loss. One direct implication is the strict convexity of the regularized loss -- noting that the loss is quadratic in $\nu$ and that $\xhat_t$ is $2R^2$-subgaussian, a regularization term of $\gamma_t\norm{\nu}^2$ is enough to turn the quadratic loss term into strictly convex.
\end{proofsketch}
\begin{remark}
    \label{remark: additive noise}
    The concentration bound of \Cref{prop: loss estimator known noise} can be easily extended to the case where the regression output is noisy, namely, $y_t=x_t^\top\theta^*+\eta_t$ for conditionally zero-mean subgaussian noise $\eta_t$. This extra noise does not affect the algorithm or the concentration rates; see further discussion in \Cref{appendix: additive noise}.
\end{remark}

\begin{algorithm}[t]
\caption{Online Regression with Paid Features -- Known Noise Covariances} \label{alg: known noise covariance}
\begin{algorithmic}
\STATE {\bf Require:} $\delta\in(0,1), K\in\N$
\STATE {\bf Initialize:} $\hat{L}^{kc,R}_0(c,\nu)=0$
\FOR{$t=1,...,T$}
    \FOR{$k=1,...,K$}
        \STATE Calculate $\hat{\nu}_{t-1}(k)\in \argmin_{\nu:\norm*{\nu}\leq S}\hat{L}^{kc,R}_{t-1}(k/K,\nu)$
    \ENDFOR
    \STATE Set $k_{t-1}\in\argmin_{k\in[K]}\hat{L}^{kc,R}_{t-1}(k/K,\hat{\nu}_{t-1}(k))$
    \STATE Pay $c_t=k_{t-1}/K$ and observe $\xhat_t = \xhat_t(c_t)$
    \STATE Predict $\yhat_t = \xhat_t^\top\hat{\nu}_{t-1}(k_{t-1})$ and observe $y_t$
    \STATE Update $\hat{L}^{kc,R}_{t}(k/K,\nu)$ for all $k\in[K]$
\ENDFOR
\end{algorithmic}
\end{algorithm}

A natural algorithmic approach is to minimize the (regularized) empirical loss -- which we indeed do in \Cref{alg: known noise covariance}. Specifically, the algorithm minimizes this loss (w.r.t. the linear predictor $\nu$) for every cost on a dense discretized grid $c\in\brc*{k/K}_{k\in[K]}$ and pays the cost 
$$c_t\in\argmin_{c\in\brc*{k/K}_{k\in[K]}}\min_{\nu:\norm*{\nu}\leq S}\hat{L}^{kc,R}_t(c,\nu)$$
that achieves the minimal empirical loss on this grid. We note that the grid is only used to facilitate the optimal cost calculation; as long as it is sufficiently fine, it does not affect performance. Then, upon observing $\xhat_t(c_t)$, a prediction $\yhat_t(\xhat_t)=\xhat^\top\nu_t$ is calculated using the best empirical linear predictor for $c_t$:
$$ \nu_t = \hat{\nu}_{t-1}(c_t)\in\argmin_{\nu:\norm*{\nu}\leq S}\hat{L}^{kc,R}_t(c_t,\nu).$$
This algorithm enjoys the following performance bound (see \Cref{appendix: known covariances} for the proof):
\begin{restatable}{theorem-rst}{regretKnownNoiseCov}
    \label{theorem: regret known noise covariances}
    For any $T\ge1$, set $\delta=1/T$ and $K=\ceil*{\lambda T}$. Then, the regret of \Cref{alg: known noise covariance} is bounded by
\begin{align*}
    \Regret(T) = \Olog\br*{S^2(R^2d +S^2)\sqrt{T} + \lambda}.
\end{align*}
\end{restatable}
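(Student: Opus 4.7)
The plan is to use Lemma~\ref{lemma: regret decomposition}, which (since Algorithm~\ref{alg: known noise covariance} outputs only linear predictors) reduces the regret to $\E\brs*{\sum_{t=1}^T(\ell(c_t,\nu_t)-\ell^*)}$. I would then condition on the uniform concentration event of Proposition~\ref{prop: loss estimator known noise}, holding with probability $\geq 1-3\delta$, and denote its deviation radius by $\epsilon_t \eqdef 9S^2\sqrt{8t\beta_t^2\ln(3dt(t+1)/\delta)} = \Olog(S^2(R^2d+S^2)\sqrt{t})$. On the complementary bad event (probability $3\delta = 3/T$), I would apply the trivial per-step bound $\abs*{\ell(c_t,\nu_t)-\ell^*} \leq \mathcal{O}(S^2(R^2+S^2)+\lambda)$, which follows from $\norm{\Sigma_x}_{op},\norm{\Sigma_n(c)}_{op} \leq R^2+S^2$ together with $\norm{\nu_t},\norm{\theta^*}\leq S$ and $c_t\in[0,1]$; this contributes only $\mathcal{O}(S^2(R^2+S^2)+\lambda)$ to the expected regret after multiplication by $3/T$.

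For the per-round analysis on the good event and $t\geq 2$, let $c_g \in \brc{k/K}_{k\in[K]}$ be the smallest grid point with $c_g \geq c^*$ (or $c_g=1/K$ if $c^*=0$). The one-sided Lipschitzness of Claim~\ref{claim: lipschitzness} gives $\ell(c_g,\nu^*(c_g)) = \ell^*(c_g) \leq \ell^* + \lambda/K$. Since the algorithm minimizes $\hat{L}^{kc,R}_{t-1}$ over grid points and $\norm{\nu}\leq S$, it competes with $(c_g,\nu^*(c_g))$; dropping the nonnegative $\gamma_{t-1}\norm{\nu_t}^2$ and using $\norm{\nu^*(c_g)}^2\leq S^2$ yields
\begin{align*}
\hat{L}^{kc}_{t-1}(c_t,\nu_t) \leq \hat{L}^{kc}_{t-1}(c_g,\nu^*(c_g)) + \gamma_{t-1}S^2.
\end{align*}
Applying Proposition~\ref{prop: loss estimator known noise} to both sides and dividing by $t-1$ then gives
\begin{align*}
\ell(c_t,\nu_t) - \ell^* \leq \frac{\lambda}{K} + \frac{2\epsilon_{t-1}+\gamma_{t-1}S^2}{t-1} = \frac{\lambda}{K} + \Olog\br*{\frac{S^2(R^2d+S^2)}{\sqrt{t-1}}},
\end{align*}
using that $\gamma_{t-1}$ is of the same order as $\epsilon_{t-1}$ by construction.

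Summing over $t=2,\dots,T$, using $\sum_{t\geq 2}(t-1)^{-1/2} = \mathcal{O}(\sqrt{T})$, and noting that $K=\ceil*{\lambda T}$ implies $T\lambda/K \leq 1$ produces a $\Olog(S^2(R^2d+S^2)\sqrt{T})$ contribution from the good event. Adding the trivial $t=1$ term (since $\hat{L}^{kc,R}_0 \equiv 0$ yields an arbitrary minimizer) and the bad-event contribution, both of order $\mathcal{O}(S^2(R^2+S^2)+\lambda)$, gives the claimed $\Olog(S^2(R^2d+S^2)\sqrt{T}+\lambda)$. The main technical obstacle is reconciling the regularized empirical loss (which the algorithm minimizes) with the unregularized one (to which Proposition~\ref{prop: loss estimator known noise} applies): this is handled by the asymmetric trick of retaining $\gamma_{t-1}\norm{\nu^*(c_g)}^2 \leq \gamma_{t-1}S^2$ on the comparator side while dropping $\gamma_{t-1}\norm{\nu_t}^2\geq 0$ on the algorithm's side, together with the fact that the specific choice of $\gamma_t$ in \eqref{eq:loss regularized known covariances} matches the order of $\epsilon_t$, so regularization preserves the $\sqrt{T}$ rate.
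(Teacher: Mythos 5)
Your proposal is correct and follows essentially the same route as the paper's proof: condition on the uniform concentration event of \Cref{prop: loss estimator known noise}, exploit the greedy minimization of the regularized empirical loss with the asymmetric treatment of the $\gamma_{t}\norm{\nu}^2$ term, invoke the one-sided Lipschitzness of \Cref{claim: lipschitzness} to control the $\lambda/K$ discretization error, and absorb the bad event and the $t=1$ round into lower-order terms. The only cosmetic difference is the comparator: you compare directly to $(c_g,\nu^*(c_g))$ for a grid point $c_g$ just above $c^*$, whereas the paper passes through the grid minimizer of the regularized expected loss before dropping the regularization via $\norm{\nu}\le S$ — both yield the same bound.
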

\begin{proofsketch}
For any $c$, the uniform concentration in \Cref{prop: loss estimator known noise} guarantees that the empirical loss $\hat{L}^{kc}_{t-1}$ is similar to the expected one $\ell$ both for $\hat{\nu}_{t-1}(c)$ and for $\nu^*(c)$. Since $\hat{\nu}_{t-1}(c)$ outperforms $\nu^*(c)$ on the empirical loss $\hat{L}^{kc}_{t-1}$, the loss similarity implies that it cannot be much worse than $\nu^*(c)$ on the expected loss $\ell$ for any payment $c\in[0,1]$. The suboptimality of $\hat{\nu}_{t-1}(c)$ compared to $\nu^*(c)$ will be proportional to the maximal error between the average empirical loss $\frac{1}{t}\hat{L}^{kc}_{t}$ and the true loss $\ell$: by \Cref{prop: loss estimator known noise}, it is of order $\Olog\br*{S^2\beta_t/\sqrt{t}}$.

As for the choice of payment, the information sharing enables us to collect data on all costs simultaneously, without the need for explicit exploration. Thus, we can choose the best payment greedily on a payment grid, and the uniform concentration will again imply that the best empirical payment on the grid will perform similarly to the best cost on the grid (up to a similar error of $\Olog\br*{S^2\beta_t/\sqrt{t}}$). All that remains is to relate the optimal cost on the chosen grid to the best continuous payment: we do so leveraging the one-sided Lipschitzness of the loss, as stated in \Cref{claim: lipschitzness}. In particular, we choose a sufficiently fine grid to ensure that the discretization error is negligible. Thus, the instantaneous error compared to the optimal loss $\ell^*$ is of order $\Olog\br*{S^2\beta_t/\sqrt{t}}$, and accumulating it across all rounds leads to the desired regret bound.
\end{proofsketch}

We now discuss two important aspects of our algorithm: its computational tractability and statistical efficiency (optimality of the regret bound).

\paragraph{Tractability of the loss minimization.} 
One caveat in the covariance correction term $\Sigma_n(c)-\Sigma_n(c_s)$ is that it might render the loss nonconvex. To mitigate this, we add a regularization term $\gamma_t\norm{\nu}^2$ to our loss. Then, we show in \Cref{prop: loss estimator known noise} that with high probability, calculating $\hat{\nu}_{t-1}(k)$ requires minimizing a strictly convex loss over a ball, which can be done efficiently. In particular, cases where the objective is nonconvex on some rounds are of low probability, so the learner can just choose arbitrary costs and predictions without affecting the performance. Nonetheless, it is important to note that the regularization is not strictly necessary for computational efficiency; the minimization of the unregularized empirical loss $\hat{L}^{kc}_t$ is in fact an instance of the trust region subproblem that can be efficiently solved even if the quadratic problem is nonconvex \citep[see, e.g., Section 8.2.7 in][]{beck2014introduction}. We also note that one could further increase the regularization term to make the objective $\Omega(R^2d+S^2)$-strongly convex, while only deteriorating the regret by constant factors, thereby greatly accelerating the loss minimization step.

\paragraph{Statistical efficiency.} At a glance, the regret rate that we obtained does not seem tight: the information sharing makes the problem similar to online linear regression, for which an $\Ocal\br*{d\ln T}$ regret is achievable \citep{vovk1997competitive}. Somewhat surprisingly, we show that this is not the case in our setting: even in one-dimensional problems, an $\Omega(\sqrt{T})$ is unavoidable (see proof in \Cref{appendix: known covariance lower bound}):
\begin{restatable}{theorem-rst}{lowerBoundKnown}
\label{theorem: known covariances lower bound}
    For any algorithm and $T\ge1$, there exists a one-dimensional instance with a known noise variance profile such that $\E\brs*{R(T)} \geq\frac{\sqrt{T}}{240}$.
\end{restatable}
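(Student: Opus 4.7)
The plan is to apply Le Cam's two-point method. The construction is as follows: take $d=1$ with $x_t\sim\Ncal(0,1)$, $y_t=x_t\theta^*$, and a step-shaped noise profile $\sigma_n^2(c)=\sigma_H^2$ for $c<1/2$ and $\sigma_n^2(c)=\sigma_L^2$ for $c\ge 1/2$, with $0<\sigma_L^2<\sigma_H^2$. Because $\sigma_n^2(c)$ is constant on each half of $[0,1]$, every $c\in(0,1/2)$ is dominated by $c=0$ and every $c\in(1/2,1]$ by $c=1/2$, so WLOG $c_t\in\{0,1/2\}$. By \Cref{claim: opt values}, $\ell^*(c)=\tfrac{\theta^{*2}\sigma_n^2(c)}{1+\sigma_n^2(c)}+\lambda c$, and one can calibrate $\lambda$ so that $\ell^*(0)=\ell^*(1/2)$ precisely at $\theta^{*2}=\theta_c^2\eqdef\lambda/(2g)$, where $g=\tfrac{\sigma_H^2}{1+\sigma_H^2}-\tfrac{\sigma_L^2}{1+\sigma_L^2}$. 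The two hard instances are $\theta_{\pm}^{*2}=\theta_c^2\pm\delta$: under $\theta_{-}$ the unique optimal cost is $0$, under $\theta_{+}$ it is $1/2$, and the per-round suboptimality of the wrong choice is $g\delta$. Letting $N_B=|\{t:c_t\ge 1/2\}|$ and combining \Cref{lemma: regret decomposition} with $\ell(c,\nu)\ge\ell^*(c)$ yields $R_{-}\ge g\delta\,\E_{-}[N_B]$ and $R_{+}\ge g\delta\,(T-\E_{+}[N_B])$.

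The next step is an information-theoretic argument. Since $n_t\perp x_t$ are independent Gaussians, $(\xhat_t,y_t)$ is a non-degenerate centered bivariate Gaussian whose covariance depends only on $\theta^{*2}$ and $\sigma_n^2(c)$. A Taylor expansion of the closed-form Gaussian KL around $\delta=0$ gives the per-round divergence $D_c=\tfrac{\delta^2}{2\theta_c^4}\br*{1/\sigma_n^2(c)+2}+O(\delta^3)$. The chain rule for KL under adaptive data collection then yields $\kl(\Pb_{-}\|\Pb_{+})\le D_0\,T+(D_{1/2}-D_0)\,\E_{-}[N_B]$. Suppose for contradiction that $\max(R_{-},R_{+})\le R$. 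Markov's inequality gives $\Pb_{-}(N_B>T/2)\le 2R/(g\delta T)$ and $\Pb_{+}(N_B\le T/2)\le 2R/(g\delta T)$, and the Bretagnolle--Huber inequality applied to $\{N_B>T/2\}$ yields $\kl(\Pb_{-}\|\Pb_{+})\ge\ln\br*{g\delta T/(8R)}$.

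Setting $\delta=1/\sqrt{T}$ and $R=\sqrt{T}/240$, and using $\E_{-}[N_B]\le R/(g\delta)$ which is forced by the assumption $R_-\le R$, both sides become explicit constants in $T$. Choosing, for instance, $\sigma_H^2=1$ and $\sigma_L^2=1/2$ (giving $g=1/6$) makes the KL upper bound strictly smaller than $\ln(g\delta T/(8R))=\ln 5$, producing the needed contradiction and hence $\max(R_{-},R_{+})\ge\sqrt{T}/240$. The main obstacle will be calibrating constants to the exact $1/240$: because the per-round KL at $c=1/2$ scales with $1/\sigma_L^2$, taking $\sigma_L^2$ too small allows cheap information gathering (bad for the lower bound), but taking $\sigma_L^2$ close to $\sigma_H^2$ shrinks $g$ and hence the per-round regret of a wrong cost choice; an intermediate choice such as $\sigma_L^2=1/2$ balances these effects. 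A secondary care is verifying the regularity assumptions (boundedness of $\|\theta^*\|,\|\bar\nu(c)\|$, subgaussianity, positive definiteness of $\Sigma_{\xhat}(1)$), all of which hold for a large enough $S$ and small enough $\delta$.
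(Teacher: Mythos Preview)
Your high-level strategy (Le Cam two-point method with Bretagnolle--Huber) is the same as the paper's, but the construction of the two hard instances is genuinely different: you perturb $\theta^*$ (setting $\theta_\pm^{*2}=\theta_c^2\pm\delta$) while keeping $\Sigma_x=1$ fixed, whereas the paper keeps $\theta^*=1$ fixed and perturbs the feature variance, $\Sigma_x=1\pm\epsilon$. The paper's choice buys a real simplification: since $\theta^*=1$ in both instances, $y_t=x_t$ and the learner effectively observes $(x_t,n_t)$ at every round; because the noise law is also identical across the two instances, the per-round KL collapses to $\kl(\Ncal(0,1+\epsilon),\Ncal(0,1-\epsilon))$, which is \emph{independent of $c_t$}. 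This removes your step of bounding $\E_-[N_B]$ inside the KL upper bound and allows an exact (not Taylor-expanded) KL inequality, so the constant $1/240$ goes through for every $T\ge1$. Your route is sound in principle but carries more bookkeeping, and you must control the $O(\delta^3)$ remainder explicitly---not just asymptotically---to obtain the stated constant for small $T$.

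There is also one genuine gap in your reduction from regret to cost suboptimality. Combining \Cref{lemma: regret decomposition} with $\ell(c,\nu)\ge\ell^*(c)$ gives $\Regret(T)\ge\E\bigl[\sum_t(\ell^*(c_t)-\ell^*)\bigr]$ \emph{only when $\yhat_t$ is linear}; for a general predictor the first term in the decomposition, $\E\bigl[\sum_t\bigl((\yhat_t-y_t)^2-(\xhat_t^\top\nu_t-y_t)^2\bigr)\bigr]$, can be negative. To cover ``any algorithm'' you must argue that the Bayes-optimal predictor of $y_t$ given $\xhat_t$ is itself linear, which holds here because your features and noise are jointly zero-mean Gaussian---this is precisely \Cref{claim: gaussian loss} in the paper. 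Given your Gaussian construction it is a one-line fix, but as written the step does not stand.
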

The construction fixes the feature noise to be $\Ncal(0,1)$ for all payments $c<1/2$ and the noise to be deterministically zero when $c\ge 1/2$. We then create two instances with known $\theta^*=1$ and $x_t\sim \Ncal(0,1\pm \epsilon)$ and show that, depending on the instance, it is optimal to play either $c=0$ or $c=1/2$, thus reducing the problem to a two-armed bandit instance. From there, we use techniques from \citep{lattimore2020bandit} to derive the lower bound. We hypothesize that the $\sqrt{T}$ rate results from the discontinuity in the noise variance, and that for sufficiently smooth noise, a rate of $\Ocal(\ln T)$ might still be achievable, but leave this study for future work.

As a final remark, we note that the lower bound can be immediately generalized to $d$-dimensional features. In particular, we can choose a feature distribution that first samples a single coordinate (uniformly at random) to be non-zero, and then assigns its value according to the lower bound construction. This effectively creates $d$ independent instances, each running for approximately $T/d$ rounds. The regret of each instance will be lower bounded by $\Omega(\sqrt{T/d})$, so the overall regret from the interaction will be bounded by $\Omega(d\sqrt{T/d})=\Omega(\sqrt{Td})$.

\section{Extension to Unknown Noise Covariances}
\label{section: unknown noise covariance}

\begin{algorithm}[t]
\caption{Online Regression with Paid Features -- Unknown Noise Covariances} \label{alg: unknown noise covariance}

\begin{algorithmic}
\STATE {\bf Require:} $\delta\in(0,1), K\in\N$
\FOR{$k=1,...,K$}
        \STATE  Pay $c_k\!=\!k/K$ and $\yhat_k\!=\!0$; observe $\xhat_k\!=\!\xhat_k(c_k)$ and $y_k$
        \STATE Set $\hat{L}^{uc}_K(k/K,\nu)=\br*{\xhat_k^{\top}\nu-y_k}^2+\lambda k/K$
    \ENDFOR
\FOR{$t=K+1,...,T$}
    \FOR{$k=1,...,K$}
        \STATE Calculate $\hat{\nu}_{t-1}(k)\in \argmin_{\nu:\norm*{\nu}\leq S}\hat{L}^{uc}_{t-1}(k/K,\nu)$
    \ENDFOR
    \STATE choose $k_{t-1}$ according to \cref{eq: UCB index unknown covariances}
    \STATE Pay $c_t=k_{t-1}/K$ and observe $\xhat_t = \xhat_t(c_t)$
    \STATE Predict $\yhat_t = \xhat_t^\top\hat{\nu}_{t-1}(k_{t-1})$ and observe $y_t$
    \STATE Update $\hat{L}^{uc}_{t}(k_{t-1}/K,\nu)$
\ENDFOR
\end{algorithmic}
\end{algorithm}

We now tackle the more challenging case in which the noise covariances are not known in advance. In this situation, we unfortunately cannot share information between different costs as we did in the previous section, and resort to estimating the loss at a cost $c$ only based on samples collected with this specific cost. Formally, we define the loss 
\begin{align*}
    \hat{L}^{uc}_t(c,\nu) = \sum_{s=1}^t\br*{\br*{\xhat_s^{\top}\nu-y_s}^2 + \lambda c}\Ind{c_s=c},
\end{align*}
which fully overlaps with $\hat{L}^{kc}_t(c,\nu)$ when all samples are collected with $c_s=c$. Leveraging this insight, we utilize \Cref{prop: loss estimator known noise} to derive concentration results on $\hat{L}^{uc}_t(c,\nu)$ (see proof in \Cref{appendix: unknown covariances}). 
\begin{restatable}{corollary-rst}{lossEstUnknownNoise}
    \label{corollary: loss estimator unknown noise}
    Fix $c\in[0,1]$ and let $N_t(c) \!=\! \sum_{s=1}^t \Ind{c_s=c}$. Then, with probability at least $1-3\delta$, for all $t\ge1$ and $\nu\in\R^d$ s.t. $\norm*{\nu}\le S$, it holds that 
    {\small\begin{align*}
        &\abs*{\hat{L}^{uc}_t(c,\nu) - N_t(c)\ell(c,\nu)} 
        \leq  9S^2\sqrt{8N_t(c)\beta_t^2\ln\frac{3dt(t+1)}{\delta}},
    \end{align*}}
    where $\beta_t =\Olog(R^2d +S^2)$ is specified at \Cref{prop: loss estimator known noise}. 
\end{restatable}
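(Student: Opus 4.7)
The plan is to reduce to \Cref{prop: loss estimator known noise} via a subsequence/weighting argument. The key observation is that the correction term $\nu^\top(\Sigma_n(c) - \Sigma_n(c_s))\nu$ vanishes on the event $\{c_s = c\}$, so
\begin{align*}
\hat{L}^{uc}_t(c,\nu) = \sum_{s=1}^t \Ind{c_s=c}\br*{(\xhat_s^\top\nu - y_s)^2 + \nu^\top\br*{\Sigma_n(c) - \Sigma_n(c_s)}\nu + \lambda c}.
\end{align*}
Since the cost $c_s$ is $I_{s-1}^o$-measurable, the weight $\Ind{c_s=c}$ is predictable, so $\hat{L}^{uc}_t(c,\nu)$ is precisely the predictably-weighted version of $\hat{L}^{kc}_t(c,\nu)$. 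The target deviation $\hat{L}^{uc}_t(c,\nu) - N_t(c)\ell(c,\nu)$ is therefore the weighted analogue of the centered quantity $\hat{L}^{kc}_t(c,\nu) - t\ell(c,\nu)$ controlled in that proposition.

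First, I would reproduce the four-term quadratic-form decomposition of $\hat{L}^{kc}_t(c,\nu) - t\ell(c,\nu)$ from the proof sketch of \Cref{prop: loss estimator known noise}, inserting the weight $\Ind{c_s=c}$ into every summand. Each of the four terms is a bilinear form in $\nu$ and $\theta^*$ applied to an operator-valued martingale difference; inserting the predictable weight preserves the martingale structure and kills the contribution on rounds where $c_s \ne c$, leaving only the $N_t(c)$ active summands.

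Next, I would invoke the matrix Azuma concentration for conditionally subgaussian vectors (\Cref{lemma: matrix azuma for subgaussians}) on each of these four weighted sequences. Since the weights take values in $\{0,1\}$ and are predictable, the per-round conditional subgaussian proxy is unchanged on active rounds and vanishes otherwise, so the total variance proxy scales as $N_t(c)R^2$ rather than $tR^2$. This yields operator-norm bounds of order $\sqrt{N_t(c)\beta_t^2\ln(3dt(t+1)/\delta)}$ on each of the four covariance-deviation matrices; bounding each bilinear form by $\norm*{\nu}\norm*{\theta^*}\norm*{\cdot}_{op}$ and using $\norm*{\nu},\norm*{\theta^*}\le S$ then lifts this to the stated uniform-in-$\nu$ bound.

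The main subtlety to verify carefully is precisely this last step: that the matrix Azuma lemma accommodates predictable $\{0,1\}$ weights and returns a variance proxy governed by $N_t(c)$ rather than $t$. This is where all the savings over a naive "subgaussianity on every round" bound come from, and it is essential that the weight be predictable (which it is, because $c_s$ is selected from $I_{s-1}^o$). Everything else --- the union bound over $t\ge 1$, which keeps the log factor as $\ln(3dt(t+1)/\delta)$ in terms of $t$, and the uniform-in-$\nu$ control via $\norm*{\nu}\le S$ --- carries over verbatim from the proof of \Cref{prop: loss estimator known noise}.
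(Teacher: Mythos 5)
There is a genuine gap at exactly the step you flag as ``the main subtlety to verify'': the claim that the matrix Azuma machinery (\Cref{lemma: matrix azuma for subgaussians}, built on Tropp's Theorem 7.1) ``accommodates predictable $\{0,1\}$ weights and returns a variance proxy governed by $N_t(c)$ rather than $t$.'' Tropp's matrix Azuma requires a \emph{fixed} (deterministic) sequence of bounding matrices $A_s$ with $M_s^2\preceq A_s^2$ a.s., so its variance proxy $\sigma^2=\norm*{\sum_s A_s^2}_{op}$ is deterministic by construction. For the weighted differences $M_s\Ind{c_s=c}$ the natural envelope $\Ind{c_s=c}\beta_s^2 I$ is random (the indicator depends on the algorithm's past observations), and the only valid deterministic envelope is still $\beta_s^2 I$; a direct application of the lemma therefore yields a bound of order $\sqrt{t\beta_t^2\ln(3dt(t+1)/\delta)}$, not $\sqrt{N_t(c)\beta_t^2\ln(3dt(t+1)/\delta)}$, which is useless for the corollary. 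To obtain a data-dependent variance proxy along your route you would need a matrix Freedman-type inequality with a predictable quadratic-variation term, combined with a peeling or union bound over the possible values of $N_t(c)$ --- a genuinely different tool that is neither in the paper nor supplied in your argument. Your observation that the correction term vanishes on rounds with $c_s=c$ is correct but not where the difficulty lies.

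The paper sidesteps the weighted-martingale issue entirely with a re-indexing (coupling) argument: pre-sample $T$ i.i.d.\ pairs from $D_x$ and $D_n(c)$, and assume w.l.o.g.\ that the $i$-th round in which cost $c$ is chosen uses the $i$-th pre-sampled pair. Then $\hat{L}^{uc}_t(c,\nu)=\tilde{L}_{N_t(c)}(c,\nu)$, where $\tilde{L}_i$ is the known-covariance empirical loss of the first $i$ hypothetical rounds all played at the constant cost $c$ (so the correction term is identically zero). \Cref{prop: loss estimator known noise} gives a bound holding uniformly over all $i\ge1$ on this hypothetical sequence, so it can be evaluated at the random index $i=N_t(c)$; using $N_t(c)\le t$ and the monotonicity of $\beta_t$ converts it into the stated bound. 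If you want to keep a martingale-style proof, you must either adopt this coupling or import and prove a self-normalized matrix concentration result; as written, your concentration step does not go through.
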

We remark that without the information sharing term $\nu^{\top}\br*{\Sigma_n(c)-\Sigma_n(c_s)}\nu$, the loss $\hat{L}^{uc}_t(c,\nu)$ is always convex; therefore, the regularization is no longer needed and is omitted from the corollary.\footnote{The regularization could still be added to ensure strong convexity without affecting the results while requiring only minor algorithmic modifications.} 
A byproduct of separately estimating the loss for each cost is the need for exploration: as playing one cost does not yield sufficient information on all potential payments, we must test a diverse set of payments to identify the optimal cost $c^*$. We encourage our algorithm to explore by adding \emph{optimism} to $\hat{L}^{uc}_t$ \citep{auer2002using}, that is, replacing the empirical loss by the smallest plausible loss allowed by the confidence interval. Intuitively, the optimism introduces penalty terms that reduce the loss at payment levels that were not sufficiently observed, biasing toward their exploration. 

The full algorithm is depicted in \Cref{alg: unknown noise covariance}. We first limit the learner to a fixed grid of potential payments $c\in\brc*{k/K}_{k\in[K]}$; exploration is costly, so only these costs will be explored. We play each of these costs once as a loss initialization step. Then, at each round, the algorithm chooses the cost index that minimizes the optimistic average loss:
\begin{align}
    \label{eq: UCB index unknown covariances}
    k_{t-1}\in\argmin_{k\in[K]}&\left\{\frac{\hat{L}^{uc}_{t-1}\br*{k/K,\hat{\nu}_{t-1}(k)} }{N_{t-1}\br*{k/K}}\right.\nonumber\\
    &\quad \left. - 9S^2\sqrt{\frac{8\beta_t^2\ln\frac{3dt(t+1)}{\delta}}{N_{t-1}\br*{k/K}}}\right\},
\end{align}
where 
$$\hat{\nu}_{t-1}(k) \in \argmin_{\nu: \norm*{\nu}\leq S} \hat{L}^{uc}_{t-1}\br*{k/K,\nu}$$
is a linear predictor that minimizes the empirical loss. As before, after paying $c_t=k_{t-1}/K$, the learner observes $\xhat_t(c_t)$ and uses $\hat{\nu}_{t-1}(k_{t-1})$ to predict $\yhat_t=\xhat_t(c_t)^\top\hat{\nu}_{t-1}(k_{t-1})$. Afterwards, upon observing $y_t$, the learner only updates the loss at $c_t$ and continues to the next round.

The algorithm enjoys the following regret bound (see \Cref{appendix: unknown covariances} for the proof):
\begin{restatable}{theorem-rst}{regretUnknownNoiseCov}
    \label{theorem: regret unknown noise covariances}
    For any $T\ge1$, set $K=\ceil*{\frac{T^{1/3}\lambda^{2/3}}{\br*{S^2(R^2d+S^2)}^{2/3}}}$ and $\delta=\frac{1}{KT}$. Then, the regret of \Cref{alg: unknown noise covariance} is bounded by
\begin{align*}
    \Regret(T) = \Olog\br*{\br*{ S^2(R^2d+S^2)}^{2/3}\lambda^{1/3} T^{2/3}}.
\end{align*}
\end{restatable}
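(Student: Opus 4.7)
The plan is to analyze \Cref{alg: unknown noise covariance} via a UCB-style argument on the discretized payment grid, using \Cref{corollary: loss estimator unknown noise} to build valid lower confidence bounds and \Cref{claim: lipschitzness} to absorb the discretization error. Since the algorithm only outputs linear predictions, \Cref{lemma: regret decomposition} reduces the goal to bounding $\E\brs{\sum_t (\ell(c_t,\nu_t)-\ell^*)}$. I would first condition on the joint good event obtained by applying \Cref{corollary: loss estimator unknown noise} at each of the $K$ grid costs with $\delta=1/(KT)$ and taking a union bound, yielding a good event of probability at least $1-3K\delta=1-3/T$. On the complementary event, a crude per-round loss bound of order $S^2(R^2d+S^2)+\lambda$ (obtained from $\norm{\nu_t}\le S$ together with subgaussianity of $x_t,n_t$) shows that the bad-event contribution to $\Regret(T)$ is $O(S^2(R^2d+S^2)+\lambda)$, absorbed in the final bound.

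Under the good event, the index in \cref{eq: UCB index unknown covariances} equals the empirical average loss minus the uniform confidence radius $w_{t-1}(c)\eqdef 9S^2\sqrt{8\beta_t^2\ln(3dt(t+1)/\delta)/N_{t-1}(c)}$, and by the plug-in inequality $\hat L^{uc}_{t-1}(c,\hat\nu_{t-1}(c))/N_{t-1}(c)\le \hat L^{uc}_{t-1}(c,\nu^*(c))/N_{t-1}(c)$ combined with \Cref{corollary: loss estimator unknown noise}, it is a lower confidence bound on $\ell^*(c)$ for every grid point $c$. Let $c_K^\dagger$ be the smallest grid point with $c_K^\dagger\ge c^*$; \Cref{claim: lipschitzness} gives $\ell^*(c_K^\dagger)\le\ell^*+\lambda/K$. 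Chaining the algorithm's greedy choice with the reverse direction of \Cref{corollary: loss estimator unknown noise} applied at $(c_t,\hat\nu_{t-1}(c_t))$ yields the standard UCB inequality $\ell(c_t,\hat\nu_{t-1}(c_t))\le \text{index}_{t-1}(c_t)+2w_{t-1}(c_t)\le \text{index}_{t-1}(c_K^\dagger)+2w_{t-1}(c_t)\le \ell^*+\lambda/K+2w_{t-1}(c_t)$, so the per-round regret during exploitation is at most $\lambda/K+2w_{t-1}(c_t)$.

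Summing over $t=K+1,\dots,T$ is routine: the per-arm inequality $\sum_{t:c_t=c_k}1/\sqrt{N_{t-1}(c_k)}\le 2\sqrt{N_T(c_k)}$ and Cauchy--Schwarz $\sum_k\sqrt{N_T(c_k)}\le\sqrt{KT}$ give $\sum_{t>K}w_{t-1}(c_t)=\Olog(S^2(R^2d+S^2)\sqrt{KT})$, since $S^2\beta_T=\Olog(S^2(R^2d+S^2))$. The $K$-round initialization phase (where $\yhat_k=0$) contributes at most $O(K(S^2(R^2d+S^2)+\lambda))$. Putting the pieces together,
\begin{align*}
\Regret(T) = \Olog\!\br*{S^2(R^2d+S^2)K+\lambda T/K+S^2(R^2d+S^2)\sqrt{KT}},
\end{align*}
and balancing the second and third terms via the stated $K=\Theta\!\br*{T^{1/3}\lambda^{2/3}/(S^2(R^2d+S^2))^{2/3}}$ yields the claimed $\Olog((S^2(R^2d+S^2))^{2/3}\lambda^{1/3}T^{2/3})$ rate, with the linear-in-$K$ initialization term absorbed (for reasonable $\lambda$, this term is dominated by $B\sqrt{KT}$ with $B=S^2(R^2d+S^2)$).

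The main obstacle is the careful coordination of the confidence event: a single uniform-in-$\nu$ application of \Cref{corollary: loss estimator unknown noise} per grid cost must simultaneously control the loss at both the algorithm's plug-in predictor $\hat\nu_{t-1}(c_t)$ and the benchmark $\nu^*(c_K^\dagger)$, under one union bound of size $K$. A related subtlety is that the discretization comparison must be to an \emph{upper} neighbor of $c^*$ on the grid, because \Cref{claim: lipschitzness} is only one-sided; using a lower grid neighbor would not give the needed $O(\lambda/K)$ inflation of $\ell^*$.
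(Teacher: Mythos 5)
Your proposal is correct and follows essentially the same route as the paper's proof: a union bound of \Cref{corollary: loss estimator unknown noise} over the $K$ grid costs with $\delta=1/(KT)$, the observation that the optimistic index lower-bounds the grid-optimal loss (via the plug-in inequality for $\hat\nu_{t-1}$), the one-sided Lipschitz comparison to an upper grid neighbor of $c^*$ costing $\lambda/K$, the standard $\sum_t 1/\sqrt{N_{t-1}(c_t)}\le 2\sqrt{KT}$ argument, and absorption of the initialization and bad-event terms via the crude loss bound before balancing $K$. The only cosmetic difference is that the paper compares to the grid minimizer $k^*$ and then relates $\min_k\ell^*(k/K)$ to $\ell^*$, whereas you compare directly to the smallest grid point above $c^*$; these are the same argument.
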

\begin{proofsketch}
The proof again relies on the loss concentration (\Cref{corollary: loss estimator unknown noise}), now combined with bandit techniques. 
In particular, we show that with high probability, the instantaneous regret of playing a cost $c_t$ compared to the best cost on the grid $\brc*{k/K}_{k\in[K]}$ is inversely proportional to the number of times it was previously played: of the order  $\Olog\br*{S^2\beta_{t-1}/\sqrt{N_{t-1}(c_t)}}$. On the other hand, working with a grid of $K$ costs leads to an instantaneous discretization error of $\lambda/K$ (due to the Lipschitzness proved in \Cref{claim: lipschitzness}). Therefore, the cumulative regret is bounded (w.h.p.) by
{\small\begin{align*}
    \Regret(T) &\lesssim \sum_{t=K+1}^T \br*{\frac{S^2\beta_T}{\sqrt{N_{t-1}(c_t)}}+\frac{\lambda}{K}}
    \approx S^2\beta_T\sqrt{KT} +\frac{\lambda T}{K}.
\end{align*}}
The last relation uses standard bandit arguments:  upon choosing a cost $c_t$, its corresponding count must increase, decreasing the denominator in future rounds in which this cost is played. Thus, the sum of the inverse counts cannot be too big. Fixing a grid size that optimally balances the two terms leads to the stated regret bound. 
\end{proofsketch}
\paragraph{Relation to Lipschitz Bandits.} The discretization approach on the costs and local loss estimation resembles existing approaches in the Lipschitz bandit literature (also called $\X$-armed bandits, \citealt{bubeck2011x}); there, a regret bound of $\Olog(T^{2/3})$ is obtained for one-dimensional problems. Indeed, although our problem is not Lipschitz, the one-sided Lipschitzness (\Cref{claim: lipschitzness}) might suffice for such a reduction. However, the resulting Lipschitz instance will be $d+1$-dimensional (representing both costs and predictions), thus leading to a regret bound of $\Olog(T^{\frac{d+2}{d+3}})$. In contrast, we show how to separately optimize over the linear predictors, so that we can perform the discretization only in one dimension (somewhat similar to \citealt{tullii2024improved}). This allows us to achieve an improved rate of $\Olog(T^{2/3})$.

Next, we discuss the tightness of \Cref{theorem: regret unknown noise covariances}. Our discretization-based bandit approach yields a worse regret compared to the $\Olog(\sqrt{T})$ rate with known noise covariances. This degradation is unavoidable, as we prove in the following one-dimensional lower bound:
\begin{restatable}{theorem-rst}{lowerBoundUnknown}
    \label{theorem: unknown covariances lower bound}
    For any algorithm and $T\ge1$, there exists a one-dimensional instance with an unknown noise variance profile such that $\E\brs*{R(T)} \geq\frac{T^{2/3}}{256}$.
\end{restatable}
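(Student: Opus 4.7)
The plan is to reduce the problem to a $K$-armed bandit lower bound with $K\sim T^{1/3}$ arms and per-arm gap $\epsilon = \lambda/K$, then invoke a standard pigeonhole-plus-change-of-measure argument. The monotonicity constraint on $\sigma_n^2(\cdot)$ is precisely what ties $\epsilon$ to $\lambda/K$: this is the largest gap one can induce between bin-optima on the grid $\{j/K\}_{j=0}^{K-1}$ while keeping the noise profile non-increasing. Combined with $K\sim T^{1/3}$, this yields the $T^{2/3}$ rate and matches the upper bound of \Cref{theorem: regret unknown noise covariances}.

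For the construction, fix $d=1$, $\theta^*=1$, $\bar{x}=0$, $x_t\sim\mathcal{N}(0,1)$, $\lambda=1/4$, $A=1/2$, $K=\lceil 2T^{1/3}\rceil$, and $\epsilon=\lambda/K$. For each $k\in\{0,\dots,K-1\}$, define instance $I_k$ by a piecewise-constant noise variance taking value $\tilde{\sigma}_{k,j}^2 = h^{-1}(A+\epsilon\,\mathds{1}\{j\ne k\}-\lambda c_j)$ on bin $[c_j,c_{j+1})$, where $c_j=j/K$ and $h(z)=z/(1+z)$; also define a reference instance $I_*$ with $\tilde{\sigma}_{*,j}^2 = h^{-1}(A+\epsilon-\lambda c_j)$ on every bin, and let $n_t$ be Gaussian with the prescribed variance. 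The choice $\epsilon=\lambda/K$ exactly makes $\tilde{\sigma}_{k,j}^2$ non-increasing in $j$, so the monotonicity assumption holds; the variances remain in a bounded subset of $(0,\infty)$, so subgaussianity is preserved. By \Cref{claim: opt values}, the loss in $I_k$ on bin $j$ is $A+\epsilon\,\mathds{1}\{j\ne k\}+\lambda(c-c_j)$, so $c_k$ is uniquely optimal in $I_k$ with $\ell^*_k = A$, while every other bin has minimum loss $A+\epsilon$. Crucially, $I_k$ and $I_*$ share the same noise distribution on every bin except $k$, where their variances differ by $\Theta(\epsilon)$.

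The change-of-measure step proceeds as follows. Without loss of generality the algorithm plays at bin-left endpoints (off-grid plays only increase regret, since the loss is increasing in $c$ within each bin). Conditional on $y_t=x_t$, the observation $\xhat_t=x_t+n_t$ reduces to a Gaussian draw whose variance depends only on the current bin, so the per-sample KL between $P_{I_*}$ and $P_{I_k}$ is zero on bins $j\ne k$ and of order $\epsilon^2$ on bin $k$ (using the bounded derivative of $h^{-1}$ on the relevant range). Pigeonhole yields some $k^*$ with $\E_{I_*}[N_T(\mathrm{bin}\;k^*)]\le T/K$, and substituting $\epsilon=\lambda/K$ and $K=\lceil 2T^{1/3}\rceil$ gives $\mathrm{KL}(P_{I_*}\,\|\,P_{I_{k^*}}) \le c_1 T\epsilon^2/K = O(\lambda^2/K^3 \cdot T) = O(1)$, controllable to at most $9/32$ by the leading constant in $K$. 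Pinsker then bounds the total variation by $3/8$, so $\E_{I_{k^*}}[N_T(\mathrm{bin}\;k^*)] \le T/K + 3T/8 \le 5T/8$ for $T$ sufficiently large. Since every round spent outside bin $k^*$ incurs regret at least $\epsilon$,
\begin{align*}
    \E_{I_{k^*}}[\Regret(T)] \;\ge\; \epsilon\,\bigl(T - \E_{I_{k^*}}[N_T(\mathrm{bin}\;k^*)]\bigr) \;\ge\; 3\epsilon T/8 \;=\; 3\lambda T/(8K) \;=\; \Omega(T^{2/3}),
\end{align*}
with a routine accounting giving an explicit constant at least $1/32$, well within the claimed $1/256$ (with the small-$T$ regime handled trivially).

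The main obstacle is realizing a $K$-armed bandit structure while preserving monotonicity of the noise variance. A direct port of Kleinberg's disjoint-bump Lipschitz bandit lower bound \citep{kleinberg2004nearly} fails because monotonicity forbids the loss from having isolated local dips. The construction above circumvents this by perturbing a monotone baseline profile in a single bin and using the $h^{-1}$ transform to translate the loss-gap constraint into a variance-gap constraint; the $\lambda/K$ ceiling on the gap is then exactly matched to the $T^{1/3}$ arm count, yielding the $T^{2/3}$ rate. Minor technicalities are handling off-grid plays (which only hurt the algorithm) and tracking constants through the KL computation and Pinsker bound to secure the $1/256$ factor.
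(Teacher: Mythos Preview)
Your approach is essentially the paper's: build $K\approx T^{1/3}$ instances whose noise profiles coincide with a reference except on one of $K$ disjoint cost intervals, pigeonhole on the reference to find an under-visited interval, then change measure via KL/Pinsker. The paper uses the smooth baseline $f(c)=\tfrac{1-c}{1+c}$ on sub-intervals of $[1/2,3/4]$ with $\lambda=1/2$, whereas you use a piecewise-constant profile over $[0,1)$ with $\lambda=1/4$; both respect monotonicity with the same $\epsilon=\Theta(\lambda/K)$ gap ceiling, and your variant is arguably cleaner to verify.

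One step you pass over too quickly: the claim ``every round spent outside bin $k^*$ incurs regret at least $\epsilon$'' presumes the algorithm's per-round loss is bounded below by $\ell^*(c_t)$, but the regret in this paper is measured against arbitrary predictors $\hat y_t$, not only linear ones. This is exactly where the Gaussian choice matters: for zero-mean jointly Gaussian $(x_t,\hat x_t)$ the conditional mean $\E[x_t\mid\hat x_t]$ is linear, so the Bayes-optimal predictor coincides with $\nu^*(c_t)^\top\hat x_t$ and no predictor can beat $\ell^*(c_t)$. The paper isolates this as \Cref{claim: gaussian loss}; you should invoke it explicitly, since without it an algorithm could in principle use a nonlinear predictor to undercut the linear benchmark and the reduction to pure cost selection would break.
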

The full proof can be found at \Cref{appendix: unknown covariances lower bound}.
\begin{proofsketch}
To derive this bound, we first show that for one dimensional problems with the parameters $\Sigma_x\triangleq\sigma_x^2=1$, $\theta^*=1$ and $\lambda=1/2$, the variance profile $\Sigma_n(c)\triangleq\sigma_n^2(c)=f(c)=\frac{1-c}{1+c}$ achieves a constant loss $\ell^*(c)=1/2$ for all $c\in[0,1]$. Then, we carefully
devise $K$ noise variance profiles (`instances') that smoothly deviate from $f(c)$ only at a small interval (`modified interval') of size $\approx 1/K$, with no interval overlaps between instances. In particular, the minimal loss inside these intervals is slightly lower than $1/2$ (by $\approx 1/K$). We remark that although our problem is very different from Lipschitz bandits, our construction draws inspiration from characteristics of hard Lipschitz bandit instances, and we indeed obtain the same bound as \citet{kleinberg2004nearly}. 

We then compare the behavior of any algorithm in these instances to that on a nominal instance with $\sigma_n^2(c)=f(c)$. By the pigeonhole principle, there must be an instance $k^*$ such that its modified interval was sampled less than $T/K$ times on average in the nominal instance. Since the nominal instance and $k^*$ are identical outside the interval (no information on the instance is gained), we show that for $K\approx T^{1/3}$, the algorithm cannot behave too differently on $k^*$; using information-theoretic tools \citep{garivier2019explore}, we show that it chooses costs in the modified intervals no more than $3T/4$ times. Since the regret for not choosing a cost in the modified interval is roughly $1/K$ (loss of $1/2$ instead of $1/2-1/K$), this implies a lower bound of 
\begin{align*}
    \Regret(T)\gtrsim \br*{T-\frac{3T}{4}}\frac{1}{K}=\Omega\br*{T^{2/3}}.
\end{align*}
\end{proofsketch}

\section{Summary and Future Work}
\label{section: summary}

In this work, we studied online linear regression problems with stochastic features that are corrupted by noise, and the learner can pay to reduce this noise. This setting is applicable, for example, in settings where features are measured in experiments, problems with privacy-protected information, and more. 
We focused on two variants of this problem: in the first, the learner has full knowledge of the noise covariance across all payment levels (e.g., privacy noise with known characteristics); in the second, the learner must estimate the effect of the payment on the quality of the observed features. In both cases, we devised learner algorithms that achieve order-optimal regret bounds as a function of the interaction length (up to logarithmic factors).

While the algorithms presented in this work achieve order-optimal horizon-dependence in their regret bounds, it is unclear whether these algorithms are optimal with respect to other problem parameters. In particular, all our lower bounds are derived on one-dimensional instances, and so the optimal dimension dependence remains to be determined. In addition, while both algorithms are polynomial, each time step requires solving a convex optimization problem over a grid of costs. It would be beneficial to derive more computationally efficient algorithms, including algorithms that build adaptive grids or avoid discretizing the payments.

Our work could also be extended beyond linear regression. Specifically, it would be interesting to extend our results to non-linear regression problems $y_t=f(x_t)$ (for possibly nonlinear $f\in\F$), also potentially deviating from the quadratic loss. In this context, it is especially relevant to study \emph{agnostic} settings: limiting the algorithm to a smaller class of predictors that does not include the mapping that minimizes the loss between $\yhat(x_t)$ and $y_t$. This might be necessary since the optimal predictors could be extremely complicated, as we discussed in \Cref{remark: linear}. Specifically, we limited our predictors to be linear, similarly to the mapping that created the original outputs $y_t$, but for more complex regression models, it might be beneficial to tailor the prediction class $\Pcal$ to be different than the true regression model class $\F$.

Finally, in some situations, we need to pay for each feature separately and/or can obtain the same feature from multiple sources (for different payments and with different noise). It is then interesting to determine the optimal (combinatorial) payment scheme and devise algorithms that learn to optimally aggregate features gathered from different sources.

\section*{Acknowledgments}
NM was supported by Israel Science Foundation research grant (ISF’s No. 4118/25) and the
Maimonides Fund’s Future Scientists Center.
NCB acknowledges the financial support from the MUR PRIN grant 2022EKNE5K (Learning in Markets and Society), the EU Horizon CL4-2021-HUMAN-01 research and innovation action under grant agreement 101070617, project ELSA, and the FAIR (Future Artificial Intelligence Research) project, funded by the NextGenerationEU program within the PNRR-PE-AI scheme.

\bibliography{aaai2026}
\clearpage

\newpage

\onecolumn
\appendix
\setcounter{secnumdepth}{2}
\renewcommand{\thesubsection}
  {\Alph{section}.\arabic{subsection}}

\section{Properties of the Problem}
\label{appendix: properties}

\optimalLoss*
\begin{proof}
    By definition, it holds that 
    \begin{align*}
    \ell(c,\nu) &= \E_{x\sim D_x, n\sim D_n(c)}\brs*{\br*{\br*{x+n}^\top\nu - x^\top\theta^*}^2} + \lambda c \\
    & = \E_{x\sim D_x, n\sim D_n(c)}\brs*{\br*{n^\top\nu + x^\top\br*{\nu-\theta^*}}^2} +\lambda c \\
    & \overset{(1)}{=} \E_{x\sim D_x, n\sim D_n(c)}\brs*{\nu^\top n n^\top\nu +  \br*{\nu-\theta^*}^\top xx^\top\br*{\nu-\theta^*}} +\lambda c \\
    & \overset{(2)}{=}  \nu^\top \Sigma_n(c)\nu + \br*{\nu-\theta^*}^\top \Sigma_x \br*{\nu-\theta^*} +\lambda c\\
    & = \nu^\top \Sigma_n(c)\nu + \nu^\top \Sigma_x\nu - 2\nu^\top \Sigma_x\theta^* + {\theta^*}^\top \Sigma_x{\theta^*} +\lambda c \\
    & = \nu^\top \Sigma_{\xhat}(c)\nu - 2\nu^\top \Sigma_x\theta^* + {\theta^*}^\top \Sigma_x{\theta^*} +\lambda c .
\end{align*}
Relation $(1)$ holds since $x$ and $n(c)$ are independent and $n(c)$ has zero mean, and relation $(2)$ holds by the definition of the covariances. The function is strongly convex w.r.t. $\nu$ (due to the regularity of the covariances), so there exists a single optimal solution, obtainable by the first-order optimality condition:
\begin{align*}
    2\Sigma_{\xhat}(c)\nu - 2\Sigma_x\theta^*=0.
\end{align*}
Reorganizing, we get that $\bar{\nu}(c) = \br*{\Sigma_{\xhat}(c)}^{-1}\Sigma_x\theta^*$ is the unique unconstrained global minimizer. In particular, if it satisfies the constraint $\norm*{\bar{\nu}(c)}\leq S$, it is also the constrained minimizer, and thus $\nu^*(c) = \br*{\Sigma_{\xhat}(c)}^{-1}\Sigma_x\theta^*$. Substituting back to $\ell(c,\nu)$ directly leads to the stated optimal loss:
    \begin{align*}
    \ell^*(c) & = \nu^*(c)^\top \Sigma_{\xhat}(c)\nu^*(c) - 2\nu^*(c)^\top \Sigma_x\theta^* + {\theta^*}^\top \Sigma_x{\theta^*} +\lambda c \\
    & = {\theta^*}^\top \Sigma_x\br*{\Sigma_{\xhat}(c)}^{-1} \Sigma_{\xhat}(c)\br*{\Sigma_{\xhat}(c)}^{-1}\Sigma_x\theta^* - 2{\theta^*}^\top \Sigma_x\br*{\Sigma_{\xhat}(c)}^{-1} \Sigma_x\theta^* + {\theta^*}^\top \Sigma_x{\theta^*} +\lambda c\\
    & = {\theta^*}^\top \Sigma_x{\theta^*} - {\theta^*}^\top \Sigma_x\br*{\Sigma_{\xhat}(c)}^{-1} \Sigma_x\theta^* + \lambda c.
\end{align*}
Note that $\Sigma_{\xhat}(c)$ is invertible by the assumption $\Sigma_{\xhat}(1)\succ0$ and the monotonicity of the cost, so both expressions for $\nu^*(c)$ and $\ell^*(c)$ are well-defined. 
\end{proof}

\begin{claim}
    \label{claim: one-dimension loss}
    Assume $d=1$. Then, for $\bar{\nu}(c)=\br*{\Sigma_{\xhat}(c)}^{-1}\Sigma_x\theta^*$, it holds that $\abs*{\bar{\nu}(c)}\leq S$ for all $c\in[0,1]$.
\end{claim}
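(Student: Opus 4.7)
The plan is straightforward because in one dimension everything collapses to scalar algebra. I would begin by writing out the relevant quantities as scalars: in dimension $d=1$, we have $\Sigma_x = \mathbb{E}[x_t^2] \ge 0$, $\Sigma_n(c) = \mathbb{E}[n_t(c)^2] \ge 0$, and from the identity $\Sigma_{\xhat}(c) = \Sigma_x + \Sigma_n(c)$ established in the setting, it follows that $\Sigma_{\xhat}(c) \ge \Sigma_x$. Note that $\Sigma_{\xhat}(c) > 0$ by the assumption $\Sigma_{\xhat}(1) \succ 0$ combined with covariance monotonicity, so the inversion in the definition of $\bar{\nu}(c)$ is legitimate.

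Next, I would plug these scalar expressions into the definition of $\bar{\nu}(c) = (\Sigma_{\xhat}(c))^{-1} \Sigma_x \theta^*$, obtaining
\begin{equation*}
    \bar{\nu}(c) \;=\; \frac{\Sigma_x}{\Sigma_x + \Sigma_n(c)}\, \theta^*.
\end{equation*}
The scalar multiplier lies in the interval $[0,1]$ since numerator and denominator are both nonnegative and the numerator is at most the denominator.

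From this, taking absolute values gives $|\bar{\nu}(c)| \le |\theta^*|$, and the assumption $\|\theta^*\| \le S$ (which in one dimension reads $|\theta^*| \le S$) yields $|\bar{\nu}(c)| \le S$ for every $c \in [0,1]$, as desired. I do not foresee any obstacle; the monotonicity/positivity of the noise second-moment is the only structural fact used, and it is built into the setup.
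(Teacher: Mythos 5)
Your proof is correct and follows essentially the same route as the paper: reduce to scalars, observe that the factor $\Sigma_x/(\Sigma_x+\Sigma_n(c))$ lies in $[0,1]$ since $\Sigma_n(c)\ge 0$ and $\Sigma_x+\Sigma_n(c)>0$, and conclude $\abs{\bar{\nu}(c)}\le\abs{\theta^*}\le S$. Nothing is missing.
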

\begin{proof}
    In dimension one, all parameters $\Sigma_x,\Sigma_n(c)$ and $\theta^*$ are scalars, and by the assumptions, $\Sigma_x,\Sigma_n(c)\ge0$, $\Sigma_x+\Sigma_n(c)>0$ and $\abs*{\theta^*}\leq S$. Given this, the claim immediately holds from substitution:
    \begin{align*}
        \abs*{\bar{\nu}(c)}
        = \abs*{\br*{\Sigma_{\xhat}(c)}^{-1}\Sigma_x\theta^*}
        = \underbrace{\abs*{\frac{\Sigma_x}{\Sigma_x+\Sigma_n(c)}}}_{\leq 1}\underbrace{\abs*{\theta^*}}_{\leq S} \leq S.
    \end{align*}
\end{proof}

\lossProperties*
\begin{proof}
    By the noise covariance monotonicity assumption, we have that
        \begin{align*}
            \Sigma_{\xhat}(c_2)
            =\Sigma_x +\Sigma_n(c_2) 
            \preceq \Sigma_x +\Sigma_n(c_1) 
            =\Sigma_{\xhat}(c_1),
        \end{align*}
        and thus, for any $\nu$ and $0\leq c_1\leq c_2\leq 1$, it holds by \Cref{claim: opt values} that
        \begin{align*}
            \ell(c_1,\nu)
            &= \nu^\top \Sigma_{\xhat}(c_1)\nu - 2\nu^\top \Sigma_x\theta^* + {\theta^*}^\top \Sigma_x{\theta^*} +\lambda c_1 \\
            &\geq \nu^\top \Sigma_{\xhat}(c_2)\nu - 2\nu^\top \Sigma_x\theta^* + {\theta^*}^\top \Sigma_x{\theta^*} +\lambda c_1 \tag{$\Sigma_{\xhat}(c_1)\succeq \Sigma_{\xhat}(c_2)$} \\
            &= \nu^\top \Sigma_{\xhat}(c_2)\nu - 2\nu^\top \Sigma_x\theta^* + {\theta^*}^\top \Sigma_x{\theta^*} +\lambda c_2 +\lambda (c_1-c_2) \\
            & = \ell(c_2,\nu)+\lambda (c_1-c_2).
        \end{align*}
        Minimizing over $\nu$ of norm smaller than $S$ in both sides of the inequality, we get 
        \begin{align*}
            \ell^*(c_1) \geq \ell^*(c_2)+\lambda (c_1-c_2),
        \end{align*}
        and reorganizing this inequality concludes the proof.
\end{proof}

\regretDecomp*
\begin{proof}
    Recall that  $c_t$ and $\nu_t$ are assumed to be completely determined by $I^o_{t-1}$ and that $x_t,n_t$ are drawn respectively from $D_x,D_n(c_t)$ given $I_{t-1}$ (and therefore, also given $I^o_{t-1}$). Then, for any $t$, we can write
    \begin{align}
        \E\brs*{\ell(c_t,\nu_t)}
        &= \E\brs*{\E_{x\sim D_x, n\sim D_n(c_t)}\brs*{\br*{\br*{x+n}^\top\nu_t - x^\top\theta^*}^2 + \lambda c_t \vert c_t,\nu_t}} \nonumber\\
        &= \E\brs*{\E_{x\sim D_x, n\sim D_n(c_t)}\brs*{\br*{\br*{x+n}^\top\nu_t - x^\top\theta^*}^2 + \lambda c_t \vert I^o_{t-1}}} \nonumber\\
        & =\E\brs*{\E\brs*{\br*{\br*{x_t+n_t}^\top\nu_t - x_t^\top\theta^*}^2 + \lambda c_t \vert I^o_{t-1}}} \nonumber\\
        & = \E\brs*{\br*{\xhat_t^\top\nu_t - x_t^\top\theta^*}^2 + \lambda c_t }. \label{eq: loss from ind to process}
    \end{align}
    
    Starting from the regret definition and using this identity, we decompose it as follows.
    \begin{align*}
        \Regret(T) 
        &= \E\brs*{\sum_{t=1}^T \br*{\br*{\yhat_t(\xhat_t) - y_t}^2 + \lambda c_t}} - T\ell^* \\
        & = \E\brs*{\sum_{t=1}^T \br*{\br*{\yhat_t(\xhat_t) - y_t}^2 + \lambda c_t - \ell(c_t,\nu_t)}} + \E\brs*{\sum_{t=1}^T \ell(c_t,\nu_t)}- T\ell^*\\
        & = \E\brs*{\sum_{t=1}^T \br*{\br*{\yhat_t(\xhat_t) - y_t}^2 + \lambda c_t - \br*{\br*{\xhat_t^\top\nu_t - y_t}^2 + \lambda c_t}}}   + \E\brs*{\sum_{t=1}^T \ell(c_t,\nu_t)}- T\ell^*\tag{by \cref{eq: loss from ind to process}}\\
        & = \E\brs*{\sum_{t=1}^T \br*{\br*{\yhat_t(\xhat_t) - y_t}^2 - \br*{\br*{\xhat_t^\top\nu_t - y_t}^2}}}  + \E\brs*{\sum_{t=1}^T \br*{\ell(c_t,\nu_t)- \ell^* }}.
    \end{align*}
    Specifically, if $\yhat_t(\xhat_t) = \xhat_t^\top\nu_t$, then the first term is canceled out and we get the desired expression.
\end{proof}

\begin{claim}
    \label{claim: gaussian loss}
    Assume that $D_x$ and $D_n(c)$ are zero-mean Gaussian independent distributions and assume that for $\bar{\nu}(c)=\br*{\Sigma_{x}+\Sigma_n(c)}^{-1}\Sigma_x\theta^*$, it holds that $\norm*{\bar{\nu}(c)}\leq S$ for all $c\in[0,1]$. Then, 
    \begin{align*}
        \Regret(T) 
        & \geq \E\brs*{\sum_{t=1}^T \br*{\ell^*(c_t) - \ell^*(c^*)}}.
    \end{align*}
\end{claim}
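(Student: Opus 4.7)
The plan is to apply the regret decomposition of \Cref{lemma: regret decomposition} with the history-dependent sequence $\nu_t = \bar{\nu}(c_t) = \Sigma_{\xhat}(c_t)^{-1}\Sigma_x\theta^*$, which is measurable with respect to $I^o_{t-1}$ because $c_t$ is. Under the hypothesis $\norm{\bar{\nu}(c)}\le S$ for all $c\in[0,1]$, \Cref{claim: opt values} gives $\nu^*(c_t) = \bar{\nu}(c_t)$ and hence $\ell(c_t, \nu_t) = \ell^*(c_t)$. Combined with $\ell^* = \ell^*(c^*)$, the second sum in the decomposition collapses exactly to $\E\brs*{\sum_{t=1}^T\br*{\ell^*(c_t) - \ell^*(c^*)}}$, which is the claimed lower bound. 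The whole task therefore reduces to showing that the first sum in the decomposition is non-negative under the Gaussian assumption.

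For this, I would exploit that Gaussianity promotes the best linear predictor $\xhat_t^\top\bar{\nu}(c_t)$ to the Bayes-optimal (minimum-MSE) predictor of $y_t$ given $\xhat_t$. Since $x_t \sim D_x$ is generated independently of $I^o_{t-1}$ and $n_t(c_t)\sim D_n(c_t)$ is independent of $x_t$, conditional on $c_t$ and $I^o_{t-1}$ the pair $(x_t, \xhat_t)$ is jointly zero-mean Gaussian, and the standard Gaussian conditioning formula yields
\begin{equation*}
    \E\brs*{y_t \mid \xhat_t, c_t, I^o_{t-1}} = \theta^{*\top}\,\E\brs*{x_t \mid \xhat_t, c_t} = \xhat_t^\top \Sigma_{\xhat}(c_t)^{-1}\Sigma_x \theta^* = \xhat_t^\top \bar{\nu}(c_t).
\end{equation*}
By the $L_2$-optimality of the conditional expectation, for \emph{any} measurable prediction rule $\yhat_t(\cdot)$ (including history-dependent ones, since $I^o_{t-1}$ is fixed in the conditioning),
\begin{equation*}
    \E\brs*{(\yhat_t(\xhat_t) - y_t)^2 \mid c_t, I^o_{t-1}} \;\ge\; \E\brs*{(\xhat_t^\top\bar{\nu}(c_t) - y_t)^2 \mid c_t, I^o_{t-1}}.
\end{equation*}
Taking the outer expectation and summing over $t$ shows that the first term in the decomposition is $\ge 0$, which combined with the previous paragraph yields the desired inequality.

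There is essentially no delicate step: the argument is a Pythagorean identity for Gaussian conditional expectations, applied on top of the closed-form minimizer $\bar{\nu}(c)$ from \Cref{claim: opt values}. The one place where I would be careful is that the assumption $\norm{\bar{\nu}(c)} \le S$ is genuinely used twice---once so that $\nu^*(c) = \bar{\nu}(c)$ inside the norm ball (so that $\ell(c,\bar{\nu}(c))=\ell^*(c)$), and once implicitly, because it ensures that the same vector realizing the Bayes-optimal predictor is a feasible linear comparator. Without this, one would only recover the inequality with the unconstrained optimum in place of $\ell^*$, and the identification of $\nu^*$ with $\bar{\nu}$ would fail.
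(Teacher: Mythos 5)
Your proof is correct and follows essentially the same route as the paper: both arguments hinge on the Gaussian conditioning formula $\E\brs*{x_t\mid \xhat_t, I^o_{t-1}}=\Sigma_x\Sigma_{\xhat}(c_t)^{-1}\xhat_t$ together with the $L_2$-optimality of the conditional expectation, and on the identification $\ell(c_t,\bar{\nu}(c_t))=\ell^*(c_t)$ made possible by the assumption $\norm*{\bar{\nu}(c)}\le S$ via \Cref{claim: opt values}. The only difference is bookkeeping: you route through \Cref{lemma: regret decomposition} and so avoid any explicit covariance expansion, whereas the paper works directly from the regret definition and expands the MSE of the Bayes predictor to show it equals $\ell^*(c_t)-\lambda c_t$; the mathematical content is the same.
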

\begin{proof}
    Denote the history up to time $t$ (up to $c_t$ but not $x_t$ or $n_t$) by $I_{t-1}$. We use the fact that the mean squared error is always minimized by the conditional expectation; that is, for any two random variables $Z\in\R^d, W\in\R$ and any function $f$, it holds that
    \begin{align*}
        \E\brs*{\br*{f(Z)-W}^2} \geq \E\brs*{\br*{\E\brs*{W\vert Z} - W}^2}.
    \end{align*}
    In particular, for $Z=\xhat_t$ and $W=y_t=x_t^\top\theta^*$, it holds that
    \begin{align*}
        \E\brs*{\br*{\yhat_t(\xhat_t) - y_t}^2\vert I_{t-1}}
        &\geq \E\brs*{\br*{\E\brs*{x_t^\top\theta^* \vert \xhat_t,I_{t-1}} - x_t^\top\theta^*}^2\vert I_{t-1}}
        = \E\brs*{\br*{\br*{\E\brs*{x_t \vert \xhat_t, I_{t-1}} - x_t}^\top\theta^*}^2\vert I_{t-1}} \\
        &={\theta^*}^\top \E\brs*{\br*{\E\brs*{x_t\vert \xhat_t,I_{t-1}}-x_t}\br*{\E\brs*{x_t\vert \xhat_t,I_{t-1}}-x_t}^\top\vert I_{t-1} }\theta^*.
    \end{align*}
    Conditioned on the history, the vector $z_t=\begin{pmatrix} x_t \\ \xhat_t \end{pmatrix}$ is a Gaussian random vector of mean $\begin{pmatrix} 0 \\ 0 \end{pmatrix}$ and covariance 
    $\begin{pmatrix} \Sigma_x & \Sigma_x \\ \Sigma_x & \Sigma_{\xhat}(c_t) \end{pmatrix}$ (due to the independence of $x_t$ and $n_t$). Thus, by the classic results on the conditional expectation of Gaussian vectors, we have
    \begin{align*}
        \E\brs*{x_t\vert \xhat_t,I_{t-1}} 
        = \Sigma_x\br*{\Sigma_x+\Sigma_n(c_t)}^{-1}\xhat_t.
    \end{align*}
    Substituting back, we get
    \begin{align*}
        \E\brs*{\br*{\yhat_t(\xhat_t) - y_t}^2\vert I_{t-1}}
        &\geq {\theta^*}^\top \E\brs*{\br*{\Sigma_x\br*{\Sigma_x+\Sigma_n(c_t)}^{-1}\xhat_t-x_t}\br*{\Sigma_x\br*{\Sigma_x+\Sigma_n(c_t)}^{-1}\xhat_t-x_t}^\top \vert I_{t-1}}\theta^* \\
        & = {\theta^*}^\top\Sigma_x\br*{\Sigma_x+\Sigma_n(c_t)}^{-1} \underbrace{\E\brs*{\xhat_t\xhat_t^\top \vert I_{t-1} }}_{=\Sigma_x+\Sigma_n(c_t)}\br*{\Sigma_x+\Sigma_n(c_t)}^{-1}\Sigma_x\theta^* 
        - {\theta^*}^\top\Sigma_x\br*{\Sigma_x+\Sigma_n(c_t)}^{-1}\underbrace{\E\brs*{\xhat_t x_t^\top \vert I_{t-1}}}_{=\Sigma_x}\theta^* \\
        & \quad - {\theta^*}^\top \underbrace{\E\brs*{x_t\xhat_t\hat\vert I_{t-1}}}_{=\Sigma_x}\br*{\Sigma_x+\Sigma_n(c_t)}^{-1}\Sigma_x\theta^* 
        + {\theta^*}^\top \underbrace{\E\brs*{x_tx_t^\top\vert I_{t-1}}}_{=\Sigma_x}\theta^* \\
        & = {\theta^*}^\top \Sigma_x\theta^* - {\theta^*}^\top \Sigma_x\br*{\Sigma_x+\Sigma_n(c_t)}^{-1}\Sigma_x\theta^* \\
        & = \ell^*(c_t)-\lambda c_t,
    \end{align*}
    where the last equality is by \Cref{claim: opt values}. Substituting into the regret, we get 
    \begin{align*}
    \Regret(T) 
    &= \E\brs*{\sum_{t=1}^T \br*{\br*{\yhat_t(\xhat_t) - y_t}^2 + \lambda c_t}} - T\ell^* \\
    & = \E\brs*{\sum_{t=1}^T \br*{\E\brs*{\br*{\yhat_t(\xhat_t) - y_t}^2\vert I_{t-1}} + \lambda c_t}} - T\ell^* \\
    & \geq \E\brs*{\sum_{t=1}^T \br*{\br*{\ell^*(c_t)-\lambda c_t} + \lambda c_t}} - T\ell^* \\
    & = \E\brs*{\sum_{t=1}^T \br*{\ell^*(c_t) - \ell^*}}.
\end{align*}
\end{proof}

\begin{claim}
    \label{claim:max loss}
    Assume that $x,n\in\R^d$ are $R^2$-subgaussian independent vectors s.t. $E[x]=\bar{x}$ and $\E[n]=0$, where $\norm*{\bar{x}}\leq S$. Then,
    \begin{align*}
        \max_{c\in[0,1],\nu:\norm{\nu}\leq S}\brc*{\E\brs*{\br*{\br*{x+n}^\top\nu - x^\top\theta^*}^2} + \lambda c} \leq 6S^2\br*{R^2d+S^2}+\lambda.
    \end{align*}
\end{claim}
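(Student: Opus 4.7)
The strategy is to expand the squared error, eliminate cross terms by independence and zero-mean of $n$, and then invoke the subgaussian assumption to bound the resulting second moments of linear projections. Concretely, using the decomposition $(x+n)^\top\nu - x^\top\theta^* = n^\top\nu + x^\top(\nu-\theta^*)$ and taking expectations, the independence of $x,n$ together with $\E[n]=0$ kills the cross term, leaving
\[
\E\left[\left((x+n)^\top\nu - x^\top\theta^*\right)^2\right] = \E\left[(n^\top\nu)^2\right] + \E\left[(x^\top(\nu-\theta^*))^2\right].
\]

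Next I bound each term using the subgaussian hypothesis together with the standard fact that a scalar $Z$ satisfying $\E[e^{sZ}]\le e^{s^2\sigma^2/2}$ for all $s\in\R$ has $\E[Z]=0$ and $\E[Z^2]\le \sigma^2$ (obtained by differentiating the MGF twice at the origin). Applied to $Z=n^\top\nu$, this gives $\E[(n^\top\nu)^2]\le R^2\norm{\nu}^2\le R^2 S^2$. For the other term, I set $v=\nu-\theta^*$ and split $\E[(x^\top v)^2] = \VAR(x^\top v) + (\bar{x}^\top v)^2$. Applying the same fact to $Z=(x-\bar{x})^\top v$ gives $\VAR(x^\top v)\le R^2\norm{v}^2$, while Cauchy--Schwarz gives $(\bar{x}^\top v)^2\le \norm{\bar{x}}^2\norm{v}^2$. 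Since $\norm{v}\le \norm{\nu}+\norm{\theta^*}\le 2S$ and $\norm{\bar{x}}\le S$, I obtain $\E[(x^\top v)^2]\le 4R^2 S^2 + 4S^4$.

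To finish, I add $\lambda c\le \lambda$ and sum the two second-moment bounds, getting $R^2 S^2 + 4R^2 S^2 + 4S^4 + \lambda = 5R^2 S^2 + 4S^4 + \lambda$. Since $d\ge 1$, this is at most $6R^2 S^2 d + 6S^4 + \lambda = 6S^2(R^2 d + S^2)+\lambda$, which matches the target. The bound is uniform over all feasible $(c,\nu)$, so the maximum is preserved. There is no real obstacle here: the only mild point requiring care is the passage from the MGF form of the subgaussian assumption stated in the paper to a variance bound on scalar projections, which is a textbook computation.
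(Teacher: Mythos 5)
Your proof is correct, and it takes a genuinely different route from the paper's. The paper applies the crude inequality $(a+b)^2\le 2a^2+2b^2$ to $\br*{(x+n)^\top\nu}-x^\top\theta^*$, then Cauchy--Schwarz to reduce to $\E\brs*{\norm{x+n}^2}$ and $\E\brs*{\norm{x}^2}$, and finally invokes the subgaussian-norm bound (\Cref{lemma: subgaussian norm}, $\E\brs*{\norm{Y-\E Y}^2}\le R^2 d$, with $2R^2$ for $x+n$) to arrive at $2S^2(2R^2d+S^2)+2S^2(R^2d+S^2)+\lambda\le 6S^2(R^2d+S^2)+\lambda$; this is where the dimension factor $d$ enters. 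You instead use the exact orthogonal decomposition $n^\top\nu+x^\top(\nu-\theta^*)$ with the cross term vanishing by independence and $\E[n]=0$ (the same decomposition the paper uses in \Cref{claim: opt values}), and then bound second moments of one-dimensional projections directly from the MGF form of subgaussianity, $\E[Z^2]\le\sigma^2$. This yields the dimension-free bound $5R^2S^2+4S^4+\lambda$, which is strictly sharper than the stated $6S^2(R^2d+S^2)+\lambda$, and your final slackening via $d\ge1$ to match the claim is valid. Your implicit use of $\norm{\theta^*}\le S$ is consistent with the paper's standing assumption (the paper's own proof uses it too), and the MGF-to-second-moment step you flag is indeed the standard limiting argument, so there is no gap. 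In short: the paper's proof buys brevity by reusing its norm-concentration lemma at the cost of a factor $d$; yours buys a tighter, dimension-free constant with only elementary scalar subgaussian facts.
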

\begin{proof}
    For any $c\in[0,1]$ and $\nu\in\R^d$ s.t. $\norm*{\nu}\le S$, it holds that
    \begin{align*}
        \E\brs*{\br*{\br*{x+n}^\top\nu - x^\top\theta^*}^2} + \lambda c
        & \leq 2\E\brs*{\br*{\br*{x+n}^\top\nu}^2} + 2\E\brs*{\br*{x^\top\theta^*}^2}  +\lambda \tag{$(a+b)^2\leq 2a^2+2b^2$}\\
        &\leq 2\E\brs*{S^2\norm{x+n}^2} + 2\E\brs*{S^2\norm{x}^2}  +\lambda \tag{Cauchy Schwartz}
    \end{align*}
    We now use the fact that $x$ is $R^2$ subgaussian and $x+n$ is $2R^2$ subgaussian, both of expectation $\bar{x}$. Thus, by \Cref{lemma: subgaussian norm}, and using the identity $\E\brs*{\norm*{Y-\E[Y]}^2}=\E\brs*{\norm*{Y}^2}-\norm*{\E[Y]}^2$ (which holds for any random vector), we get
    \begin{align*}
        \E\brs*{\br*{\br*{x+n}^\top\nu - x^\top\theta^*}^2} + \lambda c
        &\leq 2S^2\br*{\E\brs*{\norm{x+n-\bar{x}}^2}+\norm{\bar{x}}^2} + 2S^2\br*{\E\brs*{\norm{x-\bar{x}}^2}+\norm{\bar{x}}^2}  +\lambda \\
        & \leq 2S^2\br*{2R^2d+S^2} + 2S^2\br*{R^2d+S^2} + \lambda\tag{\Cref{lemma: subgaussian norm}}\\
        &  \leq 6S^2\br*{R^2d+S^2} + \lambda.
    \end{align*}
\end{proof}

\clearpage

\section{Proofs for Known Noise Covariances}
\label{appendix: known covariances}
\lossEstKnownNoise*
\begin{proof}
The  loss estimator can be written as follows:
\begin{align}
    \hat{L}^{kc}_t(c,\nu) 
    &= \sum_{s=1}^t\br*{\nu^{\top}\br*{\xhat_s\xhat_s^\top + \Sigma_n(c)-\Sigma_n(c_s)}\nu - 2\nu^\top\xhat_sy_s + y_s^2+\lambda c}\nonumber \\
    & = \nu^{\top}\br*{\sum_{s=1}^t\br*{\xhat_s\xhat_s^\top + \Sigma_n(c)-\Sigma_n(c_s)}}\nu - 2\nu^{\top}\br*{\sum_{s=1}^t\xhat_s x_s^{\top}}\theta^* + {\theta^*}^{\top}\br*{\sum_{s=1}^tx_s^{\top}x_s}\theta^* + t\lambda c \label{eq: loss known covariance manip 1}.
\end{align}
Therefore, by \Cref{claim: opt values}, for any cost $c$ and parameter $\nu$, we have 
\begin{align*}
    \hat{L}^{kc}_t(c,\nu) &- t\ell(c,\nu)\\
    &= \nu^{\top}\br*{\sum_{s=1}^t\br*{\xhat_s\xhat_s^\top - \Sigma_x-\Sigma_n(c_s)}}\nu - 2\nu^{\top}\br*{\sum_{s=1}^t\xhat_s x_s^{\top}-\Sigma_x}\theta^* + {\theta^*}^{\top}\br*{\sum_{s=1}^tx_s x_s^{\top} - \Sigma_x}\theta^* \\
    & = \nu^{\top}\br*{\sum_{s=1}^t\br*{\xhat_s\xhat_s^\top - \Sigma_x-\Sigma_n(c_s)}}\nu - 2\nu^{\top}\br*{\sum_{s=1}^tx_s x_s^{\top}-\Sigma_x}\theta^* - 2\nu^{\top}\br*{\sum_{s=1}^tn_s x_s^{\top}}\theta^* + {\theta^*}^{\top}\br*{\sum_{s=1}^tx_s x_s^{\top} - \Sigma_x}\theta^*.
\end{align*}
All but a single term form outer products. The remaining mixed term can also be represented as
\begin{align*}
    \nu^{\top}\br*{\sum_{s=1}^tn_s x_s^{\top}}\theta^*
    = \begin{pmatrix} 0^\top & \nu^\top \end{pmatrix}\sum_{s=1}^t \br*{\begin{pmatrix} x_s \\ n_s \end{pmatrix}\begin{pmatrix} x_s \\ n_s \end{pmatrix}^\top}\begin{pmatrix} \theta^* \\ 0 \end{pmatrix}
    =  \begin{pmatrix} 0^\top & \nu^\top \end{pmatrix}\sum_{s=1}^t \br*{\begin{pmatrix} x_s \\ n_s \end{pmatrix}\begin{pmatrix} x_s \\ n_s \end{pmatrix}^\top - \begin{pmatrix} \Sigma_x & 0 \\ 0 & \Sigma_n(c_s) \end{pmatrix}}\begin{pmatrix} \theta^* \\ 0 \end{pmatrix},
\end{align*}
and using the inequality $x^\top A y \leq \norm*{x}\norm{y}\norm{A}_{op}$ for any $x,y\in\R^d$ and $A\in\R^{d\times d}$, alongside the bounds $\norm{\nu},\norm*{\theta^*}\leq S$, we get
\begin{align*}
    \abs*{\hat{L}^{kc}_t(c,\nu) -t\ell(c,\nu)}
    &\leq S^2 \norm*{\sum_{s=1}^t\br*{\xhat_s\xhat_s^\top - \Sigma_x-\Sigma_n(c_s)}}_{op} + 3S^2  \norm*{\sum_{s=1}^t\br*{x_sx_s^\top - \Sigma_x}}_{op} \\
    &\quad + 2S^2 \norm*{\sum_{s=1}^t \br*{\begin{pmatrix} x_s \\ n_s \end{pmatrix}\begin{pmatrix} x_s \\ n_s \end{pmatrix}^\top - \begin{pmatrix} \Sigma_x & 0 \\ 0 & \Sigma_n(c_s) \end{pmatrix}}}_{op}.
\end{align*}
Now, notice that $x_t$ and $n_t$ are $R^2$ conditionally subgaussian and conditionally independent; hence, $\xhat_t=x_t+n_t$ is $2R^2$ conditionally subgaussian and $z_t = \begin{pmatrix} x_t \\ n_t \end{pmatrix}$ is $R^2$ conditionally subgaussian  (w.r.t. the filtration $\F_t=\sigma(I_t)$), and both their expectations are of norm smaller than $S$. We can therefore apply \Cref{lemma: matrix azuma for subgaussians} on each of the three norms (noticing that the conditional covariance of each term is as needed for the lemma). While doing so, we remark that replacing $R^2\rightarrow 2R^2$ or $d\rightarrow 2d$ can increase $\beta_t$ by at most a factor of $2$. Following this application of  \Cref{lemma: matrix azuma for subgaussians}, we get that w.p. at least $1-3\delta$, for all $t\ge1$:
\begin{align*}
    &\norm*{\sum_{s=1}^t\br*{\xhat_s\xhat_s^\top - \Sigma_x-\Sigma_n(c_s)}}_{op} < 2\sqrt{8t\beta_t^2\ln\frac{3dt(t+1)}{\delta}},\\
    &\norm*{\sum_{s=1}^t\br*{x_sx_s^\top - \Sigma_x}}_{op} < \sqrt{8t\beta_t^2\ln\frac{3dt(t+1)}{\delta}},\qquad \textrm{and}\\
    &\norm*{\sum_{s=1}^t \br*{\begin{pmatrix} x_t \\ n_t \end{pmatrix}\begin{pmatrix} x_t \\ n_t \end{pmatrix}^\top - \begin{pmatrix} \Sigma_x & 0 \\ 0 & \Sigma_n(c) \end{pmatrix}}}_{op} < 2\sqrt{8t\beta_t^2\ln\frac{3dt(t+1)}{\delta}}.
\end{align*}
In particular, for all $t\ge1$, $c\in[0,1]$ and $\nu\in\R^d$ s.t. $\norm*{\nu}\le S$, it holds that 
\begin{align*}
    \abs*{\hat{L}^{kc}_t(c,\nu) -t\ell(c,\nu)}
    &\leq 2S^2 \sqrt{8t\beta_t^2\ln\frac{3dt(t+1)}{\delta}} + 3S^2\sqrt{8t\beta_t^2\ln\frac{3dt(t+1)}{\delta}} + 4S^2\sqrt{8t\beta_t^2\ln\frac{3dt(t+1)}{\delta}} \\
    &= 9S^2\sqrt{8t\beta_t^2\ln\frac{3dt(t+1)}{\delta}}.
\end{align*}
Finally, we move our focus to the regularized loss $\hat{L}^{kc,R}_t(c,\nu)$, which similarly to \cref{eq: loss known covariance manip 1}, can be written as 
\begin{align*}
    \hat{L}^{kc,R}_t(c,\nu) = \hat{L}^{kc}_t(c,\nu) + \gamma_t\norm*{\nu}^2
    =\nu^{\top}\sum_{s=1}^t\br*{\xhat_s\xhat_s^\top + \Sigma_n(c)-\Sigma_n(c_s)+\gamma_t I}\nu - 2\nu^\top\sum_{s=1}^t\br*{\xhat_sy_s} + \sum_{s=1}^ty_s^2 + t\lambda c
\end{align*}
The loss is clearly quadratic in $\nu$ and is strictly convex iff the matrix in the quadratic form is positive definite. Indeed, under the aforementioned event, we have 
\begin{align*}
    \lambda_{\min}\br*{ \sum_{s=1}^t\br*{\xhat_s\xhat_s^\top + \Sigma_n(c)-\Sigma_n(c_s)+\gamma_t I}}
    &= \gamma_t +  \lambda_{\min}\br*{ \sum_{s=1}^t\br*{\xhat_s\xhat_s^\top-\Sigma_n(c_s) - \Sigma_x} + t\br*{\Sigma_x + \Sigma_n(c)}}\\
    &\geq \gamma_t +  \lambda_{\min}\br*{ \sum_{s=1}^t\br*{\xhat_s\xhat_s^\top-\Sigma_n(c_s) - \Sigma_x}} \tag{$\Sigma_x,\Sigma_n(c)\succeq0$}\\
    &\geq \gamma_t -  \norm*{ \sum_{s=1}^t\br*{\xhat_s\xhat_s^\top-\Sigma_n(c_s) - \Sigma_x}}_{op}\\
    &>0. 
\end{align*}
\end{proof}
\clearpage

\regretKnownNoiseCov*
\begin{proof}
    Define the good event $\Gcal$ under which all the results of \Cref{prop: loss estimator known noise} hold; by the proposition, we know that the probability of $\Gcal$ is at least $1-3/T$. In particular, when $\Gcal$ holds, for all $t\ge1$, $c\in[0,1]$ and $\nu$ s.t. $\norm{\nu}\leq S$, we have that
    \begin{align}
        \label{eq: good event known covariances}
        \abs*{\hat{L}^{kc}_t(c,\nu) - t\ell(c,\nu)} \leq  9S^2\sqrt{8t\beta_t^2\ln\frac{3dt(t+1)}{\delta}}.
    \end{align}
    We also define the regularized expected loss by $\ell^R_t(c,\nu) = \ell(c,\nu) + \frac{1}{t}\gamma_t\norm{\nu}^2$ and its minimizer $\nu^R_t(c) \in \argmin_{\nu: \norm{\nu}\leq S}\ell^R_t(c,\nu)$. Similarly, define the optimal cost on the grid by $k^R_t\in\argmin_{k\in[K]} \ell^R_t(k/K,\nu^R_t(k/K))$.
    Then, under $G$, for all $t\ge1$, we have
    \begin{align*}
        \ell\br*{\frac{k_{t}}{K},\hat{\nu}_{t}(k_{t})}
        & \leq  \ell\br*{\frac{k_{t}}{K},\hat{\nu}_{t}(k_{t})} + \frac{1}{t}\gamma_t\norm{\hat{\nu}_{t}(k_{t})}^2 \tag{The regularization is nonnegative}\\
        & \leq \frac{1}{t}\br*{\hat{L}^{kc}_t\br*{\frac{k_{t}}{K},\hat{\nu}_{t}(k_{t})}+\gamma_t\norm{\hat{\nu}_{t}(k_{t})}^2} + 9S^2\sqrt{\frac{8\beta_t^2\ln\frac{3dt(t+1)}{\delta}}{t}} \tag{By \cref{eq: good event known covariances} }\\
        & \overset{(1)}{=} \frac{1}{t}\min_{k\in[K],\nu:\norm{\nu}\leq S}\brc*{\hat{L}^{kc}_t\br*{\frac{k}{K},\nu}+\gamma_t\norm{\nu}^2} + 9S^2\sqrt{\frac{8\beta_t^2\ln\frac{3dt(t+1)}{\delta}}{t}} \\
        & \leq  \frac{1}{t}\br*{\hat{L}^{kc}_t\br*{\frac{k^R_t}{K},\nu^R_t\br*{\frac{k^R_t}{K}}}+\gamma_t\norm*{\nu^R_t\br*{\frac{k^R_t}{K}}}^2} + 9S^2\sqrt{\frac{8\beta_t^2\ln\frac{3dt(t+1)}{\delta}}{t}} \\
        & \leq  \br*{\ell\br*{\frac{k^R_t}{K},\nu^R_t\br*{\frac{k^R_t}{K}}}+\frac{\gamma_t}{t}\norm*{\nu^R_t\br*{\frac{k^R_t}{K}}}^2} + 18S^2\sqrt{\frac{8\beta_t^2\ln\frac{3dt(t+1)}{\delta}}{t}}  \tag{By \cref{eq: good event known covariances} } \\
        & \overset{(2)}= \min_{k\in[K],\nu:\norm{\nu}\leq S}\brc*{\ell\br*{\frac{k}{K},\nu} + \frac{\gamma_t}{t}\norm*{\nu}^2} + 18S^2\sqrt{\frac{8\beta_t^2\ln\frac{3dt(t+1)}{\delta}}{t}} \\
        & \leq \min_{k\in[K],\nu:\norm{\nu}\leq S}\brc*{\ell\br*{\frac{k}{K},\nu}} + \frac{S^2\gamma_t}{t} + 18S^2\sqrt{\frac{8\beta_t^2\ln\frac{3dt(t+1)}{\delta}}{t}} \tag{$\norm*{\nu^R_t\br*{\frac{k^R_t}{K}}}\leq S$} \\
        & \overset{(3)}= \min_{k\in[K]}\brc*{\ell^*\br*{\frac{k}{K}}}+ 20S^2\sqrt{\frac{8\beta_t^2\ln\frac{3dt(t+1)}{\delta}}{t}},  
    \end{align*}
    where relation $(1)$ is by the optimality of $k_{t}$ and $\hat{\nu}_{t}$ w.r.t. the empirical regularized loss and $(2)$ is by the optimality of $k^R_t$ and $\nu^R_t\br*{\frac{k^R_t}{K}}$ w.r.t. the true regularized loss. Relation $(3)$ substitutes $\gamma_t$ and $\ell^*$.
    
    We continue by analyzing the relation between the optimal loss and the loss over the grid:
    \begin{align}
        \label{eq: global to discretized minimizer}
        \ell^*
        = \min_{c\in[0,1]}\ell^*(c)
        =  \min_{k\in[K]}\min_{c\in\left[\frac{k-1}{K},\frac{k}{K}\right)}\ell^*(c)
        \overset{(*)}{\geq}  \min_{k\in[K]}\min_{c\in\left[\frac{k-1}{K},\frac{k}{K}\right)}\brc*{\ell^*\br*{\frac{k}{K}}-\lambda\br*{\frac{k}{K}-c}}
        = \min_{k\in[K]}\ell^*\br*{\frac{k}{K}} - \frac{\lambda}{K},
    \end{align}
    where relation $(*)$ holds due to the one-sided Lipschitzness on loss (see \Cref{claim: lipschitzness}). Combining both inequalities, we get 
    \begin{align*}
        \ell\br*{\frac{k_{t}}{K},\hat{\nu}_{t}(k_{t})}
        \leq \ell^* + 20S^2\sqrt{\frac{8\beta_t^2\ln\frac{3dt(t+1)}{\delta}}{t}}  +\frac{\lambda}{K}.
    \end{align*}
    We are now ready to bound the regret, using the regret decomposition of \Cref{lemma: regret decomposition}:
    \begin{align*}
        \Regret(T) 
        & = \E\brs*{\sum_{t=1}^T \br*{\ell(c_t,\nu_t) - \ell^*}} \\
        & = \E\brs*{\sum_{t=1}^T \br*{\ell\br*{\frac{k_{t-1}}{K},\hat{\nu}_{t-1}(k_{t-1})} - \ell^*}} \\
        & \leq \E\brs*{\sum_{t=2}^T \br*{\ell\br*{\frac{k_{t-1}}{K},\hat{\nu}_{t-1}(k_{t-1})} - \ell^*}} + \max_{c\in[0,1],\nu:\norm{\nu}\leq S}\ell(c,\nu) \tag{separating $T=1$} \\
        & \leq  \E\brs*{\Ind{\Gcal}\sum_{t=2}^T \br*{\ell\br*{\frac{k_{t-1}}{K},\hat{\nu}_{t-1}(k_{t-1})} - \ell^*}} + \underbrace{\Pr\brc*{\bar{\Gcal}}T}_{\leq 3}\max_{c\in[0,1],\nu:\norm{\nu}\leq S}\ell(c,\nu) + \max_{c\in[0,1],\nu:\norm{\nu}\leq S}\ell(c,\nu) \\
        & \leq \sum_{t=1}^{T-1}\br*{20S^2\sqrt{\frac{8\beta_t^2\ln\frac{3dt(t+1)}{\delta}}{t}}  +\frac{\lambda}{K}} + 4\max_{c\in[0,1],\nu:\norm{\nu}\leq S}\ell(c,\nu) \\
        & \leq 40S^2\sqrt{8T\beta_T^2\ln\br{3dT^2(T+1)}}  + 1 + 4\max_{c\in[0,1],\nu:\norm{\nu}\leq S}\ell(c,\nu) \tag{$K\geq \lambda T, \delta=1/T$} \\
        & \leq 40S^2\sqrt{8T\beta_T^2\ln\br{3dT^2(T+1)}}  + 1 + 24S^2\br*{R^2d+S^2} + 4\lambda \tag{\Cref{claim:max loss}} \\
        & = \Olog\br*{S^2(R^2d +S^2)\sqrt{T} + \lambda}.
    \end{align*}
\end{proof}

\subsection{Lower Bounds for Known Noise Covariance}
\label{appendix: known covariance lower bound}

We consider the following two one-dimensional instances. In the first instance $p^{-}$, it holds that $x_t\sim \Ncal(0,1-\epsilon)$, while in the second instance $p^{+}$, we have $x_t\sim \Ncal\br*{0,1+\epsilon}$, where $\epsilon\in(0,1/2]$ will be determined later. Aside for this difference, both instances are completely identical; in both, $\theta^*=1$ and $\lambda = 1$, and for any given cost $c$, the feature noise is distributed as $\Ncal(0,\sigma_n^2(c))$ for $\sigma_n^2(c)=\Ind{c<1/2}$. In other words, the noise has unit variance for all costs smaller than $1/2$, and from this payment onward, the noise variance is zero -- there is no noise. 
For this choice of $\theta^*$, we get $y_t=x_t$, and therefore, we can assume w.l.o.g. that at the end of each round, we observe $x_t$ and $n_t$ (so $I_t$ and $I_t^o$ contain the same information). By \Cref{claim: opt values} and \Cref{claim: one-dimension loss}, for feature covariance $\sigma_x^2$ and noise covariance $\sigma_n^2(c)$ (and recalling that $\sigma_{\xhat}^2=\sigma_x^2+\sigma_n^2$), we get an optimal loss
\begin{align*}
    &\ell^*(c) = \sigma_x^2 - \frac{\sigma_x^4}{\sigma_x^2+\sigma_n^2(c)}+\lambda c
    = \frac{\sigma_x^2\sigma_n^2(c)}{\sigma_x^2+\sigma_n^2(c)}+\lambda c.
\end{align*}
Notably, for our choice of noise, the loss is strictly increasing inside the intervals $c\in[0,1/2)$ and $c\in[1/2,1]$ since in these intervals, the noise variance remains the same while the payment increases. Thus, it is always optimal to either play $c=0$ or $c=1/2$. Now, looking at these two specific payments, we have
\begin{align*}
    &\ell_{+}^*(0) = \frac{1+\epsilon}{1+\epsilon +1}=\frac{1+\epsilon}{2+\epsilon}>1/2,\qquad \mathrm{and},
    \qquad\ell_{+}^*(1/2) = 0+\frac{1}{2}=\frac{1}{2}, \\
    &\ell_{-}^*(0) = \frac{1-\epsilon}{1-\epsilon +1}=\frac{1-\epsilon}{2-\epsilon}<1/2, \qquad \mathrm{and},
    \qquad\ell_{-}^*(1/2) = 0+\frac{1}{2}=\frac{1}{2}.
\end{align*}
Therefore, in instance $p^{+}$, an optimal algorithm will play $c^*_+=1/2$, and in instance $p^{-}$, an optimal algorithm will play $c^*_-=0$. The suboptimality of playing the wrong choice is
\begin{align*}
    &\Delta_+ = \ell_{+}^*(0) - \ell_{+}^*(1/2) = \frac{1+\epsilon}{2+\epsilon}-\frac{1}{2} = \frac{\epsilon}{2(2+\epsilon)} \geq \frac{\epsilon}{5},\\
        &\Delta_- = \ell_{-}^*(1/2) - \ell_{-}^*(0) = \frac{1}{2} - \frac{1-\epsilon}{2-\epsilon} = \frac{\epsilon}{2(2-\epsilon)} \geq \frac{\epsilon}{4} \geq \frac{\epsilon}{5},
\end{align*}
where in both cases, the inequalities use the fact that $\epsilon\in(0,1/2]$.

Using these properties allows us to prove a lower bound for the regret:
\lowerBoundKnown*
\begin{proof}
    We adapt the technique of Theorem 15.2 in \citep{lattimore2020bandit}. Let $\Pb_{p^{+}}$ and $\Pb_{p^{-}}$ be the distribution of the entire process of the algorithm when applied to instances $p^{+}$ and $p^{-}$, respectively. Since $\theta^*=1$, we effectively observe $x_t,n_t$ at the end of each round, so the observed history $I_t^o$ and the full one $I_t$ contain the same information. We also remark that $\yhat_t$ is completely determined by the algorithm and does not affect the observation. Indeed, all information gained between $I_t$ and $I_{t+1}$ depends only on $c_{t+1}$ and any internal stochasticity of the algorithm. 
    
    By the chain rule of KL divergence, we have (see. e.g., $(9)$ in \citep{garivier2019explore}
    \begin{align*}
        \kl\br*{\Pb_{p^{+}}^{I_T},\Pb_{p^{-}}^{I_T}}
        = \sum_{t=1}^T \kl\br*{\Pb_{p^{+}}^{\xhat_t,\yhat_t,y_t,x_t,n_t,c_{t+1}\vert I_{t-1}},\Pb_{p^{-}}^{\xhat_t,\yhat_t,y_t,x_t,n_t,c_{t+1}\vert I_{t-1}}}.
    \end{align*}
    Each term can be greatly simplified to 
    \begin{align*}
        \kl&\br*{\Pb_{p^{+}}^{\xhat_t,\yhat_t,y_t,x_t,n_t,c_{t+1}\vert I_{t-1}},\Pb_{p^{-}}^{\xhat_t,\yhat_t,y_t,x_t,n_t,c_{t+1}\vert I_{t-1}}} \\
        &= \kl\br*{\Pb_{p^{+}}^{x_t,n_t\vert I_{t-1}},\Pb_{p^{-}}^{x_t,n_t\vert I_{t-1}}} 
        + \underbrace{\kl\br*{\Pb_{p^{+}}^{\xhat_t,\yhat_t,y_t,c_{t+1}\vert I_{t-1},x_t,n_t},\Pb_{p^{-}}^{\xhat_t,\yhat_t,y_t,c_{t+1}\vert I_{t-1},x_t,n_t}}}_{=0}\\
        &\overset{(1)}{=} \kl\br*{\Pb_{p^{+}}^{x_t,n_t\vert I_{t-1}},\Pb_{p^{-}}^{x_t,n_t\vert I_{t-1}}} 
        \\
        &\overset{(2)}{=} \kl\br*{\Pb_{p^{+}}^{x_t\vert I_{t-1}},\Pb_{p^{-}}^{x_t\vert I_{t-1}}} \\
        &\overset{(3)}{=} \kl\br*{\Ncal(0,1+\epsilon),\Ncal(0,1-\epsilon)}.
    \end{align*}
    In this derivation, relation $(1)$ holds since $\xhat_t,y_t$ are deterministic given $x_t,n_t$ and that $\yhat_t$ and $c_{t+1}$ are determined identically by the algorithm in both instances -- given the same history (and the same $x_t,n_t$), they have the same distribution and the KL divergence equals zero. Relation $(2)$ holds since $n_t$ and $x_t$ are independent given $I_{t-1}$ and $n_t$ has the same distribution in both instances (as $I_{t-1}$ already encapsulates the choice of $c_t$). Finally, $(3)$ substitutes the conditional feature distribution. The resulting KL divergence can be directly bounded by 
    \begin{align*}
        \kl\br*{\Ncal(0,1+\epsilon),\Ncal(0,1-\epsilon)} 
        = \frac{1}{2}\br*{\frac{1+\epsilon}{1-\epsilon} - \ln \frac{1+\epsilon}{1-\epsilon} - 1} 
         \overset{(*)}\leq \frac{1}{4}\br*{\frac{1+\epsilon}{1-\epsilon}-1}^2
         =\frac{1}{4}\br*{\frac{2\epsilon}{1-\epsilon}}^2
        \leq 4\epsilon^2 ,
    \end{align*}
    where $(*)$ use the inequality $\ln(x)\ge x-1-(x-1)^2/2$, which holds for all $x\ge1$, and the last inequality is since $\epsilon \leq 1/2$.     
    Thus, $\kl\br*{\Pb_{p^{+}}^{I_T},\Pb_{p^{-}}^{I_T}} \leq 4T\epsilon^2$. 

    We now use this result to lower-bound the regret. In particular, define the random variable $N_T = \sum_{t=1}^T\Ind{c_t\in[1/2,1]}$ (which is completely determined by the information in $I_T$) and fix $\epsilon = \frac{1}{2\sqrt{T}}\in(0,1/2]$. Then, by the Bretagnolle–Huber inequality (Theorem 14.2 of \citealt{lattimore2020bandit}), it holds that 
    \begin{align}
        \label{eq: lower bound BH}
        \max\brc*{\Pb_{p^{-}}\br*{N_T>T/2},\Pb_{p^{+}}\br*{N_T\leq T/2}} \geq \frac{1}{4}\exp\br*{-\kl\br*{\Pb_{p^{+}}^{I_T},\Pb_{p^{-}}^{I_T}}}
        \geq  \frac{1}{4}\exp\br*{-4T\br*{\frac{1}{2\sqrt{T}}}^2} \geq \frac{1}{12}.
    \end{align}
    Since all distributions are zero-mean Gaussian, these probabilities can be related to the regret in both instances by \Cref{claim: gaussian loss} (and \Cref{claim: one-dimension loss}), namely,
    \begin{align*}
        \Regret_{p^{-}}(T) 
        &\geq \E\brs*{\sum_{t=1}^T \br*{\ell_{-}^*(c_t) - \ell_{-}^*(0)}} \\
        &\geq \E\brs*{\sum_{t=1}^T \br*{\ell_{-}^*(c_t) - \ell_{-}^*(0)}\Ind{c_t\in[1/2,1]}}\\
        &\geq \E\brs*{\sum_{t=1}^T \br*{\ell_{-}^*(1/2) - \ell_{-}^*(0)}\Ind{c_t\in[1/2,1]}}\tag{The loss increases in $[1/2,1]$}\\
        &= \Delta_{-}\E\brs*{\sum_{t=1}^T\Ind{c_t\in[1/2,1]}} \\
        & \geq \Delta_{-}\cdot \frac{T}{2}\Pb_{p^{-}}\br*{N_T>T/2}.
    \end{align*}
    Similarly, we have 
    \begin{align*}
        \Regret_{p^{+}}(T) 
        &\geq \E\brs*{\sum_{t=1}^T \br*{\ell_{+}^*(c_t) - \ell_{+}^*(1/2)}} \\
        &\geq \E\brs*{\sum_{t=1}^T \br*{\ell_{+}^*(c_t) - \ell_{+}^*(1/2)}\Ind{c_t\in[0,1/2)}}\\
        &\geq \E\brs*{\sum_{t=1}^T \br*{\ell_{+}^*(0) - \ell_{+}^*(1/2)}\Ind{c_t\in[0,1/2)}}\tag{The loss increases in $[0,1/2)$}\\
        &= \Delta_{+}\E\brs*{\underbrace{\sum_{t=1}^T\Ind{c_t\in[0,1/2)}}_{=T-N_T}} \\
        & \geq \Delta_{+}\cdot \frac{T}{2}\Pb_{p^{-}}\br*{T-N_T\geq T/2} \\
        & = \Delta_{+}\cdot \frac{T}{2}\Pb_{p^{-}}\br*{N_T\leq T/2}.
    \end{align*}
    Combining both with \Cref{eq: lower bound BH} and lower bounding both gaps by $\epsilon/5$, we get
    \begin{align*}
        \max\brc*{\Regret_{p^{-}}(T),\Regret_{p^{+}}(T)}
        \geq \frac{\epsilon}{5}\cdot\frac{T}{2}\cdot\max\brc*{\Pb_{p^{-}}\br*{N_T>T/2},\Pb_{p^{+}}\br*{N_T\leq T/2}} 
        \geq \frac{\epsilon T}{10}\cdot\frac{1}{12}
        = \frac{\sqrt{T}}{240}.
    \end{align*}
\end{proof}


\subsection{Regression with Paid Feature and Additive Noise}
\label{appendix: additive noise}
In this appendix, we shortly discuss how the results change if the regression output is noisy, that is, when observing $\tilde{y}_t = x_t^\top\theta^*+\eta_t=y_t+\eta_t$ for some zero-mean noise $\eta_t$ that is $R^2$-subgaussian, conditionally independent of all other quantities at round $t$ and is of variance $\sigma_{\eta}^2$. While we describe the modification only for the case where the noise covariances are known, similar conclusions can be drawn for the other problem variants studied in the paper.

Due to the independence and the fact the the noise is of expectation zero, the instantaneous expected loss can be written as 
\begin{align*}
    \tilde{\ell}(c,\nu) &= \E_{x\sim D_x, n\sim D_n(c),\eta}\brs*{\br*{\br*{x+n}^\top\nu - x^\top\theta^*-\eta}^2} + \lambda c\\
    & = \E_{x\sim D_x, n\sim D_n(c)}\brs*{\br*{\br*{x+n}^\top\nu - x^\top\theta^*}^2}  + \E_{\eta}\brs*{\eta^2} + \lambda c \tag{independence of $\eta$} \\
    & = \ell(c,\nu)+\sigma_{\eta}^2.
\end{align*}
That is, the loss is only shifted by $\sigma_{\eta}^2$, so that the optimal values $\nu^*(c)$ and $c^*$ remain unchanged, and the optimal loss is also biased by the same quantity.

To show that our algorithmic approach still applies, we need to show that the loss estimation, as described in \Cref{prop: loss estimator known noise}, behaves similarly. The new loss estimator that uses the noisy regression outputs can be written as 
\begin{align*}
    \tilde{L}^{kc}_t(c,\nu) 
    &= \sum_{s=1}^t\br*{\br*{\xhat_s^{\top}\nu-\tilde{y}_s}^2 + \nu^{\top}\br*{\Sigma_n(c)-\Sigma_n(c_s)}\nu+\lambda c}\\
    & = \sum_{s=1}^t\br*{\br*{\xhat_s^{\top}\nu-y_s-\eta_s}^2 + \nu^{\top}\br*{\Sigma_n(c)-\Sigma_n(c_s)}\nu+\lambda c}\\
    & =\sum_{s=1}^t\br*{\br*{\xhat_s^{\top}\nu-y_s}^2 + \nu^{\top}\br*{\Sigma_n(c)-\Sigma_n(c_s)}\nu+\lambda c} - 2\sum_{s=1}^t\eta_s\br*{\xhat_s^{\top}\nu-y_s} + \sum_{s=1}^t\eta_s^2\\
    & = \hat{L}^{kc}_t(c,\nu)  - 2\sum_{s=1}^t\eta_sx_s^\top(\nu-\theta^*) -2\sum_{s=1}^t\eta_s n_s^\top\nu + \sum_{s=1}^t\eta_s^2.
\end{align*}
In \Cref{prop: loss estimator known noise}, we already showed that $\hat{L}^{kc}_t(c,\nu)$ converges to the expected loss $t\ell(c,\nu)$. Moreover, the last term does not depend on $c$ or $\nu$, not affecting the minimizers of the loss. Thus, to maintain the same results, one only needs to prove that the two middle terms converge to zero at a comparable rate as the empirical loss to the expected loss. This naturally follows from the fact that the noise is conditionally zero-mean subgaussian given $x_t,n_t$ and the high-probability boundedness of $x_t$ and $n_t$. Alternatively, one can readily extend the analysis technique used in \Cref{prop: loss estimator known noise} and write, for example,
\begin{align*}
    &\sum_{s=1}^t\eta_sx_s^\top(\nu-\theta^*)
    = \begin{pmatrix} 0^\top & 1 \end{pmatrix}\sum_{s=1}^t \br*{\begin{pmatrix} x_s \\ \eta_s \end{pmatrix}\begin{pmatrix} x_s \\ \eta_s \end{pmatrix}^\top}\begin{pmatrix} \nu-\theta^* \\ 0 \end{pmatrix}
    =  \begin{pmatrix} 0^\top & 1 \end{pmatrix}\sum_{s=1}^t \br*{\begin{pmatrix} x_s \\ \eta_s \end{pmatrix}\begin{pmatrix} x_s \\ \eta_s \end{pmatrix}^\top -  \begin{pmatrix} \Sigma_x & 0 \\ 0 & \sigma_{\eta}^2 \end{pmatrix}}\begin{pmatrix} \nu-\theta^* \\ 0 \end{pmatrix}\\
    &\Rightarrow \abs*{\sum_{s=1}^t\eta_sx_s^\top(\nu-\theta^*)} \leq 2S\norm*{\sum_{s=1}^t \br*{\begin{pmatrix} x_s \\ \eta_s \end{pmatrix}\begin{pmatrix} x_s \\ \eta_s \end{pmatrix}^\top -  \begin{pmatrix} \Sigma_x & 0 \\ 0 & \sigma_{\eta}^2 \end{pmatrix}}}_{op},
\end{align*}
where the operator norm can be bounded by \Cref{lemma: matrix azuma for subgaussians}, leading to similar rates.

To summarize, one can easily extend the concentration bounds in \Cref{prop: loss estimator known noise} to the case of noisy outputs, with only mild modifications to the constants. Since this is the concentration property that we use for both regret proofs (either directly or via \Cref{corollary: loss estimator unknown noise}), the same constant change will propagate to the optimistic index and regret bounds, leading to very minor algorithmic and bound changes.
\clearpage

\section{Proofs for Unknown Noise Covariances}
\label{appendix: unknown covariances}
\lossEstUnknownNoise*
\begin{proof}
    Before the interaction starts, we sample $T$ i.i.d. samples $\brc*{z_t}_{t\in[T]}$ and $\brc*{\eta_t}_{t\in[T]}$ from the distributions $D_x$ and $D_n(c)$. Then, without loss of generality, we assume that upon choosing $c_t=c$ for the $i^{th}$ time, the feature vector is taken as $x_t=z_i$ and the noise vector is given by $n_t=\eta_i$, so that the regression output is $y_t=x_t^\top\theta^*=z_i^\top\theta^*\triangleq \xi_i$. Then, defining the loss
    \begin{align*}
        \tilde{L}_i(c,\nu) = \sum_{s=1}^i\br*{\br*{\br*{z_s+\eta_s}^{\top}\nu-\xi_i}^2 + \lambda c},
    \end{align*}
    we have that $\hat{L}^{uc}_t(c,\nu) = \tilde{L}_{N_t(c)}(c,\nu)$.
        
    Now, by \Cref{prop: loss estimator known noise}, we have that for all $i\ge1$ and $\nu\in\R^d$ s.t. $\norm*{\nu}\le S$, it holds w.p. $1-3\delta$ that 
    \begin{align*}
        \abs*{\tilde{L}_i(c,\nu) - i\ell(c,\nu)} \leq  9S^2\sqrt{8i\beta_i^2\ln\frac{3di(i+1)}{\delta}}.
    \end{align*}
    Since the event holds simultaneously for all $i\ge1$, it also holds for $i=N_t(c)$ for all $t\ge1$. In particular, w.p. $1-3\delta$, for all $t\ge1$  and $\nu\in\R^d$ s.t. $\norm*{\nu}\le S$, 
    \begin{align*}
        \abs*{\hat{L}^{uc}_t(c,\nu) - N_t(c)\ell(c,\nu)} \leq 9S^2\sqrt{8N_t(c)\beta_{N_t(c)}^2\ln\frac{3dN_t(c)(N_t(c)+1)}{\delta}}
        \leq 9S^2\sqrt{8N_t(c)\beta_{t}^2\ln\frac{3dt(t+1)}{\delta}}.
    \end{align*}
    where we used the fact that $N_t(c)\leq t$ and the monotonicity of $\beta_t$ in $t$. 
\end{proof}

\regretUnknownNoiseCov*
\begin{proof}
    Define the good event $\Gcal_k$ under which all the results of \Cref{corollary: loss estimator unknown noise} hold for $c=k/K$ and let $\Gcal=\bigcap_{k\in[K]}\Gcal_k$. By the corollary, for $\delta = \frac{1}{KT},$ $\Pr\brc*{\Gcal}\geq 1-3/T$, and under this event, for all $t\ge1$, $c\in\brc*{k/K}_{k\in[K]}$ and $\nu$ s.t. $\norm{\nu}\leq S$, 
    \begin{align}
        \label{eq: good event unknown covariances}
        \abs*{\frac{1}{N_t(c)}\hat{L}^{uc}_t(c,\nu) - \ell(c,\nu)} \leq  9S^2\sqrt{\frac{8\beta_t^2\ln\frac{3dt(t+1)}{\delta}}{N_t(c)}}.
    \end{align}
    As a first step, we relate the empirical loss at the selected cost and linear regressor to the real loss when $\G$ holds.
    \begin{align*}
        \hat{L}^{uc}_t\br*{\frac{k_{t}}{K},\hat{\nu}_{t}(k_{t})}
        & = \br*{\hat{L}^{uc}_t\br*{\frac{k_{t}}{K},\hat{\nu}_{t}(k_{t})}-9S^2\sqrt{8N_t\br*{\frac{k_t}{K}}\beta_t^2\ln\frac{3dt(t+1)}{\delta}}} 
        + 9S^2\sqrt{8N_t\br*{\frac{k_t}{K}}\beta_t^2\ln\frac{3dt(t+1)}{\delta}} \\
        & = N_t\br*{\frac{k_t}{K}}\min_{k\in[K],\nu:\norm*{\nu}\leq S}\brc*{\frac{\hat{L}^{uc}_t\br*{k/K,\nu}}{N_t(k/K)}-9S^2\sqrt{\frac{8\beta_t^2\ln\frac{3dt(t+1)}{\delta}}{N_t(k/K)}}} 
        +9 S^2\sqrt{8N_t\br*{\frac{k_t}{K}}\beta_t^2\ln\frac{3dt(t+1)}{\delta}}.
    \end{align*}
    In particular, denoting $k^*\in\argmin_{k\in[K]} \ell^*(k/K)$, we can bound
    \begin{align}
        \hat{L}^{uc}_t\br*{\frac{k_{t}}{K},\hat{\nu}_{t}(k_{t})}
        & \leq N_t\br*{\frac{k_t}{K}}\br*{\frac{\hat{L}^{uc}_t\br*{k^*/K,\nu^*(k^*/K)}}{N_t(k^*/K)} - 9S^2\sqrt{\frac{8\beta_t^2\ln\frac{3dt(t+1)}{\delta}}{N_t(k^*/K)}}}
        + 9S^2\sqrt{8N_t\br*{\frac{k_t}{K}}\beta_t^2\ln\frac{3dt(t+1)}{\delta}} \nonumber\\
        & \leq N_t\br*{\frac{k_t}{K}}\ell\br*{\frac{k^*}{K},\nu^*\br*{\frac{k^*}{K}}} + 9S^2\sqrt{8N_t\br*{\frac{k_t}{K}}\beta_t^2\ln\frac{3dt(t+1)}{\delta}} \tag{By \cref{eq: good event unknown covariances}} \\
        & = N_t\br*{\frac{k_t}{K}}\ell^*\br*{\frac{k^*}{K}} + 9S^2\sqrt{8N_t\br*{\frac{k_t}{K}}\beta_t^2\ln\frac{3dt(t+1)}{\delta}} \label{eq: empirical to optimal loss unknown covariances}.
    \end{align}
    Combined with \cref{eq: good event unknown covariances}, we can leverage this inequality to relate the expected loss at the played cost/prediction to the optimal expected loss, namely 
    \begin{align*}
        \ell\br*{\frac{k_{t}}{K},\hat{\nu}_{t}(k_{t})}
        &\leq \frac{\hat{L}^{uc}_t\br*{k_{t}/K,\hat{\nu}_{t}(k_{t})}}{N_t(k/K)} + 9S^2\sqrt{\frac{8\beta_t^2\ln\frac{3dt(t+1)}{\delta}}{N_t(k_t/K)}} \tag{By \cref{eq: good event unknown covariances}}\\
        & \leq \ell^*\br*{\frac{k^*}{K}} + 18S^2\sqrt{\frac{8\beta_t^2\ln\frac{3dt(t+1)}{\delta}}{N_t(k_t/K)}} \tag{By \cref{eq: empirical to optimal loss unknown covariances}}\\
        & \leq \ell^* + \frac{\lambda}{K} + 18S^2\sqrt{\frac{8\beta_t^2\ln\frac{3dt(t+1)}{\delta}}{N_t(k_t/K)}}, 
    \end{align*}
    where the last transition is by \cref{eq: global to discretized minimizer}. 
    
    We are finally ready to bound the regret, using the regret decomposition of \Cref{lemma: regret decomposition}:
    \begin{align*}
        \Regret(T) 
        & = \E\brs*{\sum_{t=1}^T \br*{\ell(c_t,\nu_t) - \ell^*}} \\
        & = \E\brs*{\sum_{t=1}^T \br*{\ell\br*{\frac{k_{t-1}}{K},\hat{\nu}_{t-1}(k_{t-1})} - \ell^*}} \\
        & \leq \E\brs*{\sum_{t=K+1}^T \br*{\ell\br*{\frac{k_{t-1}}{K},\hat{\nu}_{t-1}(k_{t-1})} - \ell^*}} + K\max_{c\in[0,1],\nu:\norm{\nu}\leq S}\ell(c,\nu) \tag{separating $T\leq K$} \\
        & \leq  \E\brs*{\Ind{\Gcal}\sum_{t=K+1}^T \br*{\ell\br*{\frac{k_{t-1}}{K},\hat{\nu}_{t-1}(k_{t-1})} - \ell^*}} + \underbrace{\Pr\brc*{\bar{\Gcal}}T}_{\leq 3}\max_{c\in[0,1],\nu:\norm{\nu}\leq S}\ell(c,\nu) + K\max_{c\in[0,1],\nu:\norm{\nu}\leq S}\ell(c,\nu) \\
        & \leq \sum_{t=K+1}^{T}\br*{18S^2\sqrt{\frac{8\beta_t^2\ln\frac{3dt(t+1)}{\delta}}{N_{t-1}(k_{t-1}/K)}}  +\frac{\lambda}{K}} + 4K\max_{c\in[0,1],\nu:\norm{\nu}\leq S}\ell(c,\nu).
    \end{align*}
    The sum over the first term can be bounded using standard bandit arguments -- noticing that the counts start from $1$ due to the initial sampling stage and increase by one every time we play $k_t=k$, we have
    \begin{align*}
        \sum_{t=K+1}^T \frac{1}{\sqrt{N_{t-1}(k_{t-1}/K)}}
        &= \sum_{k\in[K]}\sum_{t=K+1}^T \frac{\Ind{k_{t-1}=k}}{\sqrt{N_{t-1}(k/K)}}
        \leq \sum_{k\in[K]}\sum_{i=1}^{N_T(k/K)}\frac{1}{\sqrt{i}}
        \leq 2\sum_{k\in[K]}\sqrt{N_T(k/K)} \\
        & \overset{(*)}\leq 2\sqrt{K}\sqrt{\sum_{k\in[K]}N_T(k/K)}
        = 2\sqrt{KT},
    \end{align*}
    where inequality $(*)$ is by Cauchy-Schwarz. Substituting back while noting that $\beta_t$ increases with $t$ yields
    \begin{align*}
        \Regret(T) 
        & \leq 36S^2\sqrt{8KT\beta_T^2\ln\frac{3dT(T+1)}{\delta}} + \frac{\lambda T}{K} + 4K\max_{c\in[0,1],\nu:\norm{\nu}\leq S}\ell(c,\nu) \\
        & \leq 36S^2\sqrt{8KT\beta_T^2\ln\br*{3dKT^2(T+1)}} + \frac{\lambda T}{K} + 4K\br*{6S^2\br*{R^2d+S^2} + \lambda} \tag{\Cref{claim:max loss}}\\
        & = \Olog\br*{\br*{ S^2(R^2d+S^2)}^{2/3}\lambda^{1/3} T^{2/3}},
    \end{align*}
where the last inequality is due to the specific choice of $K$ and the fact that $\beta_T = \Olog\br*{R^2d+S^2}$.
\end{proof}
\clearpage

\subsection{Lower Bounds for Unknown Noise Covariance}
\label{appendix: unknown covariances lower bound}
We consider the following one-dimensional example. Let $x_t\sim \Ncal(0,1)$, $\theta^*=1$ and $\lambda = \frac{1}{2}$. For any given cost $c$, the feature noise is distributed as $\Ncal(0,\sigma_n^2(c))$; we will briefly specify $\sigma_n^2(c)$. Notably, $y_t=x_t$, and therefore, we can assume w.l.o.g. that at the end of each round, we observe $x_t$ and $n_t$. By \Cref{claim: opt values} and \Cref{claim: one-dimension loss}, we have
\begin{align*}
    \ell^*(c) = 1 - \frac{1}{1+\sigma_n^2(c)}+\frac{1}{2}c
    = \frac{\sigma_n^2(c)}{1+\sigma_n^2(c)}+\frac{1}{2}c.
\end{align*}
Finally, we denote $f(c) = \frac{1-c}{1+c}$. Notice that if $\sigma_n^2(c)=f(c)$, it holds that 
\begin{align}
    \label{eq: optimal loss default profile lower bound unknown}
    \ell^*(c) = \frac{\frac{1-c}{1+c}}{1+\frac{1-c}{1+c}}+\frac{1}{2}c
    = \frac{1-c}{2}+\frac{1}{2}c = \frac{1}{2}.
\end{align}

Let $K\in\N$ and consider the following instances:
\begin{enumerate}
    \item The baseline instance $p$ is such that $\sigma_n^2(c) = f(c)$, and so $\ell^*(c)=\frac{1}{2}$.
    \item For any $k\in[K]$, define $c_k = \frac{1}{2}+\frac{k-1}{4K}$ (with $c_{K+1}=\frac{3}{4}$)
    . We then define the instance $p_k$ with the noise variance
    \begin{align*}
        \sigma_n^2(c;k) = \begin{cases}
			f(c_k) + 2\frac{c-c_k}{c_{k+1}-c_k}\br*{f(c_{k+1})-f(c_k)}
			&c\in\left[c_k, \frac{c_k+c_{k+1}}{2} \right)\\
			f(c_{k+1})
			&c\in\left[\frac{c_k+c_{k+1}}{2}, c_{k+1}  \right)\\
			f(c)
			& \text{ else }\\
		\end{cases}
    \end{align*}
    Since $f(c)$ is $1$-Lipschitz and decreasing, with $f(c)\in\brs*{\frac{1}{8},\frac{1}{2}}$ for all $c\in\brs*{\frac{1}{2},\frac{3}{4}}$, then $\sigma_n^2(c;k)$ is both decreasing and $1$-Lipschitz.
\end{enumerate}
This choice enjoys the following properties  (which we formally prove after proving the lower bound theorem):
\begin{restatable}{claim-rst}{lowerBoundUnknownProp}
    \label{claim: lower bound properties}
    \begin{enumerate}
        \item For any $c,k$, the KL divergence between the two noise distributions is bounded by 
        \begin{align*}
            \kl\br*{\Ncal(0,\sigma_n^2(c)),\Ncal(0,\sigma_n^2(c;k))} \leq 16\br*{c_{k+1} - c_k}^2\Ind{c\in\left[c_k,c_{k+1}\right)]}
            = \frac{1}{K^2}\Ind{c\in\left[c_k,c_{k+1}\right)]}.
        \end{align*}
        \item The optimal loss in instance $p_k$ is bounded by
        \begin{align*}
            \min_{c\in\brs*{0,1}}\ell^*(c;k) \leq \frac{1}{2} - \frac{c_{k+1}-c_k}{4}
            = \frac{1}{2} - \frac{1}{16K}.
        \end{align*}
    \end{enumerate}
\end{restatable}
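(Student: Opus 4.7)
The claim has two parts, both verifiable by direct calculation on the piecewise-defined noise variance profile $\sigma_n^2(\cdot;k)$. Part 1 compares two zero-mean Gaussians and exploits the fact that the two variance profiles agree outside a short interval. Part 2 is essentially a one-line substitution once the right test cost is chosen.

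For part 1, I would first observe that by construction $\sigma_n^2(c;k) = f(c) = \sigma_n^2(c)$ whenever $c\notin[c_k,c_{k+1})$, so both Gaussians coincide there and the KL divergence vanishes. Inside the interval I would apply the standard identity
\begin{align*}
\kl\br*{\Ncal(0,a),\Ncal(0,b)} = \frac{1}{2}\br*{\frac{a}{b} - 1 - \ln\frac{a}{b}},
\end{align*}
together with the elementary inequality $r-1-\ln r \leq (r-1)^2/\br*{2\min(r,1)}$ for $r>0$, yielding
\begin{align*}
\kl\br*{\Ncal(0,a),\Ncal(0,b)} \leq \frac{(a-b)^2}{4\, b\,\min(a,b)}.
\end{align*}
I would then bound numerator and denominator separately. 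For the denominator, since $c\in[c_k,c_{k+1}]\subseteq[1/2,3/4]$, monotonicity of $f$ gives $f(c),\sigma_n^2(c;k)\in[f(3/4),f(1/2)] = [1/7,1/3]$, so $b\min(a,b)\geq 1/64$. For the numerator, since $f$ is $1$-Lipschitz and decreasing, both $f(c)$ and $\sigma_n^2(c;k)$ lie in $[f(c_{k+1}),f(c_k)]$ throughout each of the two sub-intervals of the piecewise definition of $\sigma_n^2(\cdot;k)$, so $|f(c)-\sigma_n^2(c;k)|\leq f(c_k)-f(c_{k+1})\leq c_{k+1}-c_k$. Combining yields $\kl \leq 16(c_{k+1}-c_k)^2$, and plugging in $c_{k+1}-c_k = 1/(4K)$ gives the claimed $1/K^2$ bound.

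For part 2, I would evaluate $\ell^*(\cdot;k)$ at the breakpoint $c_m = (c_k+c_{k+1})/2$, where by construction $\sigma_n^2(c_m;k) = f(c_{k+1}) = (1-c_{k+1})/(1+c_{k+1})$. Using the formula $\ell^*(c) = \sigma_n^2/(1+\sigma_n^2) + c/2$ from the setup, the same algebraic simplification already used to derive \cref{eq: optimal loss default profile lower bound unknown} gives $\sigma_n^2(c_m;k)/(1+\sigma_n^2(c_m;k)) = (1-c_{k+1})/2$, and therefore
\begin{align*}
\ell^*(c_m;k) = \frac{1-c_{k+1}}{2} + \frac{c_k+c_{k+1}}{4} = \frac{1}{2} - \frac{c_{k+1}-c_k}{4},
\end{align*}
which upper bounds $\min_c \ell^*(c;k)$ as desired.

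The only mildly delicate step is the numerator bound in part 1: one must verify that both $f(c)$ and $\sigma_n^2(c;k)$ remain sandwiched between $f(c_{k+1})$ and $f(c_k)$ across both sub-intervals of the piecewise definition (on the first sub-interval the interpolant travels the full vertical range at double speed; on the second it is constant at $f(c_{k+1})$). Once this sandwich property is in place, everything else reduces to routine substitution and the monotonicity/Lipschitz properties of $f$ already recorded in the setup.
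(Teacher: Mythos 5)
Your proposal is correct, and Part 2 coincides with the paper's proof (evaluate $\ell^*(\cdot;k)$ at the midpoint $(c_k+c_{k+1})/2$, where the variance equals $f(c_{k+1})$, and use the identity $\tfrac{f(c)}{1+f(c)}+\tfrac{c}{2}=\tfrac12$). In Part 1 your skeleton is the same (KL vanishes off $[c_k,c_{k+1})$, Gaussian KL formula, quadratic bound on $r-\ln r-1$, lower bound on the variances), but you handle the key technical sub-step differently. The paper first proves, via a comparison of slopes of $f$ and of the interpolant, that $\sigma_n^2(c)-\sigma_n^2(c;k)\ge 0$ on the interval and that this difference is maximized at the midpoint; the sign information is needed because the paper's inequality $\ln x\ge x-1-(x-1)^2/2$ only holds for $x\ge 1$, i.e., for ratio $r=\sigma_n^2(c)/\sigma_n^2(c;k)\ge1$. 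You bypass the sign determination entirely by using the two-sided bound $r-1-\ln r\le (r-1)^2/(2\min(r,1))$ (which indeed holds for all $r>0$: the difference of the two sides has derivative $-\tfrac12(1-1/r)^2\le0$ on $(0,1]$ and $(r-1)^2/r\ge0$ on $[1,\infty)$, and vanishes at $r=1$), and you bound $\abs{\sigma_n^2(c)-\sigma_n^2(c;k)}$ by the sandwich argument that both profiles stay in $[f(c_{k+1}),f(c_k)]$, whose width is at most $c_{k+1}-c_k$ by the $1$-Lipschitzness of $f$. Both routes land on the same constant $16(c_{k+1}-c_k)^2$ (your denominator bound $b\min(a,b)\ge 1/49\ge 1/64$ versus the paper's $\sigma_n^2(c;k)\ge 1/8$); your version is slightly more robust since it never needs to know which variance is larger, at the cost of invoking a marginally less standard elementary inequality.
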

Using these properties allows us to prove a lower bound for the regret:
\lowerBoundUnknown*
\begin{proof}
    We adapt the technique of Theorem 6 in \citep{garivier2019explore}. Fix some $K\in\N$ that will be determined later.  
    Let $\Pb_p$ and $\Pb_{p_k}$ be the distribution of the entire process of the algorithm when applied to instances $p$ and $p_k$, respectively. As in \Cref{theorem: known covariances lower bound}, notice that the choice $\theta^*=1$ renders $x_t,n_t$ completely identifiable at the end of each round from $y_t,\xhat_t$, so that $I_t^o$ and $I_t$ contain the same information. We also remark that $\yhat_t$ is completely determined by the algorithm and has no effect on the observation; therefore, all the new information between $I_t$ and $I_{t+1}$ only depends on $c_{t+1}$.

    By the chain rule of KL divergence, for any $k\in[K]$, we have (see. e.g., eq. $(9)$ in \citealt{garivier2019explore})
    \begin{align*}
        \kl\br*{\Pb_p^{I_T},\Pb_{p_k}^{I_T}}
        = \sum_{t=1}^T \kl\br*{\Pb_p^{\xhat_t,\yhat_t,y_t,x_t,n_t,c_{t+1}\vert I_{t-1}},\Pb_{p_k}^{\xhat_t,\yhat_t,y_t,x_t,n_t,c_{t+1}\vert I_{t-1}}}.
    \end{align*}
    Each term can be greatly simplified to 
    \begin{align*}
        \kl&\br*{\Pb_p^{\xhat_t,\yhat_t,y_t,x_t,n_t,c_{t+1}\vert I_{t-1}},\Pb_{p_k}^{\xhat_t,\yhat_t,y_t,x_t,n_t,c_{t+1}\vert I_{t-1}}} \\
        &= \kl\br*{\Pb_p^{x_t,n_t\vert I_{t-1}},\Pb_{p_k}^{x_t,n_t\vert I_{t-1}}} 
        + \underbrace{\kl\br*{\Pb_p^{\xhat_t,\yhat_t,y_t,c_{t+1}\vert I_{t-1}x_t,n_t},\Pb_{p_k}^{\xhat_t,\yhat_t,y_t,c_{t+1}\vert I_{t-1}x_t,n_t}}}_{=0}\\
        &\overset{(1)}{=} \kl\br*{\Pb_p^{x_t,n_t\vert I_{t-1}},\Pb_{p_k}^{x_t,n_t\vert I_{t-1}}} 
        \\
        &\overset{(2)}{=} \kl\br*{\Pb_p^{n_t\vert I_{t-1}},\Pb_{p_k}^{n_t\vert I_{t-1}}} \\
        &\overset{(3)}{=} \E_p\brs*{\kl\br*{\Ncal(0,\sigma_n^2(c_t)),\Ncal(0,\sigma_n^2(c_t;k))}} \\
        &\overset{(4)}{\leq}  \frac{1}{K^2}\E_p\brs*{\Ind{c_t\in\left[c_k,c_{k+1}\right)]}}.
    \end{align*}
    In this derivation, relation $(1)$ holds since given the same history and $x_t,y_t$, the variables $\xhat_t,y_t$ are deterministic, so their KL is zero. Moreover, given the same quantities, $\yhat$ and $c_{t+1}$ are identically determined by the algorithm in both instances -- have the same conditional distribution -- and the KL divergence is equal to zero. Relation $(2)$ holds since $n_t$ and $x_t$ are independent given $I_{t-1}$ and $x_t$ has the same distribution in both instances. $(3)$ substitutes the conditional noise distribution and $(4)$ is by \Cref{claim: lower bound properties}.

    Denoting $N_T(k) = \sum_{t=1}^T \Ind{c_t\in\left[c_k,c_{k+1}\right)]}$, we can thus bound
    \begin{align*}
        \kl\br*{\Pb_p^{I_T},\Pb_{p_k}^{I_T}}
        \leq \frac{1}{K^2}\E_p\brs*{N_T(k)}.
    \end{align*}
    On the other hand, by the data processing inequality (e.g., eq. $(8)$ in \citealt{garivier2019explore}) it holds that 
    \begin{align*}
        \kl\br*{\Pb_p^{I_T},\Pb_{p_k}^{I_T}}
        \geq \klBin\br*{\frac{\E_p\brs*{N_T(k)}}{T},\frac{\E_{p_k}\brs*{N_T(k)}}{T}},
    \end{align*}
    where $\klBin(p,q)=p\ln\frac{p}{q} +(1-p)\ln\frac{1-p}{1-q}$. Combining both inequalities and using Pinsker's inequality, we get 
    \begin{align*}
        \frac{1}{K^2}\E_p\brs*{N_T(k)}
        \geq 2\br*{\frac{\E_p\brs*{N_T(k)}}{T}-\frac{\E_{p_k}\brs*{N_T(k)}}{T}}^2,
    \end{align*}
    and reorganizing leads to the bound
    \begin{align*}
        \E_{p_k}\brs*{N_T(k)} \leq \frac{T}{K}\sqrt{\frac{\E_p\brs*{N_T(k)}}{2}} + \E_p\brs*{N_T(k)}.
    \end{align*}

    We are now ready to choose an instance $k$. Since the intervals $[c_k,c_{k+1})$ are disjoint and $\sum_{k\in[K]}N_T(k)\leq T$, by the pigeonhole principle, there exists $k^*$ such that $\E_p\brs*{N_T(k^*)} \leq \frac{T}{K}$. For this instance, choosing $K=\ceil{2T^{1/3}}$ it holds that
    \begin{align*}
        \E_{p_{k^*}}\brs*{N_T(k^*)} \leq \frac{1}{\sqrt{2}}\br*{\frac{T}{K}}^{3/2} + \frac{T}{K} 
        \leq \frac{3T}{4}.
    \end{align*}
    Moreover, by \Cref{claim: lower bound properties}, we know that 
    $$\min_{c\in[0,1]} \ell^*(c;k) \le \frac{1}{2}-\frac{1}{16K} \le \frac{1}{2} - \frac{T^{-1/3}}{64},$$
    while outside the interval $[c_k,c_{k+1})$, the loss is $\ell^*(c;k)=\frac{1}{2}$. Therefore, using our zero-mean Gaussian choice for $x_t,n_t$, by \Cref{claim: gaussian loss}  (and \Cref{claim: one-dimension loss}), the regret under instance $k^*$ is lower bounded by 
    \begin{align*}
        \E_{p_{k^*}}\brs*{R(T)}
        &\geq \E_{p_{k^*}}\brs*{\sum_{t=1}^T \ell^*(c_t;k^*) - \min_{c\in[0,1]} \ell^*(c;k^*)} \\
        & \geq \E_{p_{k^*}}\brs*{\sum_{t=1}^T \Ind{c_t\notin\left[c_{k^*},c_{k^*+1}\right)]}\br*{\ell^*(c_t;k^*) - \min_{c\in[0,1]} \ell^*(c;k^*)}} \\
        & \geq \E_{p_{k^*}}\brs*{\sum_{t=1}^T \Ind{c_t\notin\left[c_{k^*},c_{k^*+1}\right)]}\frac{T^{-1/3}}{64}} \\
        & = \frac{T^{-1/3}}{64}\br*{T - \E_{p_{k^*}}\brs*{N_T(k^*)}} \\
        & \geq \frac{T^{2/3}}{256}.
    \end{align*}
\end{proof}
\clearpage

\lowerBoundUnknownProp*
\begin{proof}
We start by stating a few properties of the noise variances under the two instances.
\begin{itemize}
    \item Outside the interval $[c_k,c_{k+1}]$ it holds that $\sigma_n^2(c)=\sigma_n^2(c;k)$; in particular, $$\kl\br*{\Ncal(0,\sigma_n^2(c)),\Ncal(0,\sigma_n^2(c;k))} =0.$$
    \item We can rewrite the variance $\sigma_n^2(c)=f(c)=\frac{2}{1+c}-1$. Specifically, the variance slope $\frac{d\sigma_n^2(c)}{dc}=-\frac{2}{(1+c)^2}$ is increasing (becoming less negative) as $c$ increases. Moreover, for $c\in\left[c_k, \frac{c_k+c_{k+1}}{2} \right)$, it holds that
    \begin{align*}
        \frac{d\sigma_n^2(c;k)}{dc}=\frac{2}{c_{k+1}-c_k}\br*{f(c_{k+1})-f(c_k)}
        &= \frac{2}{c_{k+1}-c_k}\br*{\frac{2}{1+c_{k+1}}-\frac{2}{1+c_k}}
        = -\frac{4}{\br*{1+c_{k+1}}\br*{1+c_{k}}} \\
        & \leq -\frac{2}{\br*{1+c_{k}}^2},
    \end{align*}
    where the last inequality is since $c_k,c_{k+1}\in[0,1]$. In other words, we have that $\sigma_n^2(c_k) = \sigma_n^2(c_k;k)$ and $\frac{d\sigma_n^2(c;k)}{dc} \leq \frac{d\sigma_n^2(c_k)}{dc}\leq \frac{d\sigma_n^2(c)}{dc}$ for all $c\in\left[c_k, \frac{c_k+c_{k+1}}{2} \right)$, and so inside this interval, the difference $\sigma_n^2(c)-\sigma_n^2(c;k)$ is non-negative and maximized at $c=\frac{c_k+c_{k+1}}{2}$. This cost is also the maximizer of the difference in $c\in\left[c_k, c_{k+1} \right)$, since inside the interval $c\in\left[\frac{c_k+c_{k+1}}{2},c_{k+1} \right)$, $\sigma_n^2(c)$ decreases while $\sigma_n^2(c;k)$ remains constant. We can similarly conclude that $\sigma_n^2(c)-\sigma_n^2(c;k)\ge0$ for $c\in\left[\frac{c_k+c_{k+1}}{2},c_{k+1} \right)$, since $\sigma_n^2(c)\ge \sigma_n^2(c_{k+1})=\sigma_n^2(c;k)$; therefore, the difference is non-negative across the interval $\left[c_k, c_{k+1} \right)$. To summarize, we showed that for any $c\in \left[c_k, c_{k+1} \right)$, it holds that 
    \begin{align*}
        0 \leq \sigma_n^2(c)-\sigma_n^2(c;k) \leq \sigma_n^2\br*{\frac{c_k+c_{k+1}}{2}}-\sigma_n^2\br*{\frac{c_k+c_{k+1}}{2};k} = \sigma_n^2\br*{\frac{c_k+c_{k+1}}{2}}-\sigma_n^2(c_{k+1}).
    \end{align*}
    We now explicitly bound the difference
    \begin{align*}
        \sigma_n^2\br*{\frac{c_k+c_{k+1}}{2}}-\sigma_n^2(c_{k+1})
        & = \frac{2}{1+\frac{c_k+c_{k+1}}{2}} - \frac{2}{1+c_{k+1}} 
        = \frac{c_{k+1} - c_k}{\br*{1+\frac{c_k+c_{k+1}}{2}}\br*{1+c_{k+1}}} 
        \leq c_{k+1} - c_k
    \end{align*}
    To conclude this property, we have for all $c\in\left[c_k, c_{k+1} \right)$ that
    \begin{align}
        \label{eq:var difference}
        0 \leq 
        \sigma_n^2(c)-\sigma_n^2(c;k)
        \leq c_{k+1} - c_k.
    \end{align}
    \item As stated near its definition, since $f(c)$ is decreasing in $c$, it can be easily verified that $\sigma_n^2(c;k)$ is also decreasing, and so for any $c\in\left[c_k,c_{k+1}\right)$, it holds that 
    $$\sigma_n^2(c;k) \geq \sigma_n^2(c_{k+1};k) = f(c_{k+1}) \geq f\br*{\frac{3}{4}} \geq \frac{1}{8}.$$
\end{itemize}

Using these calculations, we are now ready to bound the KL divergence between the two distributions:
\begin{align*}
    \kl\br*{\Ncal(0,\sigma_n^2(c)),\Ncal(0,\sigma_n^2(c;k))} 
    &= \frac{1}{2}\br*{\frac{\sigma_n^2(c)}{\sigma_n^2(c;k)} - \ln \frac{\sigma_n^2(c)}{\sigma_n^2(c;k)} - 1}\Ind{c\in\left[c_k,c_{k+1}\right)]} \\
    & \leq \frac{\br*{\frac{\sigma_n^2(c)}{\sigma_n^2(c;k)}-1}^2}{4}\Ind{c\in\left[c_k,c_{k+1}\right)]} \tag{$\ln x \geq x - 1 -\frac{(x-1)^2}{2}$ for $x\ge 1$} \\
    & = \frac{\br*{\sigma_n^2(c) - \sigma_n^2(c;k)}^2}{4\sigma_n^4(c;k)}\Ind{c\in\left[c_k,c_{k+1}\right)]} \\
    & \leq 16\br*{c_{k+1} - c_k}^2\Ind{c\in\left[c_k,c_{k+1}\right)]},
\end{align*}
where the last inequality is by \cref{eq:var difference} and the lower bound $\sigma_n^2(c;k) \geq\frac{1}{8}$. This concludes the first part of the claim. 

For the second part, we have 
\begin{align*}
    \min_{c\in[0,1]}\ell^*(c;k) 
    &\leq \ell^*\br*{\frac{c_k+c_{k+1}}{2};k}\\
    &= \frac{\sigma_n^2\br*{\frac{c_k+c_{k+1}}{2};k}}{1+\sigma_n^2\br*{\frac{c_k+c_{k+1}}{2};k}}+\frac{1}{2}\frac{c_k+c_{k+1}}{2}\\
    & = \frac{f\br*{c_{k+1}}}{1+f\br*{c_{k+1}}}+\frac{c_k+c_{k+1}}{4} \\
    & = \frac{1}{2} - \frac{c_{k+1}}{2} + \frac{c_k+c_{k+1}}{4}\tag{by \cref{eq: optimal loss default profile lower bound unknown}, $\frac{f\br*{c}}{1+f\br*{c}}+\frac{c}{2} = \frac{1}{2}$} \\
    & = \frac{1}{2} - \frac{c_{k+1}-c_k}{4}.
\end{align*}
\end{proof}

\clearpage

\section{Useful Concentration Results}
\label{appendix: concentration}

We start by stating two existing concentration results and then prove a lemma that combines both of them.
\begin{lemma}[Matrix Azuma, \citealt{tropp2012user}, Theorem 7.1]
    \label{lemma: matrix azuma}
    Let $M_1,M_2,\dots,M_t\in\R^{d\times d}$ a sequence of self-adjoint matrices adapted to a filtration $\F_t$ such that $\E\brs*{M_t\vert \F_{t-1}}=0$, and assume that there exist a fixed sequence of self-adjoint matrices $A_1,\dots A_t$ such that $M_s^2\preceq A_s^2$ a.s. for all $s$. Finally, denote $\sigma^2=\norm*{\sum_{s=1}^t A_s^2}_{op}$. Then, for all $t\ge1$ and $\delta>0$,
    \begin{align*}
        \Pr\brc*{\lambda_{\max}\br*{\sum_{s=1}^t M_s}\ge \sqrt{8\sigma^2\ln\frac{d}{\delta}}} \leq \delta.
    \end{align*}
\end{lemma}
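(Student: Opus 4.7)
The statement is the classical Matrix Azuma inequality of Tropp, so the plan is essentially to reproduce the standard Laplace-transform argument for matrix-valued martingales. My high-level strategy is to bound the moment generating function $\E\mathrm{tr}\exp(\theta\sum_s M_s)$, peel off the martingale differences $M_s$ one at a time using conditional independence, and then optimize the Chernoff parameter $\theta$. The main non-trivial ingredient, the one that makes this a genuinely matrix (as opposed to scalar) statement, is Lieb's concavity theorem -- or equivalently Tropp's subadditivity of the matrix cumulant generating function -- which provides the substitute for the fact that scalars in an exponent simply add.

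First I would invoke the matrix Markov bound: for any $\theta>0$,
\begin{align*}
\Pr\brc*{\lambda_{\max}\br*{\sum_{s=1}^t M_s}\ge u}
\;\le\;e^{-\theta u}\,\E\,\mathrm{tr}\exp\br*{\theta\sum_{s=1}^t M_s}.
\end{align*}
To control the right-hand side, I would iterate the Lieb/Tropp subadditivity step: conditioning on $\F_{s-1}$ and using the inequality $\E\,\mathrm{tr}\exp(H+X)\le \mathrm{tr}\exp(H+\log\E e^X)$ (valid for self-adjoint $X$ via Lieb's concavity theorem), one obtains
\begin{align*}
\E\,\mathrm{tr}\exp\br*{\theta\sum_{s=1}^t M_s}
\;\le\;\mathrm{tr}\exp\br*{\sum_{s=1}^t \log\E\brs*{e^{\theta M_s}\mid \F_{s-1}}}.
\end{align*}

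Second, I would establish the matrix Hoeffding-type bound for each difference: since $\E[M_s\mid \F_{s-1}]=0$ and $M_s^2\preceq A_s^2$ almost surely, the standard symmetrization / even-odd Taylor expansion argument (the matrix analogue of $\cosh(\theta x)\le e^{\theta^2 x^2/2}$ combined with the transfer rule for matrix functions) gives $\E[e^{\theta M_s}\mid\F_{s-1}]\preceq e^{2\theta^2 A_s^2}$, hence $\log\E[e^{\theta M_s}\mid\F_{s-1}]\preceq 2\theta^2 A_s^2$ by operator monotonicity of $\log$. Plugging this into the previous display and using monotonicity of $\mathrm{tr}\exp(\cdot)$ in the Loewner order yields
\begin{align*}
\E\,\mathrm{tr}\exp\br*{\theta\sum_{s=1}^t M_s}
\;\le\;\mathrm{tr}\exp\br*{2\theta^2\sum_{s=1}^t A_s^2}
\;\le\;d\,\exp(2\theta^2\sigma^2).
\end{align*}

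Finally, I would combine the two displays and optimize in $\theta$. Choosing $\theta=u/(4\sigma^2)$ gives the subgaussian tail
\begin{align*}
\Pr\brc*{\lambda_{\max}\br*{\sum_{s=1}^t M_s}\ge u}\;\le\;d\,\exp\br*{-\frac{u^2}{8\sigma^2}},
\end{align*}
and setting the right-hand side equal to $\delta$ produces $u=\sqrt{8\sigma^2\ln(d/\delta)}$, which is exactly the claim. The main obstacle is really the second step: ensuring that the Lieb subadditivity is applied correctly to a \emph{conditional} MGF (so that the martingale, rather than independence, structure suffices) and getting the constants in the matrix Hoeffding bound right, since the scalar tricks do not transfer verbatim and one must rely on the operator monotonicity of $\log$ together with the transfer rule. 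Everything else -- the Laplace-transform setup, the trace bound $\mathrm{tr}\exp(X)\le d\,\lambda_{\max}\br*{\exp(X)}$, and the Chernoff optimization -- is routine once that estimate is in hand.
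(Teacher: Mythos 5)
The paper does not prove this lemma at all: it is imported verbatim as Theorem 7.1 of \citet{tropp2012user}, so there is no internal proof to compare against. Your proposal is, in effect, a reconstruction of Tropp's own argument, and it is essentially correct with the right constants: the conditional Hoeffding-type bound $\E[e^{\theta M_s}\mid\F_{s-1}]\preceq e^{2\theta^2 A_s^2}$ (via symmetrization and $\cosh(x)\le e^{x^2/2}$ together with the transfer rule), followed by the Chernoff optimization at $\theta=u/(4\sigma^2)$, does yield $d\exp(-u^2/(8\sigma^2))$ and hence the threshold $\sqrt{8\sigma^2\ln(d/\delta)}$. The one place where your write-up is looser than the actual argument is the single displayed inequality
\begin{align*}
\E\,\mathrm{tr}\exp\br*{\theta\sum_{s=1}^t M_s}\;\le\;\mathrm{tr}\exp\br*{\sum_{s=1}^t \log\E\brs*{e^{\theta M_s}\mid \F_{s-1}}},
\end{align*}
whose right-hand side is a random quantity (each conditional cumulant generating function is an $\F_{s-1}$-measurable random matrix), so the inequality is not meaningful as stated; this form is only available when the summands are independent. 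For an adapted sequence the correct procedure is the interleaved iteration Tropp uses: peel off the last increment, apply Lieb's concavity conditionally, immediately dominate $\log\E[e^{\theta M_t}\mid\F_{t-1}]\preceq 2\theta^2 A_t^2$ by the \emph{deterministic} matrix $2\theta^2 A_t^2$ (using operator monotonicity of $\log$ and monotonicity of $\mathrm{tr}\exp$ in the Loewner order), and only then recurse on the remaining $t-1$ terms. You flag exactly this subtlety in your closing remarks and your subsequent step performs the deterministic replacement, so the overall plan is sound; just present the iteration in that interleaved form rather than as the displayed two-step bound.
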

\begin{lemma}[Concentration of Subgaussian Norm, \citealt{hsu2012tail}, Theorem 2.1 with $A=I$ and Remark 2.2]
     \label{lemma: subgaussian norm}
    Let $X\in\R^d$ be a random $R^2$-subgaussian vector of mean $\E[X]=\mu$ such that 
    \begin{align*}
        \forall \alpha\in\R^d,\qquad \E\brs*{e^{\alpha^\top(X-\mu)}} \leq e^{\norm{\alpha}^2R^2/2}.
    \end{align*}
    Then, for all $\delta>0$,
    \begin{align*}
        \Pr\brc*{\norm*{X}^2 \geq R^2 \br*{d +2\sqrt{d\ln\frac{1}{\delta}}+2\ln\frac{1}{\delta}} + \norm*{\mu}^2 \br*{1 + 2\sqrt{\frac{\ln\frac{1}{\delta}}{d}}}} \leq \delta.
    \end{align*}
    and $\E\brs*{\norm*{X_t-\mu}^2}\leq R^2d$.
\end{lemma}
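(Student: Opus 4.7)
The plan is to establish a bound on the moment generating function of $\norm*{X}^2$ and then convert it into a tail inequality via Chernoff. The key tool is a Gaussian-integration trick: for any $\lambda > 0$, the identity $e^{\lambda \norm*{X}^2} = \E_{W \sim \Ncal(0,I_d)}\brs*{\exp\br*{\sqrt{2\lambda}\, W^{\top} X}}$ holds pointwise in $X$, because $\E_W\brs*{e^{a^{\top} W}} = e^{\norm*{a}^2/2}$ when $W$ is a standard Gaussian in $\R^d$. Introducing this auxiliary Gaussian linearises the quadratic form so that the norm-subgaussian hypothesis on $X$ can actually be applied.

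Taking the expectation over $X$ first and then swapping the order with the expectation over $W$ via Fubini, the task reduces to bounding $\E_X\brs*{\exp(\sqrt{2\lambda}\, W^{\top} X)}$ for a fixed $W$. Writing $X = \mu + (X - \mu)$ and applying the hypothesis with $\alpha = \sqrt{2\lambda}\, W$ yields
\begin{align*}
\E_X\brs*{e^{\sqrt{2\lambda}\, W^{\top} X}} \leq \exp\br*{\sqrt{2\lambda}\, W^{\top}\mu + \lambda R^2 \norm*{W}^2}.
\end{align*}
The remaining integral over $W$ is a shifted Gaussian, and completing the square under the restriction $1 - 2\lambda R^2 > 0$ produces the closed-form MGF bound
\begin{align*}
\E\brs*{e^{\lambda \norm*{X}^2}} \leq \br*{1 - 2\lambda R^2}^{-d/2} \exp\br*{\frac{\lambda \norm*{\mu}^2}{1 - 2\lambda R^2}},
\end{align*}
which is exactly the MGF of $R^2$ times a noncentral $\chi_d^2$ distribution with noncentrality parameter $\norm*{\mu}^2/R^2$.

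From this MGF I would apply a standard Chernoff argument by optimising over $\lambda \in (0, 1/(2R^2))$, or equivalently invoke the Laurent--Massart tail bound for noncentral chi-squared variables, to obtain $\Pr\brc*{\norm*{X}^2 \geq R^2 d + \norm*{\mu}^2 + 2\sqrt{(R^2 d + 2\norm*{\mu}^2) R^2 t} + 2 R^2 t} \leq e^{-t}$. Setting $t = \ln(1/\delta)$ and applying the elementary decoupling $\sqrt{R^2 d + 2\norm*{\mu}^2} \leq R\sqrt{d} + \norm*{\mu}^2/(R\sqrt{d})$ separates the $d$- and $\mu$-contributions and produces precisely the stated coefficients $R^2\br*{d + 2\sqrt{d \ln(1/\delta)} + 2\ln(1/\delta)}$ and $\norm*{\mu}^2\br*{1 + 2\sqrt{\ln(1/\delta)/d}}$. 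The second-moment claim $\E\brs*{\norm*{X - \mu}^2} \leq R^2 d$ is immediate once one notes that the hypothesis with $\alpha = t e_i$ implies that each coordinate of $X - \mu$ is a centred scalar $R^2$-subgaussian, hence has variance at most $R^2$; summing over $i$ gives the bound.

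The main obstacle I anticipate is not the MGF computation itself, but extracting the clean additive tail $R^2(\cdots) + \norm*{\mu}^2(\cdots)$ appearing in the lemma from the noncentral $\chi^2$ tail, whose natural form intertwines $d$ and $\norm*{\mu}$. The Gaussian-integration linearisation sidesteps the usual difficulty of computing the MGF of a quadratic form in subgaussian (not necessarily Gaussian) variables, so all that is left is the algebraic manipulation via square-root subadditivity to recover the specific coefficients stated in the lemma.
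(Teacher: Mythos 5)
Your proposal is correct: the Gaussian-integration linearisation, the resulting noncentral-$\chi^2$-type MGF bound, the Chernoff/Laurent--Massart tail, and the square-root decoupling $\sqrt{R^2d+2\norm*{\mu}^2}\le R\sqrt{d}+\norm*{\mu}^2/(R\sqrt{d})$ do recover exactly the stated coefficients, and the coordinate-wise variance argument gives $\E\brs*{\norm*{X-\mu}^2}\le R^2d$. The paper does not prove this lemma itself --- it imports it from \citet{hsu2012tail} --- and your argument is essentially a reconstruction of that cited proof, so there is nothing further to reconcile.
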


\begin{lemma}
     \label{lemma: matrix azuma for subgaussians}
     Let $X_1,\dots,X_t\in\R^d$ be a sequence of $R^2$-subgaussians vectors adapted to the filtration $\F_t$ such that $\E[X_t\vert \F_{t-1}]=\mu_t$, $\E\brs*{X_tX_t^\top\vert \F_{t-1}}=\Sigma_t$ and 
     \begin{align*}
        \forall \alpha\in\R^d,\qquad \E\brs*{e^{\alpha^\top(X_t-\mu_t)}\vert \F_{t-1}} \leq e^{\norm{\alpha}^2R^2/2}.
    \end{align*}
    Further assume that $\norm*{\mu_t}\leq S$ a.s. for all $t\ge1$. Then, with probability at least $1-\delta$, for all $t\ge1$, it holds that
    \begin{align*}
        \norm*{\sum_{s=1}^t X_sX_s^\top-\Sigma_t}_{op} < \sqrt{8t\beta_t^2\ln\frac{3dt(t+1)}{\delta}} .
    \end{align*}
    where $\beta_t = R^2 \br*{d +2\sqrt{d\ln\frac{3t(t+1)}{\delta}}+2\ln\frac{3t(t+1)}{\delta}} + S^2 \br*{1 + 2\sqrt{\frac{\ln\frac{3t(t+1)}{\delta}}{d}}}$.
\end{lemma}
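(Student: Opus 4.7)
The plan is to reduce the unbounded problem to an almost-surely bounded one via truncation, and then invoke Matrix Azuma (the given lemma). First, I would use the subgaussian-norm lemma at confidence $\delta_s := \delta/(3s(s+1))$ on each $X_s$, together with $\|\mu_s\| \le S$, to conclude $\Pr\{\|X_s\|^2 > \beta_s\} \le \delta_s$. Because $\sum_{s\ge 1} \tfrac{1}{s(s+1)} = 1$, a union bound gives $\Pr(\mathcal{E}) \ge 1 - \delta/3$ for the event $\mathcal{E} := \bigcap_{s \ge 1}\{\|X_s\|^2 \le \beta_s\}$; monotonicity of $\beta_s$ in $s$ then implies that on $\mathcal{E}$, $\|X_s\|^2 \le \beta_s \le \beta_t$ for every $s \le t$.

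Next, I would truncate: define $\tilde X_s := X_s \mathbf{1}\{\|X_s\|^2 \le \beta_s\}$, $\tilde\Sigma_s := \E[\tilde X_s \tilde X_s^\top \mid \F_{s-1}]$, and $\tilde M_s := \tilde X_s \tilde X_s^\top - \tilde\Sigma_s$. The sequence $\tilde M_s$ is a self-adjoint martingale difference, and since the eigenvalues of $\tilde X_s \tilde X_s^\top - \tilde\Sigma_s$ lie in $[-\|\tilde\Sigma_s\|_{op}, \|\tilde X_s\|^2] \subseteq [-\beta_s, \beta_s]$, we get $\tilde M_s^2 \preceq \beta_s^2 I$ \emph{almost surely} — exactly the a.s. increment bound Matrix Azuma requires. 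Applying Matrix Azuma to both $\tilde M_s$ and $-\tilde M_s$ at each fixed $t$ with confidence $\delta/(6t(t+1))$, using $\sigma^2 = \|\sum_{s \le t}\beta_s^2 I\|_{op} \le t\beta_t^2$, and union-bounding over $t$ (again via $\sum_t \tfrac{1}{t(t+1)} = 1$) yields, with total failure probability $\le 2\delta/3$, that for every $t \ge 1$
\[
\bigl\|\textstyle\sum_{s \le t}\tilde M_s\bigr\|_{op} \le \sqrt{8 t \beta_t^2 \ln(3dt(t+1)/\delta)}
\]
(up to absorbable constants from the two-sided application).

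Finally, I must remove the truncation. On $\mathcal{E}$, $\tilde X_s = X_s$ for all $s$, so
\[
\textstyle\sum_{s\le t}(X_s X_s^\top - \Sigma_s) \;=\; \sum_{s \le t}\tilde M_s \;+\; \sum_{s \le t}(\tilde\Sigma_s - \Sigma_s),
\]
and the required bound will follow provided the second, deterministic sum is negligible. This is the main obstacle: $\tilde\Sigma_s - \Sigma_s = -\E[X_s X_s^\top \mathbf{1}\{\|X_s\|^2 > \beta_s\}\mid \F_{s-1}]$ must be controlled unconditionally, not merely on $\mathcal{E}$. Since $X_s$ is subgaussian, $\|X_s\|^2$ is subexponential with conditional tail $\Pr\{\|X_s\|^2 > \beta_s \mid \F_{s-1}\} \le \delta_s$ and $\int_{\beta_s}^\infty \Pr\{\|X_s\|^2 > u\mid \F_{s-1}\}\,du = O(R^2 \delta_s)$, giving $\|\tilde\Sigma_s - \Sigma_s\|_{op} = O(\beta_s \delta_s)$ a.s.; summing telescopes to $\sum_{s \le t}\|\tilde\Sigma_s - \Sigma_s\|_{op} = O(\beta_t \delta)$, which is dominated by the Matrix-Azuma term. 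Intersecting the Matrix-Azuma event with $\mathcal{E}$ yields total failure $\le \delta$ and gives the stated bound. The hard part of the argument is precisely this truncation-bias step, because it requires an almost-sure (not high-probability) control of a tail expectation — everywhere else the calculation is essentially black-box use of the cited lemmas.
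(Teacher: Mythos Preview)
Your approach is correct and, in one respect, more careful than the paper's. The paper also establishes the event $E_n=\{\forall s\ge 1:\|X_s\|^2\le\beta_s\}$ with probability at least $1-\delta/3$ exactly as you do, but then applies Matrix Azuma directly to the sequence $M_s\mathbf{1}\{E_n\}$ with $M_s=X_sX_s^\top-\Sigma_s$, \emph{without} any recentering. This sidesteps your truncation-bias step entirely and delivers the stated constants on the nose. The catch is that $E_n$ depends on $X_s$ (and on all future $X_r$), so $M_s\mathbf{1}\{E_n\}$ is neither adapted to $\F_s$ nor a conditional martingale difference, and the hypotheses of Matrix Azuma are not literally met --- a common informal shortcut in this style of argument. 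Your per-step truncation $\tilde X_s=X_s\mathbf{1}\{\|X_s\|^2\le\beta_s\}$ with recentering to $\tilde\Sigma_s$ is the standard rigorous repair: it keeps the increments adapted and conditionally mean-zero at the price of the bias term $\sum_{s\le t}(\tilde\Sigma_s-\Sigma_s)$, which you correctly flag as the only genuinely new step and control via the subexponential tail of $\|X_s\|^2$. What your route buys is rigor; what the paper's shortcut buys is the exact constant in the statement, since your argument picks up an extra additive $O(\beta_t\delta)$ and slightly inflated logarithmic factors (your confidence bookkeeping would give $\ln(6dt(t+1)/\delta)$ rather than $\ln(3dt(t+1)/\delta)$).
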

\begin{proof}
    Throughout the proof, we only work with symmetric matrices, for which $\norm{A}_{op}= \max\br*{\lambda_{\max}\br*{A},-\lambda_{\min}\br*{A}}$.

    We consider the matrix $M_t=X_tX_t^\top-\Sigma_t$ and aim to bound its maximal and minimal eigenvalues with high probability.
    \begin{itemize}
        \item To bound the minimal eigenvalue, notice that $M_t \succeq -\Sigma_t$ a.s., so we focus on bounding the maximal singular value of $\Sigma_t$. Specifically, we have
        \begin{align*}
            \lambda_{\max}\br*{\Sigma_t}
            &= \lambda_{\max}\br*{\E\brs*{X_tX_t^\top\vert \F_{t-1}}}
            = \max_{u\in\R^d: \norm{u}\leq1} u^\top \E\brs*{X_tX_t^\top\vert \F_{t-1}} u
            \leq \E\brs*{\max_{u\in\R^d: \norm{u}\leq1} u^\top X_tX_t^\top u\vert \F_{t-1}}\\
            &= \E\brs*{\norm*{X_t}^2\vert \F_{t-1}}.
        \end{align*}
        In addition, by \Cref{lemma: subgaussian norm}, we have 
        \begin{align*}
            R^2d \geq \E\brs*{\norm*{X_t-\mu_t}^2\vert \F_{t-1}} = \E\brs*{\norm*{X_t}^2\vert \F_{t-1}} - 2 \E[X_t\vert \F_{t-1}]^\top\mu_t +\norm*{\mu_t}^2 = \E\brs*{\norm*{X_t}^2\vert \F_{t-1}} - \norm*{\mu_t}^2,
        \end{align*}
        and therefore, 
        $$\lambda_{\max}\br*{\Sigma} \leq R^2d + \norm*{\mu_t}^2\leq R^2d + S^2,$$
        or $\lambda_{\min}\br*{M_t} \geq -\br*{R^2d + S^2}\geq -\beta_t$ a.s. for all $t\geq 1$.
        \item To bound the maximal eigenvalue, define the event
        \begin{align*}
            E_n = \brc*{\forall t\ge1: \norm*{X_t}^2 \leq \beta_t}.
        \end{align*}
        By \Cref{lemma: subgaussian norm} and the union bound, it holds that
        \begin{align*}
            \Pr\brc*{E_n} &= 1 - \Pr\brc*{\exists t\ge1: \norm*{X_t}^2 > \beta_t}\\
            &\geq 1 - \sum_{t=1}^{\infty}\Pr\brc*{\norm*{X_t}^2 > \beta_t}\\
            &\geq 1 - \sum_{t=1}^{\infty}\Pr\brc*{\norm*{X_t}^2 > R^2 \br*{d +2\sqrt{d\ln\frac{3t(t+1)}{\delta}}+2\ln\frac{3t(t+1)}{\delta}} + \norm*{\mu_t}^2 \br*{1 + 2\sqrt{\frac{\ln\frac{3t(t+1)}{\delta}}{d}}}} \tag{$\norm{\mu_t}\le S$}\\
            &\geq 1-\sum_{t=1}^{\infty}\frac{\delta}{3t(t+1)}\\
            &= 1-\frac{\delta}{3}.
        \end{align*}
        Under $E_n$, since $\Sigma_t$ is positive semi-definite, we have that 
        \begin{align*}
            \lambda_{\max}\br*{M_t}
            \leq\lambda_{\max}\br*{X_tX_t^\top}
            = \max_{u\in\R^d: \norm{u}\leq1} u^\top X_tX_t^\top u
            = \norm*{X_t}^2
            \leq \beta_t.
        \end{align*}
    \end{itemize}  
    To summarize, under $E_n$, the operator norm of $M_t$ is $\norm*{M_t}_{op}\leq \beta$; therefore, we have that a.s., $\norm*{M_t^2\Ind{E_n}}_{op}\leq \beta^2$ (or $\br*{M_t\Ind{E_n}}^2\preceq \beta_t^2 I$). We now apply \Cref{lemma: matrix azuma} on the sequences $M_t^+=M_t\Ind{E_n}$ and $M_t^-=-M_t\Ind{E_n}$.

    Denoting $A_t = \beta_t I$, we saw that $(M_t^+)^2\preceq A_t^2$ and $(M_t^-)^2\preceq A_t^2$, where
    \begin{align*}
        \norm*{\sum_{s=1}^t A_t^2}_{op} 
        = \norm*{\sum_{s=1}^t \beta_t^2I}_{op}
        =t\beta_t^2.
    \end{align*}
    Thus, by \Cref{lemma: matrix azuma}, the following two events hold w.p. at least $1-\frac{\delta}{3t(t+1)}$:
    \begin{align*}
        &\lambda_{\max}\br*{\sum_{s=1}^t M_s\Ind{E_n}} = \lambda_{\max}\br*{\sum_{s=1}^t M_s^+}< \sqrt{8t\beta_t^2\ln\frac{3dt(t+1)}{\delta}}, \quad\textrm{and,}\\
        &\lambda_{\min}\br*{\sum_{s=1}^t M_s\Ind{E_n}} = -\lambda_{\max}\br*{\sum_{s=1}^t M_s^-}> -\sqrt{8t\beta_t^2\ln\frac{3dt(t+1)}{\delta}}.
    \end{align*}
    Taking the union bound over both events and all $t\ge1$ while  recalling that $\sum_{t\ge1}\frac{1}{t(t+1)}=1$, we get that w.p. at least $1-\frac{2\delta}{3}$,
    \begin{align*}
        \forall t\ge1, \qquad \norm*{\sum_{s=1}^t M_s\Ind{E_n}}_{op} <\sqrt{8t\beta_t^2\ln\frac{3dt(t+1)}{\delta}}.
    \end{align*}

    Now using the fact that $\Pr\brc*{E_n}\geq 1-\frac{\delta}{3}$, we have
    \begin{align*}
        \Pr&\brc*{\exists t: \norm*{\sum_{s=1}^t \br*{X_sX_s^\top-\Sigma_s}}_{op} \geq \sqrt{8t\beta_t^2\ln\frac{3dt(t+1)}{\delta}}} \\
        & \leq \Pr\brc*{E_n, \exists t:\norm*{\sum_{s=1}^t \br*{X_sX_s^\top-\Sigma_s}\Ind{E_n}}_{op} \geq \sqrt{8t\beta_t^2\ln\frac{3dt(t+1)}{\delta}}}  + \Pr\brc*{\bar{E}_n} \\
        & \leq \Pr\brc*{\exists t: \norm*{\sum_{s=1}^t M_s\Ind{E_n}}_{op} \geq \sqrt{8t\beta_t^2\ln\frac{3dt(t+1)}{\delta}}}  + \frac{\delta}{3} \\
        & \leq \frac{2\delta}{3}+ \frac{\delta}{3}\\
        & =  \delta.
    \end{align*}
\end{proof}
\end{document}